\newtheorem{definition}{Definition}
\newtheorem{lemma}{Lemma}
\newtheorem{theorem}{Theorem}
\newtheorem{corollary}{Corollary}  
\newtheorem{remark}{Remark}
\newtheorem{notation}{Notational Remark}
\newtheorem{example}{Example}
\newcommand{\jlc}[1]{}
\newcommand{\pn}[1]{} 
\newcommand{\ipg}[1]{} 
\newcommand{\nonempty}[1]{\mathrm{nonempty}(#1)}
\newcommand{\tuple}[2]{{#1}\!\!-\!\!{\mathit{tuple}}_{{#2}}}
\newcommand{\comment}[1]{}
\newenvironment{program*}{\tt\obeyspaces\begin{bogustabbing}\vspace{-.125in}}{\vspace{-.125in}\end{bogustabbing}}
\newenvironment{program**}{\it\obeyspaces\begin{bogustabbing}\vspace{-.125in}}{\vspace{-.125in}\end{bogustabbing}}
\newcommand{\abit}[1]{\mbox{\hspace{#1em}}}
\newcommand{\support}[1]{{\mathit{support}_{#1}}}
\newcommand{\Support}[3]{{\mathit{support}_{\pair{#1,#2}}(#3)}}
\newcommand{\V}[1]{{\langle{}{#1}\rangle}}
\newcommand{\pair}[1]{{\langle{}{#1}\rangle}}
\newcommand{\VV}[1]{${\langle{}{#1}\rangle}$}
\newcommand{\ifthenelse}[3]{{{\rm{if}}\;{#1}\;{\rm{then}}\;{#2}\;{\rm{else}}\;{#3}\;{\rm{fi}}\,}}
\newcommand{\Int}{\mathbb{Z}}
\newcommand{\indices}[2]{#1[[#2]]}
\newcommand{\name}[2]{{\rm\bf{[{#2}]}}\label{#1}}
\newcommand{\wrt}{{{w.r.t.}}}
\newcommand{\coh}[1]{{\bf{coh}}\{{#1}\}}
\newcommand{\alt}{\;\;{\bf{|}}\;\;}
\newcommand{\sem}[2]{{[\![{#1}]\!]_{{#2}}}}
\newcommand{\select}{{{\mathit{select}}}}
\newcommand{\definedAs}{{\;\;\stackrel{{\mathit{def}}}{=}\;\;}}
\newcommand{\eg}{{\em{e.g.}}}
\newcommand{\ie}{{\em{i.e.}}}
\newcommand{\rng}[1]{ \{ 0\ldots{{#1}}-1 \} }
\newcommand{\Rng}[2]{ \{ {{#1}}\ldots{{#2}} \} }
\begin{document}
\begin{frontmatter}

\title{Generalized Support and Formal Development of Constraint Propagators}
\author[]{\fnms{James} \snm{Caldwell}}
\address{Department of Computer Science, University of Wyoming, 1000 E. University Ave., Laramie, WY 82071-3315, USA\\
E-mail: jlc@cs.uwyo.edu}
\author[]{\fnms{Ian P.} \snm{Gent}}
\address{School of Computer Science, Jack Cole building, University of St Andrews, St Andrews, Fife KY16 9SX, UK\\
E-mail: ian.gent@st-andrews.ac.uk}
\author[]{\fnms{Peter} \snm{Nightingale}}
\address{School of Computer Science, Jack Cole building, University of St Andrews, St Andrews, Fife KY16 9SX, UK\\
E-mail: pwn1@st-andrews.ac.uk}

\begin{abstract}
Constraint programming is a family of techniques for solving combinatorial problems, where
the problem is modelled as a set of decision variables (typically with finite domains) and
a set of constraints that express relations among the decision variables. 
One key concept in constraint programming is \textit{propagation}: reasoning 
on a constraint or set of constraints to derive new facts, typically to remove 
values from the domains of decision variables. Specialised propagation algorithms (propagators) exist
for many classes of constraints.

The concept of {\em support} is pervasive in the design of propagators. 
Traditionally, when a domain value ceases to have support, it may
be removed because it takes part in no solutions. Arc-consistency algorithms
such as AC2001 \cite{bessiere-regin-ac2001} make use of support in the form of
a single domain value. GAC algorithms such as GAC-Schema
\cite{bessiere-gac-schema} use a tuple of values to support each literal. We
generalize these notions of support in two ways. First, we allow a set of
tuples to act as support. Second, the supported object is generalized from a
set of literals (GAC-Schema) to an entire constraint or any part of it.

We design a methodology for developing correct propagators using generalized
support. A constraint is expressed as a family of support properties, which may
be proven correct against the formal semantics of the constraint.  Using
Curry-Howard isomorphism to interpret constructive proofs as programs, we show
how to derive correct propagators from the constructive proofs of the support
properties. The framework is carefully designed to allow efficient algorithms
to be produced. Derived algorithms may make use of {\em dynamic literal
triggers} or {\em watched literals} \cite{Gent_Jefferson_Miguel06} for
efficiency. Finally, two case studies of deriving efficient algorithms are
given.

\end{abstract}

\begin{keyword}
\end{keyword}

\end{frontmatter}

\section{Introduction}

In this paper we provide a formal development of the notion of support in constraint satisfaction.    This notion is ubiquitous and plays a vital role in the understanding, development, and implementation of constraint propagators, which in turn are the keystone of a successful constraint solver.    While we focus on 
a formal development in this paper, our purpose is not to describe formally what is currently seen in constraint satisfaction.   Instead, we generalize the notion of support so that it can be used in a wider variety of propagators.   The result is the first step in a twin programme of developing a formal understanding of constraint algorithms, while also developing notions such as generalized support which should lead to improved constraint algorithms in the future.  

The methodology presented here for formal development of propagators is based
on the proofs-as-programs and propositions-as-types interpretations of
constructive type theory \cite{Constable_naive,Girard}. 
Like the earlier development in \cite{caldwell_gent_underwood}, the approach 
presented here uses a constructive type theory as the formal framework for 
specifying and developing programs.  There, the proofs were mechanically 
checked in the Nuprl theorem prover \cite{Nuprl}, here the development is 
formal but proofs have not been mechanically checked.

\subsection{Overview of the Constraint Satisfaction Problem}

A constraint is simply a relation over a set of variables. Many different kinds
of information can be represented with constraints. The following are simple
examples: one variable is less than another; a set of variables must take
distinct values; task A must be scheduled before task B; two objects may not
occupy the same space. It is this flexibility which allows constraints to be
applied to many theoretical, industrial and mathematical problems.

The classical constraint satisfaction problem (CSP) has a finite set of 
variables, each with a finite domain, and a set of constraints over those 
variables. A solution to an instance of CSP is an assignment to each variable, 
such that all constraints are simultaneously satisfied --- that is, they are all 
true under the assignment. Solvers typically find one or all solutions, or prove 
there are no solutions. The decision problem (`does there exist a solution?') is 
NP-complete \cite{apt-constraint-programming}, therefore there is no known 
polynomial-time procedure to find a solution.

\subsection{Solving CSP}

Constraint programming includes a great variety of domain specific and general 
techniques for solving systems of constraints. Since CSP is NP-complete, most 
algorithms are based on a search which potentially explores an exponential 
number of nodes. The most common technique is to interleave splitting and 
propagation. Splitting is the basic operation of search, and propagation 
simplifies the CSP instance. Apt views the solution process as the repeated 
transformation of the CSP until a solution state is reached 
\cite{apt-constraint-programming}. In this view, 
both splitting and propagation are transformations, where propagation simplifies 
the CSP by removing domain values that cannot take part in any solution. A splitting 
operation transforms a CSP instance into two or more simpler CSP instances, and 
by recursive application of splitting any CSP can be solved. 

Systems such as Choco~\cite{chocosolver}, IBM ILOG CPLEX CP Optimizer~\cite{ilogsolver} and Minion~\cite{gent-minion-2006,Gent_Jefferson_Miguel06} 
implement highly optimized constraint solvers based on search and propagation,
and (depending on the formulation) are able to solve extremely large problem
instances quickly. 

Our focus in this paper is on propagation algorithms. A propagation algorithm
operates on a single constraint, simplifying the containing CSP instance by
removing values from variables in the scope of the constraint. Values which
cannot take part in any solution are removed. For example, a propagator for
$x\le y$ might remove all values of $x$ which are greater than the largest
value of $y$. Typically propagation algorithms are executed iteratively until
none can make any further simplifications.

\subsection{Proofs to propagators}

Researchers frequently invent new algorithms and (sometimes) give proofs of
correctness, of varying rigour. 
In this paper we 
provide a formal semantics of CSP. 
This allows us to formally characterize 
correctness of constraint propagators, and therefore aid the proof of correctness
of propagators.   
Following this, we lay the groundwork for automatic generation of correct
propagators. The method is to write a set of \textit{support properties}
which together characterize the constraint. Each property is inserted into
a schema, and a constructive proof of the schema is generated. This proof
is then translated into a correct-by-construction propagator.
This method is based on the concept of \textit{generalized support}, 
described in the next section.   
Finally, we give examples of this method by deriving propagators for
the \texttt{element}, \texttt{occurrenceleq} and \texttt{occurrencegeq} constraints. 

\subsection{Generalized support}\label{sub:intro-generalized-support}

Central to this work is the notion of support. This notion is used informally in many
places (for example, in the description of the algorithm GAC-Schema \cite{bessiere-gac-schema}) and more formally by Bessi\`ere 
\cite{BessiereHandbook}. We generalize the concept of support, and develop a formal 
framework to allow us to produce rigorous proofs of the correctness of 
propagators that exploit the generalized concept of support. 

Support is a natural concept in constraint programming. Constraint propagators
remove unsupported values from variable domains, thus simplifying a CSP instance.
Supported values cannot be removed, since they may be contained in a solution.
Thus a support is evidence that a value (or set of values) may be contained in a
solution. If no support exists, it is guaranteed that a value (or set of values) is not contained
in any solution.  

A \textit{support property} characterises the supports of a particular value
(or set of values) for a particular constraint. For example,
three support properties of an element constraint are given by Gent et al. 
\cite{Gent_Jefferson_Miguel06}. Each of these three properties is used to 
create a propagator, such that the three propagators together achieve 
generalized arc consistency. In this instance, writing down support properties
assisted in proving the propagators correct.   

We show that correct support properties can be used to create propagators that
are correct by construction. We describe a general ``propagation schema'', which is a description of
what should be proved when support is lost for a given support property.  
This captures how propagators work in practice.  They are ``triggered'' when it is noted that the current support is lost.
The propagator then seeks to re-establish support.  This might be possible on the current domains, or it may need to narrow domains 
(i.e. remove some values of some variables), or it may be that no new support is possible and the constraint is guaranteed to be 
false.    
The  propagation schema specialised for a given support property 
can be proven constructively. The proof contains sufficient information to be
translated into a correct propagator. We envisage two main uses for such
a propagator. For some constraints, it may be an efficient propagator that
can be used directly. Otherwise, the constructed propagator may be used as part of
an informal argument for the correctness of an efficient propagator.

\comment{
As an example of the process, consider the constraint $x\ne y$. A support for a 
value $a$ of $x$ is a tuple $\V{a,b}$ where $b\ne a$ and $b$ is in the domain of $y$. 
This condition
is stated as a support property $P_x$, and it is proven that the property is admissible,
and that it correctly represents the constraint (along with its mirror-image 
property $P_y$). $P_x$ is plugged into the propagation schema, which is
then proven constructively. This proof is translated into a propagator which
is able to prune variable $x$. (Say something about the structure of the 
propagator ... i.e. does it look like a normal watched-lit propagator at all?)
The same process is followed for variable $y$, and the two propagators together 
form a propagator for $x\ne y$. 
}

\comment{
\subsection{Jim's original introduction}

In this paper we describe a formal semantics for the Minion Constraint language
as a prelude to formally characterizing correctness of constraint propagators.
Systems like Minion implement highly optimized constraint solvers
and (depending on the formulation) are able to solve extremely large problem
instances quickly.

We present the standard mathematical description of the structure of constraint
satisfaction problems which serves as a semantics for CSP's. Of course Minion
and other solvers never actually explicitly build the mathematical structure
described here (it is infeasibly large for even simple problems) but they
represent this structure implicitly.  Our goal here is describe a semantics of
Minion's constraint language which is used to program/specify these structures
which is a way of making explicit the implicit representation.  It is not
unreasonable to think of our semantics as a translation from Minion into the
mathematical representation.  We never intend to actually use the
translation\footnote{Although we have implemented it as a Haskell program and
  have used it to solve some small CSPs} in Minion, but will use it to reason
about Minion in the mathematically cleaner realm of relations on integers.
}

\subsection{Related Work}

There are a number of items of related work with related or similar goals, however the
approach taken in each case is quite different to our approach. 
Apt and Monfroy~\cite{apt-monfroy-auto-99} generate propagation rules such as
$X=s \rightarrow y\ne a$, where $X$ is a vector of CSP variables, $s$ is a vector
of values within the initial domain of $X$, $y$ is a CSP variable and $a$ is a value 
in the initial domain of $y$. Rules correspond directly to propagation in a 
constraint solver (\textit{ie} when $X$ is assigned $s$, $a$ is removed from the domain of $y$).
A set of rules is generated for a given constraint by a search over the
(potentially very large) space of possible rules. In contrast, our approach is 
much broader in that it is not restricted to generating implication rules. Our
framework allows both the derivation of new propagators and 
proof of correctness of existing ones. 

Beldiceanu, Carlsson and Petit~\cite{beldiceanu-etal-deriving-04} describe constraints
using finite state automata extended with counters. For a constraint $C$, the automaton 
for $C$ can check whether any given assignment satisfies $C$. Beldiceanu, 
Carlsson and Petit give a method to translate an automaton into a set of 
short constraints (a decomposition) such that propagating them will propagate the original constraint
$C$, and there are (in some cases) guarantees of the strength of propagation. 
The approach has been subsequently refined, for example by linking overlapping prefixes and 
suffixes of constraints~\cite{beldiceanu-etal-linking-14}.
Their approach generates decompositions of a particular form,
whereas in this paper our focus is on deriving efficient propagators.  

Jefferson and Petrie~\cite{jeff-petrie-15} studied the properties of triggers, in particular
comparing static triggers with movable triggers on a number of constraint classes
and consistencies. They demonstrate that movable triggers can lead to much more 
efficient propagators. To do this they generalise the concept of support in a
similar way to us, however their work treats each propagator as a monolithic black box
whereas we are interested in constructing propagators and proving correctness and
other properties of them. 

\section{Definitions and Notation}

\subsection{The Standard Mathematical Account}

We start by giving the standard definition of a constraint satisfaction problem
(\eg~ see \cite{FreuderHandbook,BessiereHandbook}).  Formal definitions of the
notations used here are given below.
\begin{definition}[Constraint Satisfaction Problem]
\label{def:csp}
A {\em{Constraint Satisfaction Problem}} (CSP) is given by a triple
$\V{X,\sigma,C}$ where $X$ is a $k$-tuple of variables $X=\V{x_1, \cdots,x_k}$
and $\sigma$ is a {\em{signature}} (a function 
$\sigma:\:X \rightarrow 2^{\Int}$ mapping variables in $X$ to
 their corresponding domains, such that $\sigma(x_i)\subsetneq\Int$ is the
 finite domain of variable $x_i$.)  $C$ is a tuple of extensional constraints
 $C=\V{C_1,\cdots,C_m}$ where each $C_i$ is of the form \VV{Y,R_Y} where
 $Y\subseteq{}X$ is a tuple of variables called the {\em{schema}} or
 {\em{scope}} of the constraint $C_i$.  Also, $R_Y$ is a relation given by a
 subset of the Cartesian products of the domains of the variables in the scope
 $Y$ and is called the {\em{extension}} of $C_i$.
\end{definition}
\begin{definition}[Satisfying tuple]
We say a $Z$-tuple $\tau$ {\em{satisfies}} constraint $\V{Y,R_Y}$ if
$Y\subseteq{}Z$, and the projection $Y[\tau]$ is in $R_Y$ (\ie~if the
projection of the scope $Y$ from $\tau$ is in $R_Y$).
\end{definition}
\begin{definition}[Solution]
\label{def:solution}
A {\em{solution }} to a CSP $\,\V{X,\sigma,C}$ is a tuple $\tau$, with schema
$X$, such that $\tau$ satisfies every constraint in $C$.
\end{definition}

\subsection{Variable Naming Conventions, Ranges, and Literals}

We use lower case letters (possibly subscripted or primed) from near the end of
the Latin alphabet $\{w,x,y,z\}$ to denote variables.  We use Latin letters
$\{i,j,k\}$ to denote integer indexes, and use the Latin letters occurring
early in the alphabet $\{a,b,c,d\}$ (possibly subscripted) to denote arbitrary
integer values.

Ranges are defined as follows.
\[\Rng{b}{c} \definedAs \{a\in\Int\alt b\le{}a \wedge a\le{}c\}\]
We write $2^A$ to denote the powerset (set of all subsets) of $A$.
\comment{
\begin{definition}\name{def:literal}{literal}
A {\em{literal}} is a variable-value pair (\eg~$\V{x,5}$).
\end{definition}
}
A {\em{literal}} is a variable-value pair (\eg~$\V{x,5}$).

\subsection{Vectors}
\label{sec:vectors}
We use uppercase letters $W,X,Y,Z,...$ to denote vectors of variables. We use
the Greek letters $\{\tau,\tau',\tau_1,\tau_2\cdots\}$ to denote tuples of
integer values.

We write finite vectors as sequences of values enclosed in angled brackets,
(\eg~$\V{x,y,z}$).  The empty vector is written $\V{}$.  We take the operation
of prepending a single element to the left end of a vector as primitive and
denote this operation $x\cdot{}Y$.  We abuse this notation by writing
$X\cdot{}Y$ for the concatenation of vectors $X$ and $Y$.  We write $|Y|$ to
denote the length of vector $Y$.  Given a vector $Y$, we write $Y[i]$ to denote
the (zero-based) $i^{th}$ element of $Y$.  This operation is undefined if
$i\not\in\rng{|Y|}$.  

\comment{
\begin{definition}\name{def:membership}{membership in a vector}
\[z\in{}Y \definedAs \exists{}i:\rng{|Y|}.\,Y[i]=z\]
\end{definition}
} Membership in a vector is defined as follows. \[z\in{}Y\definedAs\exists{}i:\rng{|Y|}.\,Y[i]=z\]  We will sometimes need to collect the set of
indexes to an element in a vector.
\[\indices{Y}{z} \definedAs \{i\in\rng{|Y|} \alt Y[i] = z\}\]
\comment{
\begin{definition}\name{def:memindexes}{membership indexes}
\[ Y[z] \definedAs \{i\in\rng{|Y|} \alt Y[i] = z\} \]
\end{definition}
}
Thus, $\indices{\V{x,y,z,x}}{x} = \{0,3\}$.  Note that $\indices{Y}{z}\not=\emptyset$ iff $z\in{}Y$ and
also each index in $\indices{Y}{z}$ is a witness for $z\in{}Y$.

If $y\in{}Y$, we write $Y-y$ to denote the vector obtained from $Y$ by deleting
the leftmost occurrence of $y$ from $Y$. $Y-y = Y$ if $y\not\in{}Y$.  We write
$Z-Y$ for the vector obtained by removing leftmost occurrences of all
($y\in{}Y$) from $Z$.  Given a vector $Z$, we write $\{Z\}$ to denote the set
of values in $Z$ and given a set of variables $S$ we write $\V{S}$ to denote a
vector of the variables in $S$; the reader may assume the variable in $\V{S}$
occur in increasing lexicographic order.  Intersection and unions are defined
on vectors by taking them as sets: $X\cap{}Y \definedAs \V{\{X\}\cap\{Y\}}$;
and $X\cup{}Y \definedAs \V{\{X\}\cup\{Y\}}$.
We write $Y\subseteq{}X$ to mean $\{Y\}\subseteq\{X\}$, \ie~that every element
in $Y$ is in $X$ with no stipulations on relative lengths of $X$ or $Y$ or on
the order of their elements.


\subsection{Signatures}

A signature $\sigma$ is a function mapping variables in $X$ to their associated
domains.  Thus, signatures are functions $\sigma:\:X \rightarrow
2^{\Int}$ where in practice, the subset of integers mapped to is finite.
Where $\sigma$ and $\sigma'$ are signatures mapping variables in $X$ to their
finite integer domains:
\[\sigma'\sqsubseteq_X\sigma \definedAs \forall{}x\in{}X. \sigma'(x)\subseteq \sigma(x)\]
We write $\sigma'\sqsubset_X\sigma$ if $\sigma'\sqsubseteq_X\sigma$ and
$\exists{}x\in{}X:\sigma'(x)\subsetneq\sigma(x)$, \ie~if some domain of
$\sigma'$ is a proper subset of the corresponding domain of $\sigma$.  We drop
the schema subscript when the schema is clear from the context.  We state the
following without proof.
\begin{lemma}[Signature Inclusion Well-founded]
The relation $\sqsubset$ is well-founded if restricted to signatures with
finite domains.
\end{lemma}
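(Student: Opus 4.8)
The plan is to prove well-foundedness by exhibiting a \emph{measure} function into the natural numbers that strictly decreases along every $\sqsubset$-step. All signatures under consideration share the common variable tuple $X$ and map each variable to a finite subset of $\Int$. For such a signature $\sigma$ I would define
\[\mu(\sigma) \definedAs \sum_{x\in X} |\sigma(x)|.\]
Because $X$ is finite (it is the $k$-tuple of variables from the CSP) and each domain $\sigma(x)$ is a finite subset of $\Int$, this is a finite sum of natural numbers, so $\mu(\sigma)\in\Nat$ is well-defined.

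Next I would check that $\mu$ is strictly monotone with respect to $\sqsubset$, \ie~that $\sigma'\sqsubset\sigma$ implies $\mu(\sigma')<\mu(\sigma)$. Unfolding the definition, the first clause $\sigma'\sqsubseteq_X\sigma$ gives $\sigma'(x)\subseteq\sigma(x)$ for every $x\in X$, hence $|\sigma'(x)|\le|\sigma(x)|$ term by term. The second clause supplies a variable $x_0\in X$ with $\sigma'(x_0)\subsetneq\sigma(x_0)$, and since these domains are finite a proper subset has strictly smaller cardinality, so $|\sigma'(x_0)|<|\sigma(x_0)|$. Summing the termwise inequalities, with the $x_0$ term strict and all others non-increasing, yields $\mu(\sigma')<\mu(\sigma)$.

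Finally I would invoke the standard fact that a relation admitting a strictly decreasing measure into a well-founded order is itself well-founded: any infinite descending chain $\sigma_0\sqsupset\sigma_1\sqsupset\sigma_2\sqsupset\cdots$ would map under $\mu$ to an infinite strictly descending chain $\mu(\sigma_0)>\mu(\sigma_1)>\mu(\sigma_2)>\cdots$ in $(\Nat,<)$, contradicting well-foundedness of the naturals. Hence no such chain exists and $\sqsubset$ is well-founded on finite-domain signatures.

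The argument is essentially routine once the measure is fixed, so I do not expect a genuine obstacle; the only points requiring care are that $X$ is finite, so that $\mu$ lands in $\Nat$ rather than producing an infinite cardinal, and that the strict decrease occurs at a single coordinate while all remaining coordinates are merely non-increasing. It is precisely the combination of these two facts that forces the total to drop strictly, and both rest squarely on the finiteness hypotheses in the statement; dropping either the finiteness of $X$ or that of the individual domains would break the construction.
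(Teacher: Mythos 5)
The paper states this lemma explicitly \emph{without proof}, so there is no in-paper argument to compare against. Your measure argument is correct and is the standard way to establish the result: the map $\sigma\mapsto\sum_{x\in X}|\sigma(x)|$ lands in $\mathbb{N}$ because $X$ and each domain are finite, it strictly decreases along every $\sqsubset$-step because a proper subset of a finite set has strictly smaller cardinality, and a relation admitting a strictly decreasing $\mathbb{N}$-valued measure has no infinite descending chain. The two points you flag as needing care (finiteness of $X$ and of the individual domains) are exactly the hypotheses the lemma imposes, so the proof is complete as written.
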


\subsection{Relations}
In the description of a CSP given above, a constraint $\V{Y,R_Y}$ is a relation
where the schema $Y$ gives the variable names and $R_Y$ is the set of tuples
in the relation.

Given a signature $\sigma$ mapping variables in schema $Y$ to their domains, a
relation $\V{Y,R_Y}$ is {\em{well-formed}} with respect to $\sigma$ iff the
following conditions hold:
\renewcommand{\labelenumi}{\roman{enumi}.}
\begin{enumerate}
\item All tuples in $R_Y$ have length $|Y|$
\item The values in each column come from the specified domain for that column:
\end{enumerate}
\[\forall\tau:{}R_Y.\; \forall i:\rng{|Y|}.\, \tau[i]\in{}\sigma(Y[i])\]

Schemata are vectors of variable names with no restriction on how many times a
variable may occur.  Thus it is possible to have a wellformed relation whose
schema has common names for multiple columns.  
Given a signature $\sigma$ over a schema $X$, a tuple $\tau$ is called a
$X$-tuple if $\V{X,\{\tau\}}$ is well-formed \wrt~$\sigma$. In this case, we
write $\tuple{X}{\sigma}(\tau)$.  We write $\tuple{X}{\sigma}$ for the set of
tuples satisfying this condition.



\subsubsection{Tuple Coherency}

Conceptually, relations provide a representation for storing valuations
(assignments of values to variables) and so we must distinguish between tuples
which represent coherent valuations (even when their schemata may contain
duplicate variable names) and tuples that do not.  This motivates the following
definitions.

The wellformedness condition on relations requires values in columns labeled
by a variable come from the domain of that variable, but does not rule out
cases where a single tuple with multiple columns named by the same variable
have different values in those columns.

\begin{example}
Consider the relation 
\[\V{\V{x,x,y},\{\V{1,2,3},\V{1,1,3},\V{2,2,3}\}}\]  
The variable $x$ occurs twice in the schema and the first tuple in the schema
assigns different values to $x$, this tuple is not coherent.
\end{example}
An $X$-tuple $\tau$ is {\em{coherent \wrt~variable $z$}} iff the following
holds.
\[\coh{X,z}(\tau) \definedAs \forall{}i,j:\indices{X}{z}. \;\; \tau[i]=\tau[j]\]
We say a tuple is {\em{incoherent \wrt~$z$}} if it is not coherent.
Note that this definition is sensible whether $z\in{}X$ or not.
A simple consequence of the definition is that an $X$-tuple $\tau$ is incoherent \wrt~variable $z$ iff
\[\exists{}i,j:\indices{X}{z}.\;\; \tau[i]\not=\tau[j]\]
An $X$-tuple $\tau$ is {\em{coherent with schema $Y$}} iff it is coherent
\wrt~all variables $z\in{}Y$.
\[\coh{X,Y}(\tau) \definedAs \forall{}z\in{}Y.\;\;\coh{X,z}(\tau)\]
We say an $X$-tuple is \textit{incoherent with respect to schema $Y$} if it is
not coherent \wrt~$Y$.
Only coherent tuples count as solutions (Def.~\ref{def:solution}).

\begin{remark}
In many constraint solvers, incoherent tuples may arise during a computation, but they are
never counted among solutions.  For example, the Global Cardinality constraint 
\[\mathtt{GCC}(\V{x,x,y}, \V{1,2}, \V{(2\ldots 3),(1\ldots 2)})\]
(stating that value 1 occurs two or three times, and value 2 occurs once or twice among variables $\V{x,x,y}$) could 
generate the incoherent tuple $\V{1,2,1}$ internally when using R\`egin's 
algorithm \cite{regin-gcc-96}.\footnote{
R\`egin's algorithm \cite{regin-gcc-96} is polynomial-time and enforces GAC iff 
the schema contains no duplicate variables. 
With duplicate variables, enforcing GAC on GCC is NP-Hard \cite{tractability-globals-04}, therefore it is sensible to use R\`egin's algorithm in this case even though it will not enforce GAC.}
\comment{\footnote{This behaviour
has been observed in Eclipse 5.10 with \texttt{ic::alldifferent} \cite{eclipsesolver} and 
Minion 0.7.0 with \texttt{gacalldiff} \cite{gent-minion-2006}}.}
Generating incoherent tuples
affects both the internal state of a constraint propagator, and the number
of vertices in the search tree.

Strictly speaking, because incoherent tuples
do not count as solutions, the semantics could be specified simply disallowing
them.  However, this approach would rule out faithful finer grained
representations of the internal states of constraint solvers which do generate
incoherent tuples \eg~when searching for support.  Based on this, we have
decided to include them although this adds some complexity to the specification.

\end{remark}

\subsubsection{Selection}

{\em{Selection}} is an operation mapping relations to relations generating new
ones from old by filtering rows (tuples) based on predicates on the values in the tuple.

Given a relation $\V{Y,R_Y}$ and an index $i\in{}\rng{|Y|}$, and a value (say
$a$), {\em index selection} is defined as follows.
\[\select_{(i=a)}(R_Y) \definedAs \{\tau\in{}R_Y | \tau[i] = a\}  \]

The tuples selected from a relation by index selection are not guaranteed to be
coherent with respect to schema $Y$.

Given a relation $\V{Y,R_Y}$, a variable $x$, and a value $a$, {\em value selection} is defined as follows.
\[\select_{(x=a)}(R_Y) \definedAs\]
\[\qquad \qquad \{\tau\in{}R_Y \: | \: \forall i:\indices{Y}{x}.\;\; \tau[i] = a\}\]

Thus a tuple $\tau$ is included in a selection $\select_{(x=a)}R_Y$ if and only if
all columns of $\tau$ indexed by $x$ have value $a$, \ie~$\tau$ must be
coherent for $x$ and those columns must have value $a$.  

\begin{lemma}\name{lemma:wf-select}{Selection Wellformed}
For all well-formed relations $\V{Y,R_Y}$ and all $x$, and all
$a\in\Int$, the relation $\V{Y,\select_{(x=a)}R_Y}$ is well-formed.
\end{lemma}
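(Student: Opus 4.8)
The plan is to exploit the fact that value selection is a pure filtering operation: it discards tuples of $R_Y$ but neither modifies the surviving tuples nor alters the schema $Y$. The crucial observation is therefore the containment $\select_{(x=a)}R_Y \subseteq R_Y$, which is immediate from the definition, since the selected relation is carved out of $R_Y$ by imposing the membership predicate $\forall i:\indices{Y}{x}.\,\tau[i]=a$ on tuples already drawn from $R_Y$.

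With this containment in hand, I would discharge the two conditions in the definition of a well-formed relation by inheritance from the hypothesis. Fix the signature $\sigma$ against which $\V{Y,R_Y}$ is well-formed; I claim $\V{Y,\select_{(x=a)}R_Y}$ is well-formed \wrt~the same $\sigma$. Both conditions are universally quantified over the tuples of the relation, so the argument is uniform: for the length condition, any $\tau\in\select_{(x=a)}R_Y$ lies in $R_Y$ and hence has length $|Y|$; for the column-domain condition, the same $\tau$ satisfies $\forall i:\rng{|Y|}.\,\tau[i]\in\sigma(Y[i])$ precisely because $\tau\in R_Y$. Since the schema is the same $Y$ in both relations, the index range $\rng{|Y|}$ and the domains $\sigma(Y[i])$ appearing in these conditions are unchanged between the two.

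There is essentially no obstacle here: the lemma is a direct instance of the monotonicity of well-formedness under taking subsets on a fixed schema. The only point worth flagging is conceptual rather than technical --- well-formedness is defined entirely in terms of tuple membership together with the schema, and value selection preserves both (the schema exactly, tuple membership downward), so the property transfers for free. I would also note in passing that the surviving tuples are in fact additionally coherent \wrt~$x$, but this stronger guarantee is not required for well-formedness and plays no role in the proof.
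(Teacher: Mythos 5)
Your proof is correct: the paper states this lemma without giving any proof, and your argument --- that $\select_{(x=a)}R_Y\subseteq R_Y$ and that both well-formedness conditions are universally quantified over tuples on an unchanged schema, hence inherited downward --- is exactly the evident argument the authors intend. Nothing is missing.
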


Finally, we define {\em coherent selection} as follows. 
\[\select_Y(R_X) \definedAs \{\tau\in{}R_X \: |\: \coh{X,Y}(\tau)\} \]
Coherent selection selects the tuples which are coherent with respect to $Y$.



\subsubsection{Projection}
Projection is an operation for creating new relations from existing ones by
allowing for the deletion, reordering and duplication of columns.  We use a
generalized version here that allows duplicate names. This is because 
many constraint solvers (including Minion \cite{gent-minion-2006} for example) allow 
schemata to contain duplicate names.

\begin{lemma}\name{lemma:proj-map}{Projection maps exist}
 For all vectors $X$ and $Y$, if $Y\subseteq{}X$, then there exists a function
 from the indexes of $Y$ to the indexes of $X$ (say $f\in\rng{|Y|}
 \rightarrow\rng{|X|}$) such that
\[\forall{}i:\rng{|Y|}.\; Y[i] = X[f(i)]\]
\end{lemma}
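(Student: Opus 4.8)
The plan is to construct the map $f$ directly, column by column, exploiting the fact that the hypothesis $Y\subseteq{}X$ (which unfolds to the set inclusion $\{Y\}\subseteq\{X\}$) guarantees that every value occurring in $Y$ also occurs somewhere in $X$. For each index $i$ of $Y$ I will select an index of $X$ whose entry equals $Y[i]$, and collect these choices into $f$. Because the setting is constructive, I would phrase this as \emph{extracting} a witness rather than merely asserting existence, so that the resulting $f$ is the computational content of the proof.

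First I would fix an arbitrary $i\in\rng{|Y|}$. The index $i$ is itself a witness that $Y[i]\in{}Y$, so $Y[i]\in\{Y\}$. Applying the subset hypothesis $\{Y\}\subseteq\{X\}$ to this membership yields $Y[i]\in\{X\}$, i.e.\ $Y[i]\in{}X$ as vector membership, and hence $\indices{X}{Y[i]}\neq\emptyset$ (as noted, $\indices{X}{z}\neq\emptyset$ iff $z\in{}X$). I would then define $f(i)$ to be an element of $\indices{X}{Y[i]}$---for definiteness, the least such index. By the defining property of $\indices{X}{Y[i]}$ we have $X[f(i)]=Y[i]$ immediately. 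Since $i\in\rng{|Y|}$ was arbitrary, this defines a total function $f\in\rng{|Y|}\rightarrow\rng{|X|}$ satisfying $\forall i:\rng{|Y|}.\;Y[i]=X[f(i)]$. (When $|Y|=0$ the range $\rng{|Y|}$ is empty and $f$ is the empty function, so the statement holds vacuously.)

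The only point requiring care---the ``obstacle,'' such as it is---is the constructive extraction of the witness $f(i)$ from $\indices{X}{Y[i]}\neq\emptyset$. This is unproblematic here: equality on $\Int$ is decidable and $\rng{|X|}$ is finite, so a bounded linear search over the columns of $X$ computes the least matching index. Selecting the least index rather than an arbitrary one keeps $f$ deterministic and total, which is desirable because the extracted program is precisely the ``find the first column of $X$ carrying this variable'' operation underlying projection; no genuine nondeterminism or choice principle is invoked.
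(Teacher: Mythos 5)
Your proof is correct, and it rests on the same key fact as the paper's --- that $Y\subseteq X$ places every entry $Y[i]$ somewhere in $X$ --- but it assembles the witness by a genuinely different route. The paper's (source) proof proceeds by structural induction on $Y$: the base case is the empty vector, and in the step $Y=z\cdot{}Z$ it obtains a map $f$ for the tail $Z$ from the induction hypothesis, extracts an index $j$ with $X[j]=z$ from $z\in{}X$, and glues the two together into $g$ with $g(0)=j$ and $g(k)=f(k-1)$ for $k>0$, finishing by a case split on $i=0$ versus $i>0$. You instead define $f$ pointwise in one stroke, taking $f(i)$ to be the least element of $\indices{X}{Y[i]}$, which is nonempty exactly because $Y[i]\in{}X$. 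The extracted programs differ accordingly: the paper's is a recursion over the spine of $Y$ that consults an unspecified witness for each head element and carries an index shift, whereas yours is a uniform bounded search that is manifestly deterministic (always the first matching column) and needs no induction or case analysis. Your closing observation --- that decidable equality on $\Int$ and finiteness of $\rng{|X|}$ make the witness extraction constructive --- is precisely what also justifies the paper's step ``call this index $j$,'' so both arguments stand on the same foundation; yours is simply the more direct rendering of it.
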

\comment{
\begin{proof}
Choose an arbitrary vector $X$ and then do induction on the structure of $Y$.
\goodbreak\noindent{\bf{Base case}} If $Y=\V{}$ then, clearly $\V{}\subseteq{}X$ and any function in $\emptyset \rightarrow\emptyset$ (\ie~ all functions are in this type) works since 
$\forall{}i:\rng{0}.\; Y[i] = X[f(i)]$ is vacuously true.

{\bf{Induction Step}} Assume $Y=z\cdot{}Z$ for some variable $z$ and vector $Z$.  
The induction hypothesis is as follows:
\[
Z\subseteq{}X \Rightarrow
\exists{}f:\rng{|Z|}\rightarrow\rng{|X|}.\forall{}i:\rng{|Z|}.Z[i] =
X[f(i)] {\mbox{\hspace{.35in}}}(*)
\]
We assume $z\cdot{}Z\subseteq{}X$ and show
\[\exists{}f:\rng{|z\cdot{}Z|}\rightarrow\rng{|X|}.\forall{}i:\rng{|z\cdot{}Z|}.(z\cdot{}Z)[i] =
X[f(i)] {\mbox{\hspace{.35in}}}(**)
\]
Since $z\cdot{}Z\subseteq{}X$ we know $Z\subseteq{}X$ and so by (*) we assume
there is a function $f\in\rng{|Z|}\rightarrow\rng{|X|}$ such that
$\forall{}i:\rng{|Z|}.Z[i] = X[f(i)]$.  Now, since we assumed
$z\cdot{}Z\subseteq{}X$ we know $z\in{}X$ and so by Def.~\ref{def:membership}~
$\exists{}j:\rng{X}.X[j]=z$.  Call this index $j$ and assume $X[j]=z$.  To
discharge the existential in (**) use the witness $g$ where 
\[g(k) = \ifthenelse{k = 0}{j}{f(k-1)}\]
Clearly $g\in{}\rng{|z\cdot{}Z|}\rightarrow\rng{|X|}$.  We must show
\[\forall{}i:\rng{|z\cdot{}Z|}.(z\cdot{}Z)[i] = X[g(i)]\]  Choose arbitrary 
$i\in{}\rng{|z\cdot{}Z|}$.  There are two cases, either $i=0$ or $i>0$.  If $i=0$ then
\[(z\cdot{}Z)[0] = z =  X[j] = X[g(i)] \]
If $i>0$ then the following holds.
\[(z\cdot{}Z)[i] = Z[i-1] =  X[f(i-1)] = X[g(i)] \]
\end{proof}
}

Note that there is no restriction on the relative lengths of $X$ and $Y$, \eg~
it is possible for any of the following to hold: $|Y|< |X|$, $|Y| = |X|$ or
$|Y| > |X|$.  The projection maps are evidence witnessing claims of the form
$Y\subseteq{}X$.  Furthermore, because our model allows for duplicated columns,
there may be multiple projection maps witnessing an inclusion $Y\subseteq{}X$.

\begin{example}
Consider 
\[\begin{array}{ll}
Y = \V{x_4,x_2,x_2,x_1,x_3} & X=\V{x_1,x_2,x_3,x_4}
\end{array}\] 
then $Y\subseteq{}X$ is witnessed by the projection map:
\[\{\V{0, 3},\V{1,1},\V{2,1},\V{3,0},\V{4,2}\}  \]
Similarly, $X\subseteq{}Y$ and is witnessed by the following.
\[ \{\V{0, 3},\V{1, 1},\V{2, 4},\V{3, 0}\}\]
Also $\V{x_2}\subseteq{}Y$ is witnessed by two functions, \(\{\V{0,1}\}\) and \(\{\V{0,2}\}\).
\end{example}

\begin{lemma}\name{lemma:tuple-proj}{Tuple Projection}
Given $X$ and $Y$, if $Y\subseteq{}X$ is witnessed by $f$, for each $X$-tuple
$\tau$ there is a vector $Y_f(\tau):\rng{|Y|}\rightarrow\Int$ such that
\[\forall{}i:\rng{|Y|}.\: Y_f(\tau)[i] = \tau[f(i)]\]
\end{lemma}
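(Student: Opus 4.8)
The plan is to discharge this existence claim constructively by writing down the witness explicitly: the vector $Y_f(\tau)$ is simply the composite of $\tau$ with the projection map $f$, and the required defining equation then holds by construction. Unlike Lemma~\ref{lemma:proj-map}, no induction is needed, because the witnessing map $f$ is already supplied; the only real content is a well-typedness check, after which the stated property is immediate.

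First I would unpack the two hypotheses. Since $f$ witnesses $Y\subseteq{}X$, by definition $f\in\rng{|Y|}\rightarrow\rng{|X|}$, so $f(i)\in\rng{|X|}$ for every $i\in\rng{|Y|}$. Since $\tau$ is an $X$-tuple, the relation $\V{X,\{\tau\}}$ is well-formed \wrt~$\sigma$; in particular $\tau$ has length $|X|$, so $\tau[j]$ is defined for every $j\in\rng{|X|}$, and moreover $\tau[j]\in\sigma(X[j])\subseteq\Int$. Next I would define the witness to be the vector of length $|Y|$ given component-wise by
\[Y_f(\tau)[i] \definedAs \tau[f(i)] \qquad (i\in\rng{|Y|}).\]
To confirm $Y_f(\tau)$ has the claimed type $\rng{|Y|}\rightarrow\Int$, observe that its domain is $\rng{|Y|}$ by construction, and that for each such $i$ the first step gives $f(i)\in\rng{|X|}$, whence $\tau[f(i)]$ is both defined and an element of $\Int$; thus the codomain of $f$ and the domain of $\tau$ meet at $\rng{|X|}$ and the composition is well-formed. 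The required property $\forall{}i:\rng{|Y|}.\;Y_f(\tau)[i]=\tau[f(i)]$ then holds definitionally.

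I expect no genuine obstacle: the lemma is essentially a well-typedness check for a function composition, and its computational content is exactly $\tau\circ{}f$. The single point needing care is that the domains align --- that $f$ lands in $\rng{|X|}$, precisely the index set on which $\tau$ is defined --- which is just what the hypotheses guarantee. Although the statement does not demand it, I would note in passing that the witnessing equation $Y[i]=X[f(i)]$ for $f$ additionally gives $\tau[f(i)]\in\sigma(X[f(i)])=\sigma(Y[i])$, so that $Y_f(\tau)$ is in fact a $Y$-tuple; this sharper fact is the form later results are likely to consume, so it is worth recording here.
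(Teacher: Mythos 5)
Your proof is correct: the paper states this lemma without giving a proof, and your construction of the witness as the composition $\tau\circ f$, together with the check that $f$ lands in $\rng{|X|}$ where $\tau$ is defined, is exactly the evident intended argument. Your closing remark that $Y_f(\tau)$ is in fact a $Y$-tuple is precisely the content of the corollary that immediately follows this lemma in the paper, so it is indeed the form later results consume.
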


\begin{corollary}\name{lemma:tuple-proj-wf}{Tuple Projection Wellformed}
Given $X$ and $Y$, if $Y\subseteq{}X$ is witnessed by $f$, for each $X$-tuple
$\tau$, $Y_f(\tau)$ is a $Y$-tuple,
\ie~$|Y_f(\tau)| = |Y|$ and all values
in $Y_f(\tau)$ are in their domains.
\end{corollary}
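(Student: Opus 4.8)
The plan is to verify directly the two conditions that make $Y_f(\tau)$ a $Y$-tuple, reading them off the two immediately preceding results. Recall that $Y_f(\tau)$ being a $Y$-tuple means $\V{Y,\{Y_f(\tau)\}}$ is well-formed \wrt~$\sigma$, which unfolds into (i) the tuple has length $|Y|$, and (ii) for every $i:\rng{|Y|}$ we have $Y_f(\tau)[i]\in\sigma(Y[i])$. The length condition is immediate: Lemma~\ref{lemma:tuple-proj} delivers $Y_f(\tau)$ as a vector indexed over $\rng{|Y|}$, so $|Y_f(\tau)| = |Y|$ holds by construction, with nothing further to prove.

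For the domain condition I would fix an arbitrary $i:\rng{|Y|}$ and chain three facts. First, Lemma~\ref{lemma:tuple-proj} gives $Y_f(\tau)[i] = \tau[f(i)]$. Second, since $f$ witnesses $Y\subseteq{}X$ it has type $\rng{|Y|}\rightarrow\rng{|X|}$ by Lemma~\ref{lemma:proj-map}, so $f(i)\in\rng{|X|}$ is a legal index into the $X$-tuple $\tau$, and wellformedness of $\tau$ then yields $\tau[f(i)]\in\sigma(X[f(i)])$. Third, the defining property of a projection map gives $Y[i] = X[f(i)]$, and hence $\sigma(X[f(i)]) = \sigma(Y[i])$. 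Combining the three, $Y_f(\tau)[i] = \tau[f(i)] \in \sigma(X[f(i)]) = \sigma(Y[i])$, which is exactly condition (ii).

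I expect no genuine obstacle: the corollary is essentially a repackaging of Lemmas~\ref{lemma:proj-map} and~\ref{lemma:tuple-proj} together with the definition of wellformedness, and the proof of Lemma~\ref{lemma:tuple-proj} has already done the substantive work of exhibiting $Y_f(\tau)$. The only point meriting a moment's care is that the indexing $\tau[f(i)]$ is always defined, \ie~that the codomain $\rng{|X|}$ of $f$ coincides with the index set of $\tau$; this is guaranteed because $\tau$ is an $X$-tuple and so $|\tau| = |X|$, making $\rng{|X|}$ precisely its set of valid indices.
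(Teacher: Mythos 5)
Your proof is correct and is exactly the argument the paper leaves implicit by stating this as an unproved corollary of Lemma~\ref{lemma:tuple-proj}: the length condition is immediate from the construction of $Y_f(\tau)$ as a vector on $\rng{|Y|}$, and the domain condition follows from the chain $Y_f(\tau)[i]=\tau[f(i)]\in\sigma(X[f(i)])=\sigma(Y[i])$ via the projection-map property $Y[i]=X[f(i)]$. Nothing is missing.
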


Whenever $Y\subseteq{}X$, projection maps $f$ and $g$ witnessing this fact
behave the same when used to index into tuples coherent with $Y$.  This is
illustrated by the following example.

\begin{example}\label{example:projection-map}
Suppose $Y=\V{x,y}$ and $X=\V{x,x,w,y,w}$ then there are two projections maps
witnessing $Y\subseteq{}X$, $f=\{\V{0,0},\V{1,3}\}$ and
$g=\{\V{0,1},\V{1,3}\}$. Now, any length $|X|=5$ tuple coherent with $Y$ is
of the form $\tau=\V{a,a,b,c,d}$ where $a,b,c,d\in{}\Int$.  Thus, even though
$f(0)\not=g(0)$ the following equalities hold:
\[\tau[f(0)] = \tau[0] = a = \tau[1] = \tau[g(0)]\]
\end{example}

This observation is made precise by the following lemma.

\begin{lemma}\name{lemma:coherent-proj-unique}{Coherent Projection Unique}
For all $X$ and $Y$, and for all projection maps $f$ and $g$ witnessing
$Y\subseteq{}X$, for all $X$-tuples $\tau$ coherent with schema $Y$,
$Y_f(\tau) = Y_g(\tau)$.
\end{lemma}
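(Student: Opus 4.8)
The plan is to prove the equality of the two projected tuples $Y_f(\tau)$ and $Y_g(\tau)$ by reducing vector equality to pointwise equality of entries, and then discharging each entry using the coherence of $\tau$ with respect to $Y$. First I would note, via Tuple Projection Wellformed (Corollary~\ref{lemma:tuple-proj-wf}), that both $Y_f(\tau)$ and $Y_g(\tau)$ are $Y$-tuples, and so each has length $|Y|$. Since two vectors of equal length are equal exactly when they agree at every index, it suffices to show $Y_f(\tau)[i] = Y_g(\tau)[i]$ for every $i \in \rng{|Y|}$. By the defining property of projection from Lemma~\ref{lemma:tuple-proj}, this reduces to establishing $\tau[f(i)] = \tau[g(i)]$.

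The key step is then to fix an arbitrary $i \in \rng{|Y|}$ and set $z = Y[i]$. Because $f$ and $g$ both witness $Y \subseteq X$, we have $X[f(i)] = Y[i] = z$ and $X[g(i)] = Y[i] = z$, so both $f(i)$ and $g(i)$ lie in $\indices{X}{z}$. Moreover, since $Y[i] = z$ with $i \in \rng{|Y|}$, the definition of membership gives $z \in Y$, and hence coherence of $\tau$ with schema $Y$ yields $\coh{X,z}(\tau)$. Instantiating the coherence condition $\forall p,q : \indices{X}{z}.\; \tau[p] = \tau[q]$ at $p = f(i)$ and $q = g(i)$ then delivers $\tau[f(i)] = \tau[g(i)]$, as required. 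As $i$ was arbitrary, the two tuples agree pointwise and, having equal length, are equal.

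I do not expect a serious obstacle, as the argument is a direct unfolding of the definitions of projection, membership, and coherence. The one point to handle carefully is the application of coherence: one must confirm that the two indices $f(i)$ and $g(i)$ produced by the \emph{distinct} projection maps genuinely name the same variable $z$ in $X$ --- which is precisely what the witnessing equalities $Y[i] = X[f(i)]$ and $Y[i] = X[g(i)]$ guarantee --- so that both belong to $\indices{X}{z}$ and the coherence hypothesis is applicable. This is exactly the phenomenon illustrated in Example~\ref{example:projection-map}, where $f(0) \neq g(0)$ yet both indices select the same repeated value.
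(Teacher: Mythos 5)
Your proposal is correct and follows essentially the same route as the paper's own argument: reduce vector equality to pointwise equality at each index $i$, observe that $f(i)$ and $g(i)$ both index the variable $z = Y[i]$ in $X$, and invoke coherence of $\tau$ with respect to $z$ to conclude $\tau[f(i)] = \tau[g(i)]$. The only cosmetic difference is that the paper first case-splits on whether $f(i) = g(i)$, whereas you apply the coherence condition uniformly to the pair of indices in $\indices{X}{z}$, which is a slightly cleaner presentation of the same idea.
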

\comment{
\begin{proof}
To show this, we use Extensionality (Def.~\ref{fact:ext}).  Choose arbitrary
$i\in\rng{|Y|}$ and show $Z_f(\tau)[i] = Z_g(\tau)[i]$.  To show this, we must
show $\tau[f(i)] = \tau[g(i)]$. Since $f$ and $g$ are projection maps
witnessing $Y\subseteq{}X$, we know that both $f(i),g(i)\in\rng{|X|}$ so they
are in range to index $\tau$.  Either $f(i)=g(i)$ or not.  In the first case
the equality $\tau[f(i)]=\tau[g(i)]$ holds trivially.  Consider the case where
$f(i)\not=g(i)$. Since $f$ and $g$ witness $Y\subseteq{}X$ we know that
$X[f(i)] = X[g(i)]$ \ie~ they index a common variable (say $z$) in X. Also,
(since they are projection maps) $Y[i] = z$.  Now, since $\tau$ is coherent
with respect to schema $Y$ and therefore is coherent with respect to variable
$z$ in particular, $\tau[f(i)]=\tau[g(i)]$.
\end{proof}
}
\begin{notation}
\label{note:pmap}
Since projections $Z$ where $Z\subseteq{}X$ do not depend on the projection map
they are built from when the $X$-tuple $\tau$ is coherent with $Z$, we
will simply write $Z(\tau)$ in this case.
\end{notation}

\begin{lemma}\name{lemma:proj.con}{Projection Coherent}
For all $X$, $Y$ and $Z$, if $Y\subseteq{}X$ and if $\tau$ is an $X$-tuple
coherent with $Z$, then $Y[\tau]$ is a $Y$-tuple coherent with $Z$.
\end{lemma}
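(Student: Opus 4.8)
The plan is to fix a projection map $f:\rng{|Y|}\rightarrow\rng{|X|}$ witnessing $Y\subseteq{}X$ (such an $f$ exists by Lemma~\ref{lemma:proj-map}) and to write $Y[\tau]$ for the resulting projected tuple $Y_f(\tau)$. The fact that $Y[\tau]$ is a genuine $Y$-tuple --- correct length and values in their domains --- is already delivered by the Tuple Projection Wellformed corollary, so the entire content of this lemma is the coherence claim. Since coherence with $Z$ is precisely what we are proving, the choice of $f$ is immaterial and we need not appeal to the Coherent Projection Unique lemma; any witness $f$ will serve.

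Next I would unfold the goal $\coh{Y,Z}(Y_f(\tau))$. By definition this demands, for every variable $z\in{}Z$ and every pair of indices $i,j\in\indices{Y}{z}$, the equality $Y_f(\tau)[i] = Y_f(\tau)[j]$. So I fix an arbitrary $z\in{}Z$ and arbitrary $i,j\in\indices{Y}{z}$, and use the defining property of the projected tuple (Lemma~\ref{lemma:tuple-proj}) to rewrite each side as $Y_f(\tau)[i]=\tau[f(i)]$ and $Y_f(\tau)[j]=\tau[f(j)]$. The goal is thereby reduced to showing $\tau[f(i)]=\tau[f(j)]$.

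The key step --- and the only one requiring any thought --- is to observe that $f$ carries $\indices{Y}{z}$ into $\indices{X}{z}$. Indeed, $i\in\indices{Y}{z}$ means $Y[i]=z$, and because $f$ is a projection map we have $Y[i]=X[f(i)]$, whence $X[f(i)]=z$, \ie~$f(i)\in\indices{X}{z}$; the identical argument gives $f(j)\in\indices{X}{z}$. In other words, columns of $Y$ labelled by $z$ are pulled back along $f$ to columns of $X$ labelled by the same variable $z$, so the index-relabelling performed by a projection map commutes with the variable labels.

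Finally I would invoke the hypothesis that $\tau$ is coherent with $Z$. Since $z\in{}Z$ this yields $\coh{X,z}(\tau)$, namely $\tau[i']=\tau[j']$ for all $i',j'\in\indices{X}{z}$. Instantiating with $i'=f(i)$ and $j'=f(j)$ --- legitimate by the previous paragraph --- gives $\tau[f(i)]=\tau[f(j)]$, which is exactly the reduced goal. I anticipate no real obstacle here: the argument is a short chain of definitional unfoldings, and the one substantive insight is that coherence of $\tau$ for $z$ transfers directly to coherence of $Y_f(\tau)$ for $z$ once we know $f$ preserves the $z$-labelled index sets.
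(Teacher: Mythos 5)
Your proof is correct. The paper states this lemma without giving a proof, so there is nothing to compare against; your argument --- that a projection map $f$ carries $\indices{Y}{z}$ into $\indices{X}{z}$ because $Y[i]=X[f(i)]$, so coherence of $\tau$ at $z$ transfers to $Y_f(\tau)$ --- is exactly the reasoning pattern the paper uses for the neighbouring Coherent Projection Unique lemma, and your observation that the claim holds for an arbitrary witness $f$ (so the notation $Y[\tau]$ is unambiguous here even though $\tau$ need not be coherent with $Y$) is a worthwhile point the paper leaves implicit.
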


So far we have defined projection of a single tuple, potentially with repeated 
variables in the schema. We lift the notation tuple-wise to relations as given by the following
definition.

\begin{definition}\name{def:relation-proj}{Relation Projection}
Given $X$ and $Y$, and a wellformed relation $\V{X,R_X}$, if $Y\subseteq{}X$ is
witnessed by $f$, 
\[Y_f(\V{X,R_X}) =\]
\[\qquad \V{Y, \{\tau \in \Int^{|Y|} \alt \exists{}\tau'\in{}R_X.\;\; \tau = Y_f(\tau')\}}\]
\end{definition}

\begin{lemma}\name{lemma:relation-proj-wf}{Relation Projection WF}
For all well-formed relations $\V{X,R_X}$ and all $Y$, $Y\subseteq{}X$ having a
projection map $f$, the relation $Y_f(\V{Y,R_Y})$ is well-formed.
\end{lemma}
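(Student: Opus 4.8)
The plan is to reduce this relation-level statement to the tuple-level result already in hand, namely Corollary~\ref{lemma:tuple-proj-wf}. Unfolding Definition~\ref{def:relation-proj}, the projection $Y_f(\V{X,R_X})$ is the relation $\V{Y,R'}$ whose tuple set is $R' = \{\tau \in \Int^{|Y|} \alt \exists \tau' \in R_X.\; \tau = Y_f(\tau')\}$. To show this relation is well-formed \wrt~$\sigma$ I must verify the two well-formedness conditions: that every tuple in $R'$ has length $|Y|$, and that for every $\tau \in R'$ and every $i\in\rng{|Y|}$ the entry $\tau[i]$ lies in $\sigma(Y[i])$. (I note in passing that the statement as printed appears to carry a typo, writing $Y_f(\V{Y,R_Y})$ where Definition~\ref{def:relation-proj} makes clear the intended object is $Y_f(\V{X,R_X})$; the argument below proves the latter.)

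First I would fix an arbitrary $\tau \in R'$ and extract from its membership condition a witness $\tau' \in R_X$ with $\tau = Y_f(\tau')$. Since $\V{X,R_X}$ is well-formed and $\tau' \in R_X$, the singleton relation $\V{X,\{\tau'\}}$ is itself well-formed \wrt~$\sigma$, which is precisely the statement that $\tau'$ is an $X$-tuple, i.e.~$\tuple{X}{\sigma}(\tau')$. This is the only place the well-formedness hypothesis on $\V{X,R_X}$ is needed.

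With $\tau'$ an $X$-tuple and $Y\subseteq{}X$ witnessed by $f$, Corollary~\ref{lemma:tuple-proj-wf} applies directly: it tells us $Y_f(\tau')$ is a $Y$-tuple, so $|Y_f(\tau')| = |Y|$ and each entry $Y_f(\tau')[i]$ lies in $\sigma(Y[i])$. Because $\tau = Y_f(\tau')$, both well-formedness conditions transfer immediately to $\tau$, and since $\tau$ was arbitrary the relation $\V{Y,R'}$ is well-formed.

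I expect no genuine obstacle here: all the mathematical content has already been discharged at the tuple level in Lemma~\ref{lemma:tuple-proj} and Corollary~\ref{lemma:tuple-proj-wf}, and the present lemma is simply their tuple-wise lifting along Definition~\ref{def:relation-proj}. The only point requiring minor care is the bridge from ``$\tau'\in R_X$ in a well-formed relation'' to ``$\tau'$ is an $X$-tuple,'' which is immediate by applying the definition of well-formedness to the singleton relation $\V{X,\{\tau'\}}$.
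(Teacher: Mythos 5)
The paper states this lemma without giving a proof, so there is nothing to compare against directly; your argument is correct and is exactly the intended tuple-wise lifting: unfold Definition~\ref{def:relation-proj}, pass from $\tau'\in R_X$ in a well-formed relation to $\tuple{X}{\sigma}(\tau')$, and apply Corollary~\ref{lemma:tuple-proj-wf} to conclude $Y_f(\tau')$ is a $Y$-tuple. You are also right that the statement contains a typo and should read $Y_f(\V{X,R_X})$.
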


\subsubsection{Equivalence of Constraints}

Now that we have relation projection, we are able to define an equivalence of 
constraints which does not depend on the ordering (or the length) of schemata. 

\begin{definition}\name{def:schema-equivalence}{Schema Equivalence}
\[
X \equiv Y \definedAs  X \subseteq Y \wedge Y \subseteq X
\]
\end{definition}

Schema equivalence requires only that \(X\) and \(Y\) contain the same set of variables. 
The order of variables and the number of duplicates are not restricted.

\begin{definition}\name{def:constraint-equivalence}{Constraint Equivalence}
\[
\begin{array}{l}
\V{X, R_X} \equiv \V{Y, R_Y} \definedAs \\ 
\abit{1} X \equiv Y \wedge \\
\abit{1} Y \subseteq X \textrm{ is witnessed by projection map }f \wedge \\
\abit{1} Y_{f}( select_X(\V{X, R_X})) = \\
\abit{1} \abit{1} \select_Y(\V{Y, R_Y})
\end{array}
\]
\end{definition}

There are several steps to the constraint equivalence definition. First, it is required that the schemata are equivalent. 
Then we find a projection map \(f\) that will be used to reorder the schema \(X\) to match \(Y\).
Coherent selection is used to remove the incoherent tuples of both constraints. The schema \(X\) of the first
constraint is reordered to match \(Y\). 
Finally, the two constraints are equivalent if they have the same set of coherent tuples.

Incoherent tuples are removed \textit{before} reordering the schema \(X\), therefore
any projection map \(f\) will produce the same set of reordered tuples (as in Example~\ref{example:projection-map}). 

\comment{
\subsubsection{Joins}

Natural joins \cite{Maier} provide the means for consistently combining
extensions of constraints by making sure that the tuple values in columns
labeled by the same scope names are equal.

\begin{definition}\name{def:join}{Join}
Given wellformed relations $\V{X,R_{X}}$ and $\V{Y,R_{Y}}$, 
\[\V{X,R_{X}}\bowtie{}\V{Y,R_{y}} \definedAs \V{Z, \,\pi[\select_{X\cap{}Y}(R_y\times{}R_X)]} \]
$Z = Y\cdot(X-Y)$ and $\pi$ is any projection map witnessing $Z\subseteq{}Y\cdot{}X$.
\end{definition}

{\bf This def of join I find difficult to understand. Perhaps could add a note 
about what the select is doing, and what it means to apply the function pi to
the result of the select, although this has been defined.

The brackets on pi[...]
don't match def 19. 
}

\noindent{}Thus, column labels in $Y$ take priority in the ordering over those
in $X$ in the schema for their join.  It turns out to be slightly more
convenient when computing with the semantics to arrange the names with those in
$Y$ taking precedence over those in $X$ but the following shows that this
choice is insignificant.

Also, the result of a join is coherent with respect to the schema names $X$
and $Y$ have in common.

\begin{lemma}\name{lemma:join-schema-subset}{Join Schema Subset}

\[Y\cdot(X-Y) \subseteq Y\cdot{}X\]
\end{lemma}

\begin{lemma}\name{lemma:join-wf}{Join wellformed}
Given wellformed relations $\V{X,R_{X}}$ and $\V{Y,R_{Y}}$, their join,
$\V{X,R_{X}}\bowtie{}\V{Y,R_{Y}}$ is wellformed.
\end{lemma}

\begin{lemma}\name{lemma:join-coherent}{Join Coherent}
Given wellformed relations $\V{X,R_{X}}$ and $\V{Y,R_{Y}}$ (coherent or not),
their join, $\V{X,R_{X}}\bowtie{}\V{Y,R_{y}}$ is coherent with the schema
$X\cap{}Y$.
\end{lemma}
\begin{proof}
Since the join is defined as a projection of a tuples coherent with
$X\cap{}Y$, by Lemma~\ref{lemma:proj.con}, the result of the join is coherent
with $X\cap{}Y$ as well.
\end{proof}

Lemma: join is commutative under constraint equivalence.
}

\comment{
\section{A Language of Constraints}
\label{sec:Lang}

The standard mathematical framework described above is purely extensional in
that no language of constraints is mentioned.  In systems like Minion, there is
a syntax for describing constraints.  This language can be seen as a way of
specifying the mathematical constraints which take the form $\V{X, R_X}$.

We describe here a formal semantics for a constraint language similar to the input language of Minion 0.8.

\subsection{Syntax}

\begin{definition}\name{def:term}{Terms}
The terms are variables, integer variables.
\[\begin{array}{lcl}
{\mathit{Term}} & ::= & x \alt c 
\end{array}\]
where $x\in{}Var$, $c\in{}{\mathbb{Z}}$.  We use the notation $X$ to
denote a vector of terms.
\end{definition}

\begin{definition}\name{def:contraint-syntax}{Constraint Language}
Minion constraints are defined by the following grammar.
\end{definition}
\[\begin{array}{lcl}
{\cal{C}} & ::= & {\hspace{1.4em}} {\tt{And}} (C_1, C_2) \\
&& \alt {\tt{Element}}(X,y,z) \\
&& \alt {\tt{Table}}(R,X) \\
&& \alt {\tt{InEq}}(x,y,z) \\
&& \alt {\tt{SumLEq}}(X,y) \\
&& \alt {\tt{SumGEq}}(X,y) \\
&& \alt {\tt{ProdLEq}}(X,y) \\
&& \alt {\tt{ProdGEq}}(X,y) \\
&& \alt {\tt{Clause}}(X) \\
&& \alt {\tt{AllDiff}}(X) \\
&& \alt {\tt{True}}(X) \\
&& \alt {\tt{SumEq}}(X) \\
&& \alt {\tt{Plus}}(x,y,z) \\
&& \alt {\tt{ProdEq}}(X,y) \\
&& \alt {\tt{Prod}}(x,y,z) \\
&& \alt {\tt{Ordered}}(X) \\
&& \alt {\tt{Eq}}(x,y) \\

\end{array}
\]
where $C_1,C_2 \in{}{\cal{C}}$ are constraints, $x,y,z \in{}{Term}$ and
$X$ is a vector of {\it{Term}}s.

\subsection{Semantics}

We define the compositional semantics by giving a meaning function that maps a
syntactic element, together with a signature, to the corresponding extensional
constraint \VV{Y, R_{Y}}.

\[ \sem{\cdot}{\sigma}: {\cal{C}} \rightarrow \V{Y, R_Y} \]

First we define various atomic constraints. Following this, we define a number of
meta-constraints (i.e. constraints which contain other constraints).

\subsubsection{The {\tt{True}} Constraint}

The {\tt True} constraint is satisfied by any coherent tuple.

\[\begin{array}{l}
\sem{{\tt{True}}(X)}{\sigma} = \V{X,\select_X \tuple{X}{\sigma} } \\
\end{array}\]

\subsubsection{The {\tt{False}} Constraint}

The {\tt False} constraint is never satisfied, therefore it has no tuples in $R_X$.

\[\begin{array}{l}
\sem{{\tt{False}}(X)}{\sigma} = \V{X, {} } \\
\end{array}\]

\subsubsection{The {\tt{Element}} Constraint}

Informally, the meaning of the element constraint is the set of all wellformed
tuples (say $\tau = \V{v_1,\cdots{},v_m,i,j}$) with schema $\V{X,y,z}$ where
the length of $X$ is $m$ and value of $\tau[m+1]$ (labeled by variable $y$) is
an index into the first $m$-positions of $\tau$ and $\tau[m+2]=\tau[\tau[m+1]]$
\ie~ the result $z$ is the value indexed by $y$.

\[
\begin{array}{l}
\sem{{\tt{element}}(X,y,z)}{\sigma} = \V{\V{X,y,z},R}\\ 
\abit{.5}{\mathrm{where\ }} \\ 
\abit{.75}R = \{\tau{}\in \tuple{\V{X,y,z}}{\sigma} \alt\\
\abit{1.75} m = |X| \wedge \tau[m+1]\in\{1..m\} \wedge\tau{}[\tau{}[m+1]]=\tau{}[m+2] \}
\end{array}
\]

The element constraint turns out to be useful in relational specifications of
finite operators (by listing them in table form) and it has been used widely in
applications to groups and quasigroups \cite{Tom,Gent_Jefferson_Miguel06}.

\subsubsection{The {\tt{Table}} Constraint}

The table constraint the most general constraint. The parameters are a schema $X$ and a set of satisfying tuples of the constraint, $\textrm{Table}_{X} \subseteq \tuple{X}{\sigma}$. In this way, any constraint can theoretically be specified as a table constraint. Practically, its usefulness is bounded by memory and time to search the table. 

\[\begin{array}{l}
{\tt{table}}(X, \textrm{Table}_{X}) = \V{X,\select_X(\textrm{Table}_{X})}
\end{array}\]
Recall that $\select_X$ is coherent selection (Def.~\ref{def:select_con}) that
ensures that all columns with common labels in the schema $X$ have the same
values.

\subsubsection{The {\tt{SumLEq}} Constraint}

Informally, the meaning of the {\tt{SumLEq}} constraint is the set of 
wellformed tuples with schema $\V{X}$ where the sum of the values in the
tuple is less than or equal to a constant $c$. 

\[\begin{array}{l}
\sem{{\tt{SumLEq}}(X,c)}{\sigma} = \V{X, R}
\\ {\hspace{.125in}}{\rm{where\ }} \\ {\hspace{.25in}}R = \{
\tau{}\in \tuple{X}{\sigma} \alt \Sigma_{i=0}^{|X|-1} \tau[i] \le c 
\}
\end{array}\]

\subsubsection{The {\tt{SumGEq}} Constraint}

Similarly, the meaning of the {\tt{SumLEq}} constraint is the set of 
wellformed tuples with schema $\V{X}$ where the sum of the values in the
tuple is greater than or equal to a constant $c$. 

\[\begin{array}{l}
\sem{{\tt{SumLEq}}(X,c)}{\sigma} = \V{X, R_X}
\\ {\hspace{.125in}}{\rm{where\ }} \\ {\hspace{.25in}}R_X = \{
\tau{}\in \tuple{X}{\sigma} \alt \Sigma_{i=0}^{|X|-1} \tau[i] \ge c 
\}
\end{array}\]

\subsubsection{The {\tt{Clause}} Constraint}

A clause is a constraint over Boolean ($\{0,1\}$) variables. It states that at least one variable takes value 1. 

\[\begin{array}{l}
\sem{{\tt{Clause}}(X)}{\sigma} = \V{X,R_{X}}
\\ \textrm{where } R_X = \{ \tau{}\in \tuple{X}{\sigma} \alt \Sigma \tau \ge 1 \}
\end{array}\]

\subsubsection{The {\tt{AllDiff}} Constraint}

The {\tt{AllDiff}} constraint states that variables in $X$ take all distinct values. All values of tuples in $R_X$ are different.

\[\begin{array}{l}
\sem{{\tt{alldiff}}(X)}{\sigma} = \V{X,R_{X}} \\
{\rm{where\ }}\\
\;\;\;\;R_X = \select_X \{ \tau{}\in \tuple{X}{\sigma} \alt \forall i,j\in \mathbb{N}. i\ne j \Rightarrow \tau[i]\ne\tau[j] \}
\end{array}\]

\subsubsection{The {\tt{Prod}} Constraint}

\[\begin{array}{l}
\sem{{\tt{prod}}(x,y,z)}{\sigma} = \V{\langle x,y,z \rangle , R_{\V{x,y,z} }} \\
{\rm{where\ }}\\
\;\;\;\;R_\V{x,y,z} = \{ \tau{}\in \tuple{\V{x,y,z}}{\sigma} \alt x \times y =z \}
\end{array}\]

\subsubsection{The {\tt{Eq}} Constraint}

\subsubsection{The {\tt{And}} Constraint }

The And constraint is a meta-constraint which conjoins a set of constraints $\{C_1,\ldots,C_n\}$. It is straightforwardly defined using join. Join is not commutative, because the order of application affects the column order. int which conjoins a set of constraints $\{C_1,\ldots,C_n\}$. It is straightforwardly defined using join. Join is commutative under constraint equivalence \ref{}.

\[\sem{{\tt{And}}(C_1,\ldots,C_n)}{\sigma} = \sem{\tt{C_1}}{\sigma} \bowtie \ldots \bowtie \sem{{\tt{C_n}}}{\sigma} 
\]

\subsubsection{The {\tt{Or}} Constraint}

\[\begin{array}{l}
\sem{{\tt{Or}}(C_1,\ldots,C_n)}{\sigma} = \V{X, R_X}
\end{array}
\]

take the union of Rx's after extending all the tuples..

\subsection{Modifying constraint semantics with mappers}

Many constraint programming systems allow constraint semantics to be modified
by means of \textit{mappers} (also called \textit{views} \cite{SchulteandTack}). 
A mapper transforms a variable domain, such that
a constraint sees (and manipulates) the transformed domain. A mapper is an
injective function, typically of type $\mathbb{B}\rightarrow\mathbb{B}$ or
$\mathbb{Z}\rightarrow\mathbb{Z}$.

We will demonstrate that mappers fit well within our formal framework by
defining a mapper for Boolean negation, as follows. It transforms the meaning of
constraint $C(X)$ by negating variable $X[i]$. Further variables may be negated
by applying \texttt{negatevar} repeatedly.

\[\begin{array}{l}
\sem{{\tt{negatevar}}(C, X, i)}{\sigma} = \V{X , R_{X}} \\
{\rm{where\ }}\\
\;\;\;\;\sigma'[X[i]]=\mathbb{B} \wedge  \forall j\ne i. \sigma'[X[j]]=\sigma[X[j]] \\
\;\;\;\;\sem{C(X)}{\sigma'} = \V{X, R_{X}'} \\
\;\;\;\;R_X = \{ \tau{}\in \tuple{X}{\sigma} \alt \exists \tau' \in R_{X}'. (\forall j\ne i . \tau'[j]=\tau[j] ) \wedge \tau'[i]=1-\tau[i] \}
\end{array}\]

\newpage
}

\subsection{Syntactic Definition of Relations}

Constraints are rarely presented extensionally but are instead described in
some syntactic way.  We introduce the following notation to denote the map from
syntactic descriptions to their extensional meanings.
\begin{definition}[Semantics]
Given a syntactic description of a constraint (say ${\cal{C}}$) over schema $X$
and where $\sigma$ is a signature consistent with $X$, we will write
$\sem{{\cal{C}}}{\sigma}$ to denote its extension.
\end{definition}

So, if we have a constraint \(\texttt{Element}(X,y,z)\) where 
\(X\) is a vector of variables and \(y\) and \(z\) are variables, and \texttt{Element} has 
a defined meaning, we can write \(\sem{\texttt{Element}(X,y,z)}{\sigma}\) to 
obtain its relation within some signature \(\sigma\). 

\section{Propagation and Support}

Propagation is the process of narrowing the domains of variables so that
solutions are preserved. This effectively shrinks the search-space and is one
of the fundamental techniques used in constraint programming.  It has been
described (\cite[pp.~17]{FreuderHandbook}) as a process of {\em{inference}} to
distinguish it from {\em{search}}. Most work on propagation considers the constraints 
singly

\begin{definition}\name{def:gac}{Generalized Arc Consistency}
Given a constraint $\mathcal{C}$ with schema $X$ and a signature $\sigma$, we say
$\sigma'\sqsubseteq{}\sigma$ is {\em{Generalized Arc Consistent}} iff
\[\begin{array}{l}
\forall{}i\in{}\rng{|X|}.\; \forall{}a\in\sigma(X[i]).\\
\qquad a\in \sigma'(X[i]) \leftrightarrow \exists{}\tau\in{}\sem{{\cal{C}}}{\sigma}.\;\tau[i]=a\\
\end{array}
\]
If $\sigma'$ is Generalized Arc Consistent, we say it is {\em{GAC}}.
\end{definition}

\begin{corollary}\name{lemma:gac}{Generalized Arc Consistency}
Given a constraint ${\cal{C}}$ and a signature $\sigma$, $\sigma$
is GAC for ${\cal{C}}$ iff
\[\begin{array}{ll}
 & \forall{}\sigma'\sqsubset\sigma. \; \sem{{\cal{C}}}{\sigma'}\subset
\sem{{\cal{C}}}{\sigma} \\
\end{array}\]
{\em{i.e.}} if all signatures having strictly narrower domains provide strictly
fewer solutions for $\mathcal{C}$ than $\sigma$.
\end{corollary}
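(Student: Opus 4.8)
The plan is to read Definition~\ref{def:gac} at the fixpoint, that is, with $\sigma'=\sigma$. There the biconditional $a\in\sigma(X[i])\leftrightarrow\exists\tau\in\sem{\mathcal{C}}{\sigma}.\,\tau[i]=a$ has a left-hand side that is \emph{true} by the range of the enclosing quantifier, so ``$\sigma$ is GAC'' unfolds to the support statement $\forall i\in\rng{|X|}.\,\forall a\in\sigma(X[i]).\,\exists\tau\in\sem{\mathcal{C}}{\sigma}.\,\tau[i]=a$, i.e.\ every domain value occurs in some solution. I would then establish the corollary as a biconditional between this unfolded form and the right-hand side, using throughout the \emph{monotonicity} of the semantics in its signature argument: if $\sigma'\sqsubseteq\sigma$ then $\sem{\mathcal{C}}{\sigma'}\subseteq\sem{\mathcal{C}}{\sigma}$. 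This holds because $\sem{\mathcal{C}}{\cdot}$ selects the well-formed tuples meeting a signature-independent membership condition, and narrowing a signature can only shrink the set of well-formed tuples; I would record it as a short preliminary lemma, since it is the one ingredient not already stated in the excerpt.

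For the forward direction, assume $\sigma$ is GAC and take an arbitrary $\sigma'\sqsubset\sigma$. Monotonicity gives $\sem{\mathcal{C}}{\sigma'}\subseteq\sem{\mathcal{C}}{\sigma}$, so it remains only to exhibit one lost solution. By the definition of $\sqsubset$ there is a variable $x\in X$ and a value $a\in\sigma(x)\setminus\sigma'(x)$; choosing an index $i$ with $X[i]=x$, the GAC hypothesis supplies some $\tau\in\sem{\mathcal{C}}{\sigma}$ with $\tau[i]=a$. Since $a\notin\sigma'(X[i])$, this $\tau$ is not well-formed with respect to $\sigma'$, hence $\tau\notin\sem{\mathcal{C}}{\sigma'}$, and the inclusion is strict. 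Note that this direction needs no reasoning about duplicate variables, because support is phrased per index and GAC hands us a witness at exactly the index $i$.

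For the converse I would argue the contrapositive. If $\sigma$ is not GAC there are $i$ and $a\in\sigma(X[i])$ with no $\tau\in\sem{\mathcal{C}}{\sigma}$ satisfying $\tau[i]=a$; put $x=X[i]$ and let $\sigma'$ agree with $\sigma$ except that $\sigma'(x)=\sigma(x)\setminus\{a\}$, so $\sigma'\sqsubset\sigma$. The goal is $\sem{\mathcal{C}}{\sigma'}=\sem{\mathcal{C}}{\sigma}$, which contradicts strictness. The inclusion $\subseteq$ is again monotonicity; for $\supseteq$ I must check that deleting $a$ from $x$ drops no solution, i.e.\ that no $\tau\in\sem{\mathcal{C}}{\sigma}$ takes value $a$ at \emph{any} occurrence of $x$. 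This is where the main obstacle lies, and it is settled by coherence: if some solution $\tau$ had $\tau[i']=a$ at an occurrence $i'$ of $x$, then since $\tau$ is coherent with respect to $x$ (that is, $\coh{X,x}(\tau)$ holds) we would also get $\tau[i]=a$, contradicting the choice of $(i,a)$. Hence every tuple of $\sem{\mathcal{C}}{\sigma}$ stays well-formed for $\sigma'$ and still meets the same signature-independent condition, giving the reverse inclusion and the desired contradiction. I expect the bookkeeping linking a value removed at the \emph{variable} level to support lost at the \emph{index} level --- and thus the appeal to coherence of the tuples in $\sem{\mathcal{C}}{\sigma}$ --- to be the only delicate point; once monotonicity and coherence are in hand, both inclusions are immediate.
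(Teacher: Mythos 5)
Your proof is correct, and it is worth noting that the paper itself states this corollary without any proof at all, so there is no ``paper approach'' to compare against --- your write-up in effect supplies the missing argument. Your reading of Definition~\ref{def:gac} at $\sigma'=\sigma$ is the intended one, and both directions go through as you describe. More importantly, you correctly isolate the two facts the corollary silently relies on: (i)~monotonicity of the semantics in the signature, $\sigma'\sqsubseteq\sigma \Rightarrow \sem{{\cal C}}{\sigma'}\subseteq\sem{{\cal C}}{\sigma}$, which the paper never states but which holds for every concrete semantics it gives (all of the form $\{\tau\in\tuple{X}{\sigma}\mid\phi(\tau)\}$ with $\phi$ independent of $\sigma$); and (ii)~coherence of the tuples in $\sem{{\cal C}}{\sigma}$, needed in the converse so that deleting the unsupported value $a$ from the \emph{variable} $x=X[i]$ cannot kill a tuple via some other occurrence $j$ of $x$. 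Point~(ii) is the one place where the paper's framework leaves genuine slack: the authors deliberately permit incoherent tuples in extensions (see their remark on GCC), and if $\sem{{\cal C}}{\sigma}$ were allowed to contain a tuple with $\tau[j]=a$ but $\tau[i]\neq a$ at duplicate occurrences of $x$, your reverse inclusion would fail. Your appeal to $\coh{X,x}(\tau)$ is the right fix and is consistent with the paper's stipulation that only coherent tuples count as solutions, but it would be worth flagging explicitly that the corollary is being read under that convention.
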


Enforcing GAC is the strongest form of propagation that considers constraints singly and acts 
only on the variable domains. 
Other forms of consistency (such as bound consistency) lie between GAC and no change (i.e.\ \(\sigma'=\sigma\)). 


\comment{
Given a syntactic description of an arithmetic constraint $\mathcal{C}$ over
schema $X$, we use $\sem{\mathcal{C}}{\mathbb{R}}$ to denote the set of all 
satisfying tuples of $\mathcal{C}$ where the values in the tuples are drawn from
$\mathbb{R}$. 

\begin{definition}\name{def:boundsr}{Bounds($\mathcal{R}$)-consistency}
Given a constraint $\mathcal{C}$ with schema $X$ and a signature $\sigma$, we say
$\sigma'\sqsubseteq{}\sigma$ is {\em{Bounds($\mathcal{R}$)-Consistent}} iff
\[\forall{}i\in{}\rng{|X|}.\; \forall{}k\in\{\mathrm{min}(\sigma'(X[i])), \mathrm{max}(\sigma'(X[i]))\}.\;\exists{}\tau\in{}\sem{{\cal{C}}}{\mathbb{R}}.\;\tau[i]=k\]
\end{definition}
}

\subsection{Support}

The concept of support was introduced in Section 
\ref{sub:intro-generalized-support}. Support is {\em evidence} that a set of 
domain values (or a single value) are consistent for some definition of 
consistency (for example, GAC) for a particular constraint $C$. If a set of 
values have no support, then they cannot be part of any solution to $C$, and 
therefore can be eliminated from variable domains without losing any solutions 
to the CSP. The concept of support is central to the process of propagation. 

In \cite[pp.~37]{BessiereHandbook} Bessi\`ere gives a description of when a 
tuple
supports a literal. We use a more expressive model where support (or perhaps we
should call it {\em{evidence}}) is defined by sets of tuples.  In most cases,
supports will be singletons (\ie~they are simply represented by a set
containing a single tuple). However, some constraints require a set of tuples to 
express the condition for support.

\begin{example}\label{ex:alldiff}
Consider the constraint \[\mathcal{C}=\textrm{AllDifferent}(x_1,x_2,x_3)\] with the signature $\sigma$ : 
\(x_1 \in \{1,2\}\), \(x_2 \in \{1,2,3,4\}\), \(x_3\in \{1,2,3,4,5\}\). This 
signature is GAC. Given Bessi\`ere's description of support 
\cite[pp.~37]{BessiereHandbook} (as used by general-purpose GAC algorithms
such as GAC-Schema \cite{bessiere-gac-schema}), each literal in the signature 
would be supported by a tuple containing the literal. Hence every literal is 
contained in the support for $\mathcal{C}$. However, not all literals
are required; the following set
is sufficient: 
\(L=\{ \V{x_1,1}, \V{x_1, 2}, \V{x_2,2}, \V{x_2,4}, \V{x_3,2}, \V{x_3,3},\) \(\V{x_3,5} \}\) \cite[\S 5.2]{nightingale_all_diff}. While all literals
in $L$ remain valid, in some smaller signature $\sigma_1 \sqsubseteq \sigma$,
then the constraint remains GAC. This can be used to avoid calling the 
propagator, and therefore is important to capture in our definition of 
generalized support.
\end{example}

Extensional constraints (sets of tuples) are interpreted disjunctively, \ie~ as
long as the set is non-empty, a solution exists.  Similarly, support exists if
the support set is non-empty.  Our generalization of support is to model it as
a set of tuples interpreted conjunctively {\em{i.e.}} thay all must be valid
for support to exist.  Thus, a generalized support set is a disjunction of
conjunctions ($\exists\forall$); we say support exists if at least one support is present in the set and all the tuples in that support are valid w.r.t.~variable domains.

We use the following as a simple running example throughout this section.

\begin{example}\label{ex:running-example}
Consider the constraint $x+y+z\ge 2$ with initial signature $\sigma$ : $x,y,z \in \{ 0,1\}$. The signature is GAC, and the constraint is satisfied by three
tuples: \[\begin{array}{l}\sem{x+y+z\ge 2}{\sigma}=\\ \qquad\{\langle0,1,1\rangle, \langle1,0,1\rangle, 
\langle 1,1,0\rangle \}\\\end{array}\].
\end{example}

\subsubsection{Support Sets}

\begin{definition}\name{def:property}{Support property}
Given a schema $Y$ and signature $\sigma$ over $Y$, a {\em{support property}}
is a predicate 
\[P:{\mathit{signature}} \rightarrow 2^{\Int^{|Y|}} \rightarrow \mathbb{B}\] 
mapping signatures and sets of integer tuples of length $|Y|$ to a Boolean.  We
will sometimes write the parameter indicating which signature $P[\sigma]$
depends on as a subscript $P_{\sigma}$ or drop it entirely if the property does
not depend on a signature.
\end{definition}


\begin{definition}\name{def:support}{Support Set for a property $P$}
Given a schema $Y$ and a signature $\sigma$ over $Y$ and a property of sets of
$Y$-tuples, $P_{\sigma}$ we define the support set for $P$ to be the set:
\[\begin{array}{l}
\Support{Y}{\sigma}{P} \definedAs\vspace{.06125in}\\
\abit{1}\{S\subseteq{}\tuple{Y}{\sigma} | P_{\sigma}(S) \wedge \forall{}S'\subset{}S.\; \neg P_{\sigma}(S')\}
\end{array}\]
\end{definition}

Note that support sets are minimal \wrt~the property $P$ since they contain
no subset which also satisfies the property.  

Consider example \ref{ex:running-example}, the constraint $x+y+z\ge 2$. 
One support property is the following.
\[\begin{array}{l}P_{\sigma}(S) \definedAs \exists \tau \in S.\\
\qquad \sum \tau \ge 2 \wedge \tau[0]=\min(\sigma(x))\\ \end{array}\]
This property admits
sets of tuples of any size as long as one tuple satisfies the constraint, and
the value for $x$ in that tuple is the minimum value in $\sigma(x)$. 
This support property corresponds to a propagator that prunes the minimum value
of $x$ whenever there is no supporting tuple containing it. To enforce GAC,
two other properties would be required for $y$ and $z$.
The support set for $P_{\sigma}$ is 
$\Support{\langle x,y,z\rangle}{\sigma}{P}=\{\{\langle0,1,1\rangle\} \}$.

A collection of properties is supported if they all are.
\begin{definition}\name{def:collection-support}{Support for a collection of properties}
If ${\cal{P}} =\{P_1,\ldots,P_k\}$ is a collection of
properties sharing schema $Y$ and $\sigma$ is a signature over $Y$, we write
\[\begin{array}{l}
\Support{Y}{\sigma}{{\cal{P}}} \definedAs\\
\qquad \forall{}P\in{\cal{P}}.\: \Support{Y}{\sigma}{P}\ne \emptyset\\
\end{array}\]
\end{definition}

\subsubsection{Admissible Properties and Triggers}

Our language for properties is unrestrained and allows us to specify properties
that are not sensible for specifying propagators. Therefore an admissibility condition is required. We define {\em p-admissibility} as follows.


\begin{definition}\name{def:p-admissible}{P-Admissibility}
We say a property $P$ is {\em{p-admissible}} if it satisfies the following
condition.
\[
\begin{array}{l}
\forall{}\sigma. \:\forall{}\sigma'\sqsubseteq\sigma. \\
 \abit{1}\forall{}S\subseteq\tuple{Y}{\sigma}.\\
 \abit{2} (P_{\sigma}(S) \; \wedge \; S\subseteq\tuple{Y}{\sigma'}) \Rightarrow P_{\sigma'}(S)
\end{array}
\]
In this case, we write ${\mathit{p\!-\!admissible}}(P)$.
\end{definition}

\comment{ P-admissibility is a rather technical condition, and yet it seems
  reasonable that not all properties would be useful for specifying
  propagators.  } 
P-admissibility is a kind of stability condition on
properties that guarantees that if a $P_{\sigma}(S)$ holds and the domain is
narrowed to $\sigma'$, but no tuple is lost from $S$ because of the narrowing,
then $P_{\sigma'}(S)$ must also hold.  In the implementation of
dynamic-triggered propagators \cite{Gent_Jefferson_Miguel06}, it is implicitly 
assumed that support for these propagators satisfy this property.

Continuing example \ref{ex:running-example}, the support property
$P_{\sigma}(S) \definedAs \exists \tau \in S: \sum \tau \ge 2 \wedge 
\tau[0]=\min(\sigma(x))$ is p-admissible: $\sum \tau \ge 2$ does not depend
on $\sigma$, and $\tau[0]=\min(\sigma(x))$ can only be falsified under $\sigma'$ 
when the value $\min(\sigma(x))$ is not in $\sigma'(x)$. This means
$\tau$ is not in $\tuple{\langle x,y,z \rangle}{\sigma'}$ so the implication 
is trivially satisfied.
Suppose $S=\{\langle 0,1,1 \rangle\}$. The only way $P_{\sigma'}(S)$ can be false
is if $0 \notin \sigma'(x)$. In this case, $S$ contains a tuple that is not
valid in $\sigma'$ therefore the p-admissibility property is trivially true.

A constraint solver has a trigger mechanism which calls propagators
when necessary. Each propagator registers an interest in domain events by {\em
  placing triggers}.  For example, if a propagator placed a trigger on $\V{x,
  a}$, then the removal of value $a$ in $\sigma(x)$ would cause the propagator
to be called. (This is named a {\em literal trigger}  \cite{Gent_Jefferson_Miguel06}, or \textit{neq event}
\cite{schultestuckey-propengines-08}.)

In this paper, we focus on literal triggers which can be moved during search. 
We consider two different types of movable literal trigger: those which are 
restored as search backtracks (named {\em dynamic literal triggers}), and those 
which are not restored (named {\em watched literals} 
\cite{Gent_Jefferson_Miguel06}).

The definition of p-admissibility allows the use of dynamic literal triggers, 
among other types. 
Watched literals are preferable to dynamic literal triggers because there is no
need to restore them when backtracking, which saves space and time. However, it
is not always possible to apply watched literals. We define an additional
condition on properties named {\em backtrack stability}, which is sufficient to 
allow the use of watched literals.

\begin{definition}\name{def:backtrack-stable}{Backtrack Stability}
We say a property $P$ is {\em{backtrack stable}} if it satisfies the following
condition.
\[
\begin{array}{l}
 \forall{}S.\:\forall{}\sigma.\:\forall{}\sigma'\sqsubseteq\sigma.\\ 
 \abit{1} S \ne \emptyset \Rightarrow \\
 \abit{2} P_{\sigma'}(S) \Rightarrow P_{\sigma}(S)
\end{array}
\]
\end{definition}

Backtrack stability states that any non-empty support $S$ under $\sigma'$ must remain a support for all signatures $\sigma$ where $\sigma$ is larger than $\sigma'$. This guarantees that a non-empty support $S$ will remain valid as the search backtracks. The empty support indicates that the property is trivially satisfied; this support is not usually valid after backtracking, so it is excluded here. 

Continuing example \ref{ex:running-example}, the support property
$P_{\sigma}(S) \definedAs \exists \tau \in S: \sum \tau \ge 2 \wedge 
\tau[0]=\min(\sigma(x))$
is not backtrack stable because $\min(\sigma(x))$ may not be the same as 
$\min(\sigma'(x))$.

Backtrack stability is in fact too strong: it is not necessary for a support to
remain valid for {\em all} larger signatures, it is only necessary for it to remain valid at signatures that are reachable on backtracking. However it is sufficient for the purposes of this paper. 

Backtrack stability also depends on the form of properties. The element support properties presented in Section \ref{sec:element-support-properties} are not backtrack stable. However, they can be reformulated to be backtrack stable, by dividing them up as we show in Section \ref{sec:element-p-admiss}.

For some property $P_{\sigma}(S)$ the support $S$ is \textit{evidence} that the constraint corresponding to $P$ is consistent.  The intuition is that $S$ remains valid evidence until domains are narrowed to the extent that $S\not\subseteq \tuple{Y}{\sigma'}$ (where $\sigma'\sqsubseteq\sigma$). This is an efficiency measure: a constraint solver can disregard the constraint corresponding to $P$ until $S\not\subseteq \tuple{Y}{\sigma'}$.



For example, the property $P_{\sigma}(S) \definedAs \forall b \not\in
\sigma(j). \V{i,b}\in S$ is not p-admissible when $j\ne i$. 



\begin{definition}\name{True and False properties}{Properties {\it{True}} and {\it{False}}}
We define the constant properties {\it{True}} and {\it{False}} by lifting them
to functions of sets of tuples.
\[
\begin{array}{l}
True(S) = True\\
False(S) = False\\
\end{array}
\]
\end{definition}

\begin{lemma}\name{lemma:true-singleton}{True singleton}
For all $Y$ and for every signature $\sigma$ over $Y$,
\[\Support{Y}{\sigma}{True} = \{\emptyset \} \]
\end{lemma}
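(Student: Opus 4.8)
The plan is to unfold Definition~\ref{def:support} with $P$ instantiated to the constant property $True$, and then establish the set equality by mutual inclusion. Substituting $True$ into the defining comprehension gives
\[
\Support{Y}{\sigma}{True} = \{ S \subseteq \tuple{Y}{\sigma} \mid True(S) \wedge \forall S' \subset S.\; \neg True(S') \}.
\]
Since $True(S) = True$ for every $S$ by definition, the first conjunct is satisfied by every candidate set, so membership is governed entirely by the minimality clause $\forall S' \subset S.\; \neg True(S')$. Using $True(S') = True$ again, this clause simplifies to $\forall S' \subset S.\; False$, which holds exactly when $S$ has no proper subset.

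First I would verify $\emptyset \in \Support{Y}{\sigma}{True}$: the empty set is a subset of $\tuple{Y}{\sigma}$, and it has no proper subsets, so the quantifier $\forall S' \subset \emptyset$ ranges over nothing and the minimality clause is vacuously true. Conversely, I would show that no nonempty $S$ belongs to the support set: if $S \neq \emptyset$ then $\emptyset \subset S$ is a proper subset, and the minimality clause would require $\neg True(\emptyset)$, \ie~$False$, which fails. Hence every member of the support set is empty, giving $\Support{Y}{\sigma}{True} \subseteq \{\emptyset\}$, and together with the first inclusion the equality follows.

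The argument involves no real obstacle; it is essentially a direct unfolding. The only point meriting care is the vacuous-truth step for the empty set --- the minimality condition is really an emptiness-of-proper-subsets test in disguise, and one must be explicit that $\emptyset$ is the unique set passing it. This also quietly records the intended reading that a trivially satisfied property is witnessed by the empty support, matching the disjunction-of-conjunctions interpretation (where an empty conjunction of tuples is vacuously valid) set out before the definition.
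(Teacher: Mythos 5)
Your proof is correct: the paper states this lemma without proof, and your direct unfolding of Definition~\ref{def:support} --- observing that the minimality clause $\forall S'\subset S.\,\neg True(S')$ forces $S$ to have no proper subsets, which singles out $\emptyset$ --- is exactly the intended argument. Your reading of $\subset$ as proper inclusion is the right one (otherwise no set could ever satisfy the minimality clause), and the remark about the empty support witnessing a trivially satisfied property matches the paper's surrounding discussion.
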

Note, that it might be assumed that if any of the domains in $\sigma$ are
empty, then there should be no support, even for the {\it{True}} property.
Checking for emptiness is not a function of support, but is done at a higher
level.

\begin{lemma}\name{lemma:false-empty}{False Empty}
For all $Y$ and for every signature $\sigma$ over $Y$,
\[\Support{Y}{\sigma}{False} = \emptyset \]
\end{lemma}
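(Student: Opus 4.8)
The plan is to unfold the definition of the support set (Definition \ref{def:support}) specialized to the property $P = False$, and then observe that the defining predicate of the comprehension can never be satisfied. First I would substitute $False$ for $P$, obtaining that $\Support{Y}{\sigma}{False}$ consists of exactly those sets $S \subseteq \tuple{Y}{\sigma}$ for which the predicate
\[
False_{\sigma}(S) \;\wedge\; \forall{}S'\subset{}S.\; \neg False_{\sigma}(S')
\]
holds. By the definition of the constant property $False$, we have $False_{\sigma}(S) = False$ for every candidate set $S$, independently of $\sigma$. Hence the first conjunct is false, so the entire conjunction is false for every $S \subseteq \tuple{Y}{\sigma}$. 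No set meets the membership condition, and therefore the comprehension denotes the empty set, which is precisely the claim.

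In contrast to Lemma \ref{lemma:true-singleton}, the minimality clause $\forall{}S'\subset{}S.\; \neg False_{\sigma}(S')$ plays no role here. For the $True$ case one must identify the empty set as the unique minimal witness; here the support set is already empty because the primary property fails on every $S$, so there is no need to reason about which sets are minimal. The quantifier over proper subsets is simply never reached.

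Accordingly, I expect no genuine obstacle in this proof. The only point that warrants explicit mention is that the signature subscript on $False_{\sigma}$ is vacuous: since $False$ is defined as a constant function of its set argument (and does not depend on $\sigma$), the argument is uniform in $Y$ and $\sigma$, which is exactly the generality the lemma asserts.
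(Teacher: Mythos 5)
Your proof is correct: unfolding Definition \ref{def:support} with $P = False$ and observing that the first conjunct $False_{\sigma}(S)$ fails for every candidate $S$ is exactly the (one-line) argument the paper leaves implicit, as it states this lemma without proof. Your remarks about the minimality clause being unreachable and the vacuity of the $\sigma$ subscript are accurate and add nothing problematic.
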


\begin{corollary}\name{lemma:true-false-admissible}{{\it{True}} and {\it{False}} are p-Admissible}
The properties {\it{True}} and {\it{False}} are p-Admissible.
\end{corollary}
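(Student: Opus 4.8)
The plan is to prove both claims by directly unfolding the definition of p-admissibility (Definition~\ref{def:p-admissible}). That definition requires, for all $\sigma$, all $\sigma'\sqsubseteq\sigma$, and all $S\subseteq\tuple{Y}{\sigma}$, that the implication
\[(P_{\sigma}(S)\wedge S\subseteq\tuple{Y}{\sigma'})\Rightarrow P_{\sigma'}(S)\]
holds. Since neither $\mathit{True}$ nor $\mathit{False}$ depends on a signature, the subscripts $P_\sigma$ and $P_{\sigma'}$ carry no information, so in each case it suffices to inspect the propositional shape of the implication rather than to reason about any particular $\sigma$, $\sigma'$, or $S$.

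For the property $\mathit{True}$, I would note that the consequent $\mathit{True}(S)$ is the constant $\mathit{True}$ by definition, so the implication holds regardless of its antecedent. For the property $\mathit{False}$, I would instead examine the antecedent: the conjunct $\mathit{False}(S)$ is the constant $\mathit{False}$, so the antecedent is never satisfied and the implication is vacuously true. In both cases the three universal quantifiers are discharged immediately, because the argument is independent of the quantified values.

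There is no substantive obstacle; the only point requiring care is to argue from the definition of p-admissibility itself (a condition on the predicate $P$) rather than from the earlier support-set characterizations, Lemmas~\ref{lemma:true-singleton} and~\ref{lemma:false-empty}, which describe $\Support{Y}{\sigma}{P}$ and are not the quantity in terms of which p-admissibility is stated. With that observation, the corollary follows immediately from the constant definitions of $\mathit{True}$ and $\mathit{False}$.
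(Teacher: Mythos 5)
Your proof is correct, and it matches the argument the paper intends: the paper states this corollary without any proof, and the natural justification is exactly your direct unfolding of Definition~\ref{def:p-admissible} (for $\mathit{True}$ the consequent is constantly true; for $\mathit{False}$ the antecedent is constantly false), in the same style as the paper's one-line proof that literal properties are p-admissible because they do not depend on $\sigma$. Your observation that Lemmas~\ref{lemma:true-singleton} and~\ref{lemma:false-empty} concern support sets rather than the predicate itself, and so are not the right premises despite the ``corollary'' labelling, is also apt.
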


We can  combine supports by taking the conjunctions or disjunctions of their properties.

\begin{definition}
We define the conjunction and disjunction of support properties as follows.
\[\begin{array}{l}
(P\wedge{}Q)_{\sigma}(S) \definedAs P_{\sigma}(S)\wedge{}Q_{\sigma}(S) \\
(P\vee{}Q)_{\sigma}(S) \definedAs P_{\sigma}(S)\vee{}Q_{\sigma}(S) 
\end{array}\]
\end{definition}




We state the following lemma without proof.
\begin{lemma}\name{lemma:conj-p-admissible}{\(\wedge\) and \(\vee\) are p-admissible}
Given a schema $Y$ and signature $\sigma$ for $Y$ and two p-admissible
properties $P$ and $Q$, then $(P\wedge{}Q)$ and $(P\vee{}Q)$
are p-admissible as well.
\end{lemma}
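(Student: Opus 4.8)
The plan is to unfold the definition of p-admissibility (Definition~\ref{def:p-admissible}) for each combined property and reduce it to the p-admissibility of the component properties $P$ and $Q$, which we may assume by hypothesis. In both cases the structure is identical: fix arbitrary $\sigma$, $\sigma'\sqsubseteq\sigma$, and $S\subseteq\tuple{Y}{\sigma}$, assume the combined property holds at $\sigma$ together with the membership condition $S\subseteq\tuple{Y}{\sigma'}$, and derive that the combined property holds at $\sigma'$.

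For the conjunction, I would first unfold $(P\wedge{}Q)_{\sigma}(S)$ to obtain both conjuncts $P_{\sigma}(S)$ and $Q_{\sigma}(S)$. Since the membership hypothesis $S\subseteq\tuple{Y}{\sigma'}$ is shared, I can apply the p-admissibility of $P$ to the pair $P_{\sigma}(S)$ and $S\subseteq\tuple{Y}{\sigma'}$ to get $P_{\sigma'}(S)$, and symmetrically apply the p-admissibility of $Q$ to get $Q_{\sigma'}(S)$. Re-folding the definition then yields $(P\wedge{}Q)_{\sigma'}(S)$, as required.

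For the disjunction, I would unfold $(P\vee{}Q)_{\sigma}(S)$ to the disjunction $P_{\sigma}(S)\vee{}Q_{\sigma}(S)$ and proceed by case analysis. In the case where $P_{\sigma}(S)$ holds, the same membership hypothesis lets me invoke p-admissibility of $P$ to conclude $P_{\sigma'}(S)$, whence $(P\vee{}Q)_{\sigma'}(S)$ follows by introducing the left disjunct; the case for $Q$ is symmetric. In either case the combined property holds at $\sigma'$.

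Both arguments are purely propositional once the definitions are unfolded, so I do not expect a genuine obstacle. The only point requiring care is that the single membership hypothesis $S\subseteq\tuple{Y}{\sigma'}$ must be reused for each appeal to a component property's p-admissibility; it is precisely this hypothesis, rather than any relationship between $P$ and $Q$, that licenses the two component appeals, and it carries over unchanged because it mentions neither $P$ nor $Q$.
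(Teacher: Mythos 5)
Your proof is correct: the paper states this lemma without proof, and your argument is exactly the routine unfolding one would expect, correctly reusing the single hypothesis $S\subseteq\tuple{Y}{\sigma'}$ for each appeal to the component properties' p-admissibility. Nothing further is needed.
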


\comment{\begin{conjecture}\name{lemma:disj-p-admissible}{Disjunction p-admissible}
Given a constraint $C$ with signature $X$, if $P_\rho$ where $\rho$ is evidence
for $Y\subseteq{}X$ and $Q_{\rho'}$ where $\rho'$ is evidence for
$Z\subseteq{}X$ are p-admissible properties sets $S,
S\subseteq\tuple{X}{\sigma}$ then $(P\vee{}Q)_{\hat{\rho}}$ is p-admissible as
well, where $\hat\rho(i) = \ifthenelse{i < |X|}{\rho(i)}{\rho'(i)}$ is evidence
for $Y\cdot{}Z\subseteq{}X$.
\end{conjecture}
}

\subsubsection{Extensional Support for Literals}

\begin{definition}\name{def:literal-support}{Support Property (for a Literal)}
Given a schema $Y$, a signature $\sigma$ over $Y$, and a literal $\pair{i=a}$, then: $\pair{i=a}$ denotes the property
supporting this literal and is given by:
\[\pair{i=a}(S) \definedAs \exists{}\tau\in{}S. \;\tau[i]=a  \]
The support set for $\pair{i=a}$ is simply the set \(\Support{Y}{\sigma}{\pair{i=a}}\).
\end{definition}

\begin{corollary}
If $S\in{}\Support{Y}{\sigma}{\pair{i=a}}$ then  $S$ is a singleton.
\end{corollary}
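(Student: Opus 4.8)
The plan is to unfold the two defining conditions for membership in $\Support{Y}{\sigma}{\pair{i=a}}$ and then turn the minimality clause against a conveniently chosen singleton subset.

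First I would observe that, by Definition~\ref{def:support}, the hypothesis $S\in\Support{Y}{\sigma}{\pair{i=a}}$ unpacks into three facts: (i) $S\subseteq\tuple{Y}{\sigma}$; (ii) the property holds, $\pair{i=a}(S)$; and (iii) the minimality clause $\forall{}S'\subset{}S.\;\neg\,\pair{i=a}(S')$. Unfolding (ii) through Definition~\ref{def:literal-support} produces a witness, namely some $\tau_0\in{}S$ with $\tau_0[i]=a$. In particular this forces $S$ to be non-empty, which rules out the degenerate case $S=\emptyset$ that would otherwise also be vacuously minimal.

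Next I would consider the singleton $\{\tau_0\}$. Since $\tau_0[i]=a$, the property $\pair{i=a}(\{\tau_0\})$ holds directly from Definition~\ref{def:literal-support}, and $\{\tau_0\}\subseteq{}S$ because $\tau_0\in{}S$. The key step is then to apply the minimality clause (iii): were $\{\tau_0\}$ a \emph{proper} subset of $S$, i.e.\ $\{\tau_0\}\subset{}S$, then (iii) would yield $\neg\,\pair{i=a}(\{\tau_0\})$, contradicting what we just established. Hence $\{\tau_0\}$ cannot be a proper subset of $S$; combined with $\{\tau_0\}\subseteq{}S$ this gives $S=\{\tau_0\}$, so $S$ is a singleton.

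There is no serious obstacle here; the argument is a direct instance of minimality, and the only point deserving a moment's care is the non-emptiness of $S$, which follows at once from the existential witness extracted in (ii). From the proofs-as-programs viewpoint the extracted content is trivial: the constructive proof simply returns the witness tuple $\tau_0$ together with evidence that $S=\{\tau_0\}$, so the corresponding program reads off the unique supporting tuple.
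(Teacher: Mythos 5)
Your proof is correct and follows essentially the same route as the paper's: both arguments extract a witnessing tuple $\tau_0$ with $\tau_0[i]=a$ and then use the minimality clause of Definition~\ref{def:support} against a proper subset that still satisfies $\pair{i=a}$. The only cosmetic difference is that you shrink $S$ all the way to the singleton $\{\tau_0\}$ in one step, whereas the paper removes a single extra tuple $\tau'$ to form $S-\{\tau'\}$; the contradiction with minimality is identical.
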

\begin{proof}
Assume $S\in{}\Support{Y}{\sigma}{\pair{i=a}}$ then $\pair{i=a}(S)$ holds,
{\em{i.e.}} we know $\exists{}\tau\in{}S.\tau[i]=a$. Thus $|S|\ge{}1$.  Now, we
assume that $|S|>1$ and show a contradiction. There is at least one tuple in
$S$, such that $\tau[i]=a$. If there is any other tuple $\tau'\in{}S$ where
$\tau\not=\tau'$ then $\pair{i=a}(S-\{\tau'\})$ holds as well, and since this
set is smaller, $S$ was not minimal and so was not a support as we assumed.
\end{proof}

\begin{lemma}\name{lemma:p-admissible-literal}{Literals are p-admissible}
Given a schema $Y$ and a signature $\sigma$ on $Y$, if $i\in\rng{|Y|}$ and
$a\in\sigma(Y[i])$ then $\pair{i=a}$ is a P-admissible property.
\end{lemma}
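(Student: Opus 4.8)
The plan is to unfold both definitions and then observe that the literal property carries no dependence on the signature, which collapses p-admissibility to a triviality. Recall that Definition~\ref{def:p-admissible} requires, for every $\sigma$, every $\sigma'\sqsubseteq\sigma$, and every $S\subseteq\tuple{Y}{\sigma}$, that $(\pair{i=a}_{\sigma}(S)\wedge S\subseteq\tuple{Y}{\sigma'})\Rightarrow\pair{i=a}_{\sigma'}(S)$. By Definition~\ref{def:literal-support} the property is given by $\pair{i=a}(S)\definedAs\exists\tau\in S.\,\tau[i]=a$, a condition referring only to $S$, $i$, and $a$; no signature appears on the right-hand side. Following the convention of Definition~\ref{def:property}, the subscript may therefore be dropped entirely, so that $\pair{i=a}_{\sigma}(S)$ and $\pair{i=a}_{\sigma'}(S)$ denote one and the same proposition.

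First I would fix arbitrary $\sigma$, $\sigma'\sqsubseteq\sigma$, and $S\subseteq\tuple{Y}{\sigma}$, and assume the antecedent $\pair{i=a}_{\sigma}(S)\wedge S\subseteq\tuple{Y}{\sigma'}$. The first conjunct hands me a witness $\tau\in S$ with $\tau[i]=a$. Since the consequent $\pair{i=a}_{\sigma'}(S)$ is the very same existential statement $\exists\tau\in S.\,\tau[i]=a$, this same $\tau$ discharges it. Read through the Curry--Howard lens that the paper adopts, the realizer of the implication is simply the identity: it returns the witness tuple it is handed, unchanged, as evidence under the narrowed signature $\sigma'$.

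There is essentially no obstacle here; the only points demanding care are bookkeeping rather than mathematics. One should confirm that the property definition genuinely omits $\sigma$, so that the subscript is vacuous and the second conjunct $S\subseteq\tuple{Y}{\sigma'}$ is in fact never consumed. One should also note that the standing hypotheses $i\in\rng{|Y|}$ and $a\in\sigma(Y[i])$ play no role in establishing p-admissibility: they serve only to guarantee that the literal $\pair{i=a}$ is sensible within $\sigma$, and hence that its support set can be non-empty, a concern orthogonal to the stability condition being verified. Accordingly I expect the write-up to be a few lines of definitional unfolding followed by the observation of signature-independence.
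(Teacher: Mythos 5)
Your proposal is correct and takes exactly the same route as the paper, whose entire proof is the one-line observation that $\pair{i=a}$ does not refer to $\sigma$ and is therefore p-admissible. Your version merely spells out the definitional unfolding that the paper leaves implicit.
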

\begin{proof}
Note that $\pair{i=a}$ does not refer to $\sigma$ at all and so is P-admissible.
\end{proof}

\subsubsection{Structural Support - Evidence}

Literal support captures support for variable-value pairs.  Structural support
is support for some structural condition not representable by a single tuple.
Thus, if any tuple in a structural support is lost, then the support no longer
holds. In example \ref{ex:alldiff} (GAC AllDifferent) we gave a list of literals
as evidence that an AllDifferent constraint is GAC. A list of literals would be 
represented as a structural support in our framework by using the support property
for a literal (for each literal individually) then finding support for a collection 
of properties (as in Defn.~\ref{def:collection-support}).

Constraint solvers typically allow movable triggers to be placed on literals, so the connection
between literals and our definition of generalised support is important for this paper.
A generalised support may be less compact than the set of literals it represents. 
However, the implementation of a propagator
may correctly place triggers on the set of literals. Generalised support is merely
an abstraction used in our framework.  

\comment{
\begin{example}
In \cite{nightingale_all_diff} it is shown how to use strongly-connected
components (SCC) as support for solving the all-different constraint.
Representation of this evidence takes the form of a quasi-bipartite graph
(between variables and the values in their domains) witnessing the existence of
a SCC.  Semantically, the extensional representation of this evidence requires
a set of tuples, all of which must be present if the support is to remain.

\end{example}
}

\subsection{Soundness and Completeness of a Collection of Propagators}

Propagators narrow domains to minimize the search space and provide evidence
that the narrowed domains have not eliminated any solutions. Constraints may be
supported by a collection of propagators.  To show that the propagators are
correct with respect to the constraint they support we show they are sound and
complete.  

\subsubsection{Soundness}

\begin{definition}\name{def:soundness}{Propagator Soundness}
Given a constraint $C$ with schema $Y$ and a set of propagators
${\cal{P}}=\{P_1,\cdots,P_m\}$ we say ${\cal{P}}$ is {\em{sound}} with respect
to the constraint $C$ if the following holds:
\[
\begin{array}{l}
\forall{}\sigma.\;\mathrm{singleton}(\sigma) \Rightarrow \\
\qquad (\Support{Y}{\sigma}{\cal{P}} \Rightarrow \sem{C}{\sigma} \ne \emptyset)
\end{array}
\]
\end{definition}

Soundness says that for the most restricted non-empty signatures (ones where
all domains in the signature have been narrowed to a singleton) the propagator
must be able to distinguish between the constraint being empty or inhabited by
a single tuple.  If support is non-empty at a singleton domain then the
constraint must be true there as well.  The definition of soundness presented
here is related to the one in \cite{Tack_Schulte_Smolka06}.

Thinking of support as evidence for truth, one might expect soundness to be
characterized as follows:
\[\forall{}\sigma.\;\Support{Y}{\sigma}{\cal{P}} \Rightarrow \sem{C}{\sigma} \ne \emptyset\] 
This is too strong. At a non-singleton signature, support is an approximation
to truth.  For example, even though a constraint may fail in a particular
non-convex domain (\ie~the domain has gaps), a propagator that operates on
domain bounds may not recognize the domain is not convex until the signature
has been narrowed further.

\subsubsection{Completeness}

Completeness guarantees that if the meaning of a constraint is non-empty at a
signature $\sigma$ (semantic truth) then there is support for the family of
properties $\cal{P}$. The wrinkle on this scheme is that the support may not
exist at $\sigma$ itself, but only at some refined
$\sigma'\sqsubseteq{}\sigma$.  If so, we insist that the constraint has not
lost any tuples at the refined signature $\sigma'$.

\begin{definition}\name{def:completeness}{Propagator Completeness}
Given a constraint $C$ with schema $Y$ and a set of propagators
${\cal{P}}=\{P_1,\cdots,P_m\}$ we say ${\cal{P}}$ is {\em{complete}} with
respect to the constraint $C$ if the following holds:
\[
\begin{array}{l}
\forall{}\sigma.\;\; \sem{C}{\sigma} \ne \emptyset \;\Rightarrow \\
\abit{1}\exists{}\sigma'\sqsubseteq{}\sigma. \\
\abit{2}\sem{C}{\sigma} \subseteq \sem{C}{\sigma'} \: \wedge \: \Support{Y}{\sigma'}{\cal{P}}
\end{array}
\]
If ${\cal{P}}$ is complete we write ${\mathit{complete}}({\cal{P}})$.
\end{definition}


\begin{theorem}\name{thm:local-completeness}{Local Completeness}
Give a set of properties ${\cal{P}} = \{P_1,\cdots,P_k\}$ defined over schema
$Y$, if each singleton $\{P_i\}$ is complete then ${\cal{P}}$ is complete.
\end{theorem}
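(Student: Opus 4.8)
The plan is to produce, for a given $\sigma$ with $\sem{C}{\sigma}\neq\emptyset$, a \emph{single} refinement $\sigma^\ast\sqsubseteq\sigma$ at which all of $P_1,\dots,P_k$ simultaneously have non-empty support sets, rather than trying to stitch together the $k$ separate refinements handed to us by the hypotheses. Stitching is exactly where the difficulty lies: applying completeness of $\{P_1\}$ yields some $\sigma_1\sqsubseteq\sigma$ supporting $P_1$, and applying completeness of $\{P_2\}$ at $\sigma_1$ yields a still narrower $\sigma_2\sqsubseteq\sigma_1$ supporting $P_2$, but nothing prevents the support for $P_1$ from being destroyed when the domains shrink from $\sigma_1$ to $\sigma_2$ (a support set $S$ is only guaranteed to persist while $S\subseteq\tuple{Y}{\sigma_2}$). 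So I will avoid the sequential construction and instead use a canonical common lower bound.

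The candidate is the generalized arc consistent closure $\sigma^\ast$ of $\sigma$, namely the signature retaining precisely those values of $\sigma$ that occur in some tuple of $\sem{C}{\sigma}$ (Definition~\ref{def:gac}, which determines $\sigma^\ast$ uniquely given $\sigma$). First I would record its two defining features. It is solution-preserving: every $\tau\in\sem{C}{\sigma}$ witnesses its own values, so each $\tau[i]\in\sigma^\ast(Y[i])$, giving $\sem{C}{\sigma}\subseteq\sem{C}{\sigma^\ast}$; combined with the monotonicity $\sem{C}{\sigma^\ast}\subseteq\sem{C}{\sigma}$ (which holds for any $\sigma^\ast\sqsubseteq\sigma$) this yields $\sem{C}{\sigma}=\sem{C}{\sigma^\ast}$, and in particular $\sem{C}{\sigma^\ast}\neq\emptyset$. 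And it is a GAC fixpoint: since every value surviving in $\sigma^\ast$ already has a supporting tuple in $\sem{C}{\sigma}=\sem{C}{\sigma^\ast}$, a second application of Definition~\ref{def:gac} returns $\sigma^\ast$ unchanged. By Corollary~\ref{lemma:gac} this fixpoint property is equivalent to the minimality statement $\forall\sigma'\sqsubset\sigma^\ast.\ \sem{C}{\sigma'}\subsetneq\sem{C}{\sigma^\ast}$, which is the lever for the main step.

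The key step is then to apply the completeness of each singleton $\{P_i\}$ \emph{at the signature $\sigma^\ast$}. Since $\sem{C}{\sigma^\ast}\neq\emptyset$, completeness supplies some $\sigma'_i\sqsubseteq\sigma^\ast$ with $\sem{C}{\sigma^\ast}\subseteq\sem{C}{\sigma'_i}$ and $\Support{Y}{\sigma'_i}{P_i}\neq\emptyset$. The solution-preservation inequality forces $\sem{C}{\sigma'_i}=\sem{C}{\sigma^\ast}$, so by the minimality property of $\sigma^\ast$ (Corollary~\ref{lemma:gac}) the refinement cannot be strict, i.e. $\sigma'_i=\sigma^\ast$. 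Hence $\Support{Y}{\sigma^\ast}{P_i}\neq\emptyset$, and this holds for every $i$. Unfolding Definition~\ref{def:collection-support}, the conjunction of these facts is exactly $\Support{Y}{\sigma^\ast}{{\cal P}}$, so $\sigma^\ast$ witnesses completeness of ${\cal P}$ at $\sigma$ via $\sigma^\ast\sqsubseteq\sigma$, $\sem{C}{\sigma}\subseteq\sem{C}{\sigma^\ast}$, and $\Support{Y}{\sigma^\ast}{{\cal P}}$ (Definition~\ref{def:completeness}).

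The main obstacle, as flagged above, is the interaction between the separate refinements: naively each hypothesis wants to narrow the domains in its own way, and support is not in general preserved under further narrowing (this is precisely what p-admissibility, Definition~\ref{def:p-admissible}, would buy, and it is telling that the theorem needs no such hypothesis). Driving $\sigma$ all the way to its GAC fixpoint sidesteps the problem entirely, because there is no room left to refine: any solution-preserving descendant of $\sigma^\ast$ coincides with $\sigma^\ast$, so each completeness hypothesis is compelled to deliver its support at the common signature $\sigma^\ast$ itself.
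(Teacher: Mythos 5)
Your proof is correct, but it takes a genuinely different route from the paper's. The paper argues iteratively: if some $P_i$ is unsupported at $\sigma$, its singleton completeness yields a strictly narrower $\sigma'\sqsubset\sigma$ with $\sem{C}{\sigma}\subseteq\sem{C}{\sigma'}$ and $P_i$ supported there; one then repeats with any property still unsupported at $\sigma'$, and the process terminates by well-foundedness of $\sqsubset$ on finite-domain signatures, necessarily at a signature where every property is supported. You instead jump directly to a canonical common witness, the GAC closure $\sigma^\ast$ of $\sigma$, and use its minimality (Corollary~\ref{lemma:gac}: every strict refinement of a GAC signature loses solutions) to force each $\sigma'_i$ delivered by singleton completeness to coincide with $\sigma^\ast$. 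Both arguments are sound, and you correctly diagnose why naive sequential stitching fails (support is not preserved under further narrowing absent p-admissibility) --- the paper's iteration dodges this not by preserving old supports but by re-checking all properties at each stage and descending until nothing is left unsupported. What each buys: the paper's argument is more elementary (it needs only the well-foundedness lemma and no monotonicity of $\sem{\cdot}{\cdot}$ beyond what the completeness definition already supplies) and its witness $\hat\sigma$ may be coarser than GAC, stopping as soon as support exists; your argument is non-iterative and produces a single explicit witness, but it silently relies on monotonicity of the semantics under domain narrowing ($\sigma'\sqsubseteq\sigma \Rightarrow \sem{C}{\sigma'}\subseteq\sem{C}{\sigma}$), which the paper assumes only implicitly, and it always narrows all the way to the GAC closure even when that is more pruning than completeness requires.
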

\begin{proof}
If ${\cal{P}}$ is supported at $\sigma$, then use witness $\sigma$ for
$\sigma'$ and completeness trivially holds.  Suppose there is not support for
${\cal{P}}$ at $\sigma$ where $\sem{C}{\sigma} \neq \emptyset$.  Choose one of the
$P_i\in{\cal{P}}$ such that $\neg\Support{Y}{\sigma}{P_i}$ and let $\sigma',
\sigma'\sqsubset\sigma$ be the signature claimed to exist in the proof of
completeness of $P_i$. By completeness of $\{P_i\}$,
$\sem{C}{\sigma}\subseteq\sem{C}{\sigma'}$.  If there is support for
${\cal{P}}$ at $\sigma'$ then ${\cal{P}}$ is complete. If not, iterate this
process by choosing another $P_k\in{\cal{P}}$ that is not supported at
$\sigma'$.  The fixed-point of this process must yield a signature
$\hat{\sigma}$ such that $\Support{X}{\hat\sigma}{\cal{P}}$.  The fixed-point
exists because $\sqsubset$ is a well-founded relation on signatures.\end{proof}

Our definition of completeness ensures that a propagator derived from
a support property does not fail early, therefore it is merely a correctness
property. It is similar in intention to Maher's definition of weak completeness
\cite{maher-prop-completeness}, although Maher's definition only applies to
singleton domains.

Soundness and completeness as defined here are the minimum conditions required for
a propagator to operate correctly, thus popular notions of consistency such as GAC, 
bound(\(\mathbb{Z}\)) and bound(\(\mathbb{R}\)) are sound and complete, and therefore 
are supported in our framework. 
Soundness and completeness are satisfied by very simple
support properties such as:

\[\begin{array}{l}
P_\sigma(S) \definedAs (\neg \mathrm{singleton}(\sigma) \rightarrow S\ne \emptyset) \\
\abit{5}\wedge (\mathrm{singleton}(\sigma) \rightarrow \sem{C}{\sigma}\ne \emptyset)
\end{array}
\]

This property corresponds to a propagator that waits until all variables are 
assigned before checking the constraint. Any practical propagator is stronger than
this. 

Soundness and completeness are not the only options for
characterizing the correctness of a set of generalized support properties.  For example, in
\cite{Gent_Jefferson_Miguel06} it is shown that a set of properties imply the domain is
GAC.  Other forms of consistency such as bound consistency could also serve as
correctness conditions for a set of properties.

\comment{
\subsubsection{Confluence}

Ideally, a collection of properties ${\cal{P}}=\{P_1,\cdots,P_m\}$ are
independent {\em{i.e.}} the ways in which domains are narrowed to (re)establish
support for the individual $P_i\in{\cal{P}}$ should not conflict with one
another.  This notion of independence is framed as a confluence property on a
transition relation between signatures which is induced by the supports of
properties in ${\cal{P}}$.  First we define the transition relation induced by
a constraint $C$ with schema $Y$ and a set of properties ${\cal{P}}$ on
signatures over $Y$ and then define confluence for that relation.

\begin{definition}\name{def:support-transition}{Support Transition Relation}
Let $C$ be a constraint with schema $Y$.  Let ${\cal{P}}=\{P_1,\cdots,P_m\}$ be
a collection of properties over $Y$ and let $\sigma$ and $\sigma'$ be
signatures over $Y$.  We write $\sigma\rightarrow_C\sigma'$ to denote the
transition from $\sigma$ to $\sigma'$ under the following relation.
\[\begin{array}{l}
\{\V{\sigma,\sigma'}| \sigma' \sqsubseteq \sigma \: \wedge \sem{C}{\sigma} \subseteq \sem{C}{\sigma'} \: \wedge  \exists{}P_i\in{\cal{P}}.\;\Support{X}{\sigma'}{P_i}\ne \emptyset \: \}
\end{array}\]
\end{definition}
Note that, this relation is reflexive and is transitive (because $\sqsubseteq$
is transitive as is existence of support). Thus the support transition relation
allows short steps (to maximal $\sigma'$ for which no answers are lost) and
long steps, all the way to a GAC signature.

Confluence is descibed here in terms of the collection of properties specifying
the support.  If a collection of properties is confluent, the individual
properties may be considered independently in completeness proofs. Also, the
propagators they specify may be invoked in any order.

Confluence is a property of a transition relation on signatures.

\begin{definition}\name{def:confluence}{Confluence}
Given a constraint $C$ with schema $Y$ and a set of support properties
${\cal{P}}=\{P_1,\cdots,P_m\}$ defined over $Y$, we say ${\cal{P}}$ is
confluent \wrt~constraint $C$ if the following holds.
\[\begin{array}{l}
\forall{}\sigma,\sigma_1,\sigma_2. \: (\sigma\rightarrow_P\sigma_1 \:\wedge \: 
\sigma\rightarrow_{P}\sigma_2) \Rightarrow \\
\abit{1} \exists{}\sigma_3.\:(\sigma_1\rightarrow_{P}\sigma_3 \:\wedge \: 
\sigma_2\rightarrow_{P}\sigma_3)
\end{array}\]
\end{definition}

}

\subsection{Formal Development of Constraint Propagators}

The methodology for formal development of propagators for a constraint $C$ is
as follows:
\begin{enumerate}

\item{} Describe support properties (${\cal{P}}=\{P_1,\cdots,P_k\}$) that
  characterize constraint $C$ and prove that they are p-admissible.

\item{} For each property $P_i$, give a constructive proof of the propagation
  schema given in Def.~\ref{def:support-framework}.  The computational content
  of these proofs gives correct-by-construction algorithms for each
  propagator.

\item{} Prove the soundness and completeness of ${\cal{P}}$ with respect to
  $C$. This shows the collection of propagators are correct \wrt~the constraint
  $C$.  This proof often reuses the propagation schema proofs.

\end{enumerate}

\subsubsection{The Propagation Schema}

We present the following schematic formula whose constructive proofs capture
the methods of generating support for a particular property $P$.

\begin{definition}\name{def:support-framework}{Propagation Schema}
Given a schema $X$, a signature $\sigma$ and a p-admissible
property $P$, constructive proofs of the following statement yield a propagator
for $P$.
\[\begin{array}{l}
\forall {}S\in\Support{X}{\sigma}{P}. \\
  \abit{.5} \forall{}   \sigma_1 \sqsubseteq \sigma .\: \nonempty{\sigma_1} \Rightarrow \\
  \abit{1} S \not\in \Support{X}{\sigma_1}{P} \Rightarrow \\
  \abit{1.5} \mathrm{findNewSupport}(X, P, \sigma_1) \\
  \abit{1.5} \vee \mathrm{noNewSupport}(X, P, \sigma_1)
\end{array}\]
When an existing support \(S\) has been lost in a signature
\(\sigma_1\sqsubseteq \sigma\), a new support and a new signature \(\sigma_2 \sqsubseteq \sigma_1\) are found
in findNewSupport. Otherwise, noNewSupport states that there is no new support to be found. 

\[\abit{-1}\begin{array}{l}
\mathrm{findNewSupport}(X, P, \sigma_1) \definedAs \\
  \abit{0.5} (\exists{} \sigma_2\sqsubseteq\sigma_1.\: \nonempty{\sigma_2}\: \wedge \\
  \abit{1} \exists{} S'\in\Support{X}{\sigma_2}{P}. \\
  \abit{1.5} \forall{} \sigma_3.\, \sigma_2 \sqsubset \sigma_3 \sqsubseteq \sigma_1 \Rightarrow  \Support{X}{\sigma_3}{P} = \emptyset)\\
\end{array}\]

\[\begin{array}{l}
\mathrm{noNewSupport}(X, P, \sigma_1) \definedAs \\
\abit{1}\forall \sigma_2 \sqsubseteq \sigma_1.\:\nonempty{\sigma_2} \Rightarrow \\
\abit{2}\Support{X}{\sigma_2}{P} = \emptyset
\end{array}\]
\end{definition}

We are interested in constructive proofs\footnote{There is a classical
proof of propagator schema that is independent of the property $P$ and carries
no interesting computational content.} of the propagator schema when $P$ is
instantatied to individual support properties.  

Given an admissible support property $P$, a constructive proof of the
propagator schema yields a function that takes as input a set $S$, evidence
that $S\in\Support{X}{\sigma}{X}$, a signature $\sigma_1$ and evidence that
$\sigma_1\sqsubseteq\sigma$, evidence that $S\not\in{}\Support{X}{\sigma_1}{P}$
and returns one of two items:
\begin{description}
\item{i.)} a new signature $\sigma_2$, together with evidence that
$\sigma_2\sqsubseteq\sigma_1$, a set of tuples $S'$ and evidence that
$S'\in\Support{X}{\sigma_2}{P}$ and evidence that $\sigma_2$ is maximal.
\item{ii.)} Evidence that there is no support for $P$ in $\sigma_1$ or for any
smaller signature.
\end{description}

\begin{lemma}\name{lem:s-nonempty}{non-empty in propagation schema}
In the propagation schema, if we assume the antecedent $S\notin
\Support{X}{\sigma_1}{P}$ for $\sigma_1 \sqsubseteq \sigma$ then $S\ne\emptyset$.
\end{lemma}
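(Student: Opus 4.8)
The plan is to argue by contradiction, pushing the property down from $\sigma$ to $\sigma_1$ via p-admissibility. Working inside the propagation schema (Def.~\ref{def:support-framework}), I have three facts available: that $S\in\Support{X}{\sigma}{P}$, that $\sigma_1\sqsubseteq\sigma$ with $\nonempty{\sigma_1}$, and the antecedent $S\notin\Support{X}{\sigma_1}{P}$. Crucially, the schema is only stated for p-admissible $P$, so I may use Def.~\ref{def:p-admissible}. Suppose, for contradiction, that $S=\emptyset$.

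From $S\in\Support{X}{\sigma}{P}$ and $S=\emptyset$, unfolding Def.~\ref{def:support} gives $P_{\sigma}(\emptyset)$; the minimality clause $\forall S'\subset\emptyset.\,\neg P_{\sigma}(S')$ holds vacuously since $\emptyset$ has no proper subsets. The key step is to transport $P_{\sigma}(\emptyset)$ down to $\sigma_1$. Instantiating p-admissibility (Def.~\ref{def:p-admissible}) at the signatures $\sigma$ and $\sigma_1\sqsubseteq\sigma$ and the set $\emptyset$, both hypotheses are met: $P_{\sigma}(\emptyset)$ is in hand, and $\emptyset\subseteq\tuple{X}{\sigma_1}$ holds trivially because the empty set is a subset of every set of tuples. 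I therefore conclude $P_{\sigma_1}(\emptyset)$.

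It then remains to observe that $\emptyset\in\Support{X}{\sigma_1}{P}$: I have $\emptyset\subseteq\tuple{X}{\sigma_1}$ and $P_{\sigma_1}(\emptyset)$, and minimality again holds vacuously. This contradicts the antecedent $S=\emptyset\notin\Support{X}{\sigma_1}{P}$, so $S\ne\emptyset$. I expect no real obstacle here; the entire content is the observation that the empty support is always admissible downward, precisely because the side condition $\emptyset\subseteq\tuple{X}{\sigma_1}$ of p-admissibility is free. Note that the $\nonempty{\sigma_1}$ hypothesis is not needed for this particular argument.
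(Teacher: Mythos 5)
Your proposal is correct and is essentially the paper's own argument, just written out in full: the paper's one-line proof also invokes p-admissibility to show that $\emptyset\in\Support{X}{\sigma}{P}$ forces $\emptyset\in\Support{X}{\sigma_1}{P}$ for every $\sigma_1\sqsubseteq\sigma$, which is exactly the contrapositive of your contradiction. Your additional observations (minimality is vacuous for $\emptyset$, the side condition $\emptyset\subseteq\tuple{X}{\sigma_1}$ is free, and $\nonempty{\sigma_1}$ is not needed) are all accurate elaborations of that same route.
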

\begin{proof}
By p-admissibility of $P$, if $\emptyset\in \Support{X}{\sigma}{P}$ then for all
$\sigma_1\sqsubseteq \sigma$, $\emptyset\in \Support{X}{\sigma_1}{P}$.
\end{proof}

\section{Generating Propagators}

In this section we present two case studies of applying our methodology.

\subsection{A Propagator for the Element Constraint}\label{sec:element-example}

The element constraint is widely useful in specifying a large class of
constraint problems.  It has the form ${\tt{element}}(X,y,z)$ where $X$ is a
vector of variables and $y$ and $z$ are variables.  The meaning of the element
constraint is the set of all coherent tuples on the schema
$\V{X\cdot{}y\cdot{}z}$ of the following form.
\[\tau = \V{v_1,\cdots{},v_{i-1},j,v_{i+1},\cdots,v_k,i,j}\]  
Thus, $\tau[k+1]=i$ indexes $\V{v_1,\cdots,v_k}$ and $\tau[k+2]=\tau[i]$.

\begin{definition}\name{def:element-semantics}{Element Semantics}
\[
\begin{array}{l}
\sem{{\tt{element}}(X,y,z)}{\sigma} = \V{\V{X\cdot y\cdot z},R} \\
\abit{1}\mathrm{where} \\ 
\abit{1} R = \{\tau{}\in \tuple{\V{X\cdot y\cdot z}}{\sigma} \alt\\
\abit{2} k = |X| \wedge \tau[k+1]\in\{1..k\} \\
\abit{3} \wedge \tau{}[k+2]=\tau{}[\tau{}[k+1]] \}\\
\end{array}
\]
\end{definition}

The element constraint is widely used because it represents the very basic 
operation of indexing a vector~\cite{VanHentenryckSimonisEa:92}.  For example, Gent et al.~model Langford's 
number problem and quasigroup table generation problems using 
element~\cite{Gent_Jefferson_Miguel06}. 

In \cite[pp. 188]{Gent_Jefferson_Miguel06} three properties to establish GAC
for the element constraint are characterized.  We restate theorem 1 from that
paper here:

\begin{theorem}\name{thm:GJM-06}{Theorem 1 of reference {\rm{\cite{Gent_Jefferson_Miguel06}}}.}
Given an element constraint of the form ${\tt{Element}}(X,y,z)$, domains given
by a signature $\sigma$ are Generalized Arc Consistent if and only if all of
the following hold.
\[\begin{array}{lr}
\forall{}i\in\sigma(y).\;\, \sigma(y)=\{i\} \Rightarrow \sigma(X[i]) \subseteq \sigma(z) & (1) \\
\forall{}i \in \sigma(y).\;\, \sigma(X[i]) \cap \sigma(z) \not=\emptyset & (2) \\
\sigma(z) \subseteq \displaystyle{\bigcup_{i\in{}\sigma(y)}\sigma(X[i])} & (3) \\
\end{array}\]
\end{theorem}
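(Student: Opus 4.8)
Write $W = X\cdot y\cdot z$ for the schema of the constraint and $k = |X|$, so that, following the $1$-based convention of Definition~\ref{def:element-semantics}, positions $1,\dots,k$ of a tuple carry the entries of $X$, position $k+1$ carries $y$, and position $k+2$ carries $z$. The plan is to prove the biconditional by reducing ``$\sigma$ is GAC'' to the statement that every value in $\sigma$ has a support tuple, and then matching each of the three conditions to the support requirement of one kind of variable. Specializing Definition~\ref{def:gac} to $\sigma'=\sigma$, ``$\sigma$ is GAC'' unfolds to: for every position $p$ and every $a\in\sigma(W[p])$ there is $\tau\in\sem{{\tt{element}}(X,y,z)}{\sigma}$ with $\tau[p]=a$. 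Throughout I assume all domains are nonempty, the standard hypothesis for GAC, which is needed to populate the entries of $\tau$ left unconstrained by the relation.

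First I would record three characterizations of support, read straight off Definition~\ref{def:element-semantics}, in which every satisfying tuple picks an index $i=\tau[k+1]\in\sigma(y)\cap\{1..k\}$ and sets $\tau[k+2]=\tau[i]$. (i) The literal $y=i$ is supported iff $i\in\{1..k\}$ and $\sigma(X[i])\cap\sigma(z)\neq\emptyset$: choose a common value for the $i$-th entry and for $z$, and fill the remaining $X$-entries from their domains. (ii) The literal $z=j$ is supported iff $j\in\sigma(X[i])$ for some $i\in\sigma(y)$, that is, $j\in\bigcup_{i\in\sigma(y)}\sigma(X[i])$. (iii) The literal $X[i]=a$ is supported iff either some other index $i'\in\sigma(y)$ with $i'\neq i$ admits a satisfying tuple (leaving the $i$-th entry free to take $a$), or $i\in\sigma(y)$ and $a\in\sigma(z)$ (forcing the chosen index to be $i$ itself, so that $\tau[k+2]=\tau[i]=a$).

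For the ($\Leftarrow$) direction I would assume (1)--(3) and verify each literal with these characterizations. Conditions (2) and (3) give support for every $y$- and $z$-literal through (i) and (ii) directly. For an $X$-literal $X[i]=a$ I would split on $\sigma(y)$: if $\sigma(y)\neq\{i\}$ then, since $\sigma(y)$ is nonempty, it contains an index $i'\neq i$, and (2) together with (iii) supports $a$ with the $i$-th entry free; if $\sigma(y)=\{i\}$ then (1) yields $\sigma(X[i])\subseteq\sigma(z)$, so $a\in\sigma(z)$ and (iii) applies with the index $i$. For the ($\Rightarrow$) direction I would assume every literal is supported and extract the conditions from the support tuples: support for $y=i$ produces a tuple whose $i$-th entry lies in $\sigma(X[i])\cap\sigma(z)$, giving (2) (and, since a satisfying index must lie in $\{1..k\}$, also $\sigma(y)\subseteq\{1..k\}$); support for $z=j$ produces an index $i\in\sigma(y)$ with $j\in\sigma(X[i])$, giving (3); and when $\sigma(y)=\{i\}$, support for each $a\in\sigma(X[i])$ forces the index to be $i$, whence $a=\tau[k+2]\in\sigma(z)$, giving (1).

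The step I expect to be the main obstacle is the analysis behind (iii), as it is the only place where the two possible roles of an $X$-entry --- being the value selected by $y$, or being an idle passenger --- must be separated, and it is precisely this case split that produces the asymmetric condition (1). Two bookkeeping points also need explicit care: the nonemptiness assumption, used whenever unconstrained entries of a support tuple must be populated; and coherence, since if $y$, $z$, or entries of $X$ happen to name a common variable the constructed tuples must be coherent with $W$. Under the usual assumption that $X$, $y$, and $z$ are distinct this is automatic; otherwise coherence only tightens the constructions without altering the correspondence between the conditions and the three support requirements.
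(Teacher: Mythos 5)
The paper does not actually prove this statement: it is imported verbatim as Theorem~1 of \cite{Gent_Jefferson_Miguel06} (``We restate theorem 1 from that paper here''), so there is no in-paper proof to compare against. Your argument is nevertheless correct and self-contained. Specializing Definition~\ref{def:gac} to $\sigma'=\sigma$ so that GAC becomes ``every literal has a satisfying tuple,'' and then characterizing supportability separately for $y$-, $z$- and $X$-literals, is the right decomposition; your case split in (iii) between the selected index being $i$ itself and being some $i'\neq i$ is indeed the crux, and it is exactly what produces the guarded, asymmetric condition (1). You are also right to make nonemptiness an explicit standing hypothesis: without it the equivalence fails (e.g.\ take $\sigma(X[j])=\emptyset$ for some $j\notin\sigma(y)$ while (1)--(3) all still hold), and the same implicit assumption that $\sigma(y)\subseteq\{1..k\}$ is needed for (1) and (2) even to be well formed. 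The nearest material in the paper is the unnumbered theorem following Definition~\ref{def:element-support-properties}, which relates conditions (1)--(3) to the existence of generalized supports for $P_1$, $P_2$, $P_3$ rather than to GAC itself; your literal-by-literal characterizations (i)--(iii) play essentially the role that those support properties play there, so your proof dovetails with, but is independent of, the paper's development.
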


\comment{
Later on the same page, they present three conditions for support for a tuple
to provide support.  Given an element constraint of the form
${\tt{Element}}(X,y,z)$, a tuple of the form $\V{\tau,i,j}$, is
\begin{description}
\item{\bf Support for $\tau[i]=j$:}  Either $|\sigma(y)|>1$ or $j\in{}\sigma(z)$.
\item{\bf Support for $\V{y,i}$:}  $j\in{}\sigma(X[i])$ and $j\in{}\sigma(z)$.
\item{\bf Support for $\V{z,j}$:} $i\in{}\sigma(y)$ and  $j\in{}\sigma(X[i])$.
\end{description}
}

\subsubsection{Support Properties}\label{sec:element-support-properties}

Each of the three properties above can be characterized as properties of their
generalized supports.

\begin{definition}\name{def:element-support-properties}{Element Support Properties}
Given a schema $X$ and variables $y$ and $z$ and a signature $\sigma$ there are
three properties corresponding to three propagators for establishing GAC for
the element constraint ${\tt{Element}}(X,y,z)$. Let $k$ be $|X|$, then $k+1$ is
the index of $y$ and $k+2$ is the index of $z$ in the schema
$(X\cdot{}y\cdot{}z)$. 
\[\begin{array}{l}
P_1[\sigma](S) \definedAs \\
\abit{1} ( \exists i,j : \sigma(y) . \\
\abit{2}i\ne j\: \wedge\: \V{k+1,i} \in S \wedge \V{k+1,j} \in S ) \\
\abit{1} \vee  \;\; \forall{}i:\sigma(y).\;\forall{}a:\sigma(X[i]). \;\V{k+2, a}\in S \\
P_2[\sigma](S)\definedAs \forall{}i:\sigma(y).\;\exists{}a:\sigma(z).\\
\abit{2}\V{i,a}\in S \wedge \V{k+2,a}\in S \\
P_3[\sigma](S)\definedAs \forall{}a:\sigma(z).\; \exists{}i:\sigma(y).\\
\abit{2}\V{i,a}\in S \wedge \V{k+1,i}\in S
\end{array} \]
\end{definition}

Note that for property $P_1$, the first disjunct is true iff the domain of the
index variable $y$ has more than one element, $|\sigma(y)|>1$.  Support for
this disjunct is a pair of literals $\V{k+1,i}$ and $\V{k+1,j}$ where
$i,j\in{}\sigma(y),\; i\ne j$.\footnote{This specification corresponds to a set 
    of dynamic literal triggers \cite{Gent_Jefferson_Miguel06}. Ideally a static
    assignment trigger would be used for $P_1$, which would trigger the
    propagator
    when $y$ is assigned. However, assignment triggers are outside the scope of
    this paper.}
Logically, $(\exists i,j : \sigma(y) . \abit{1}
i\ne j)$ is equivalent, but for our purposes we must provide p-admissible
support.   Once the domain of
the index variable is a singleton ($\sigma(y)=\{i\}$), the second disjunct of
$P_1$ may be satisfied.  This disjunct is supported by a set of
$|\sigma(X[i])|$ literals of the form $\pair{k+2,a}$, one literal for each
$a\in \sigma(X[i])$. This is evidence for $\sigma(X[i])\subseteq\sigma(z)$
since $k+2$ is the index of $z$ in the schema $(X\cdot{}y\cdot{}z)$.

Property $P_2$ is supported iff $\sigma(X[i])\cap\sigma(z)$ is non-empty for
every $i\in\sigma(y)$. The support is $2m$ literals where $m=|\sigma(y)|$, two
for each $i\in{}\sigma(y)$.  These have the form $\V{i,a}$ and $\V{k+2,a}$
where $a$ is some value in $\sigma(z)$. If there is no support, then
$\sigma(X[i])\cap\sigma(z)=\emptyset$.

Property $P_3$ is supported iff $\sigma(z)\subseteq \bigcup_{i\in \sigma(y)} \sigma(X[i])$. 
The support is a set of $2m$ literals where $m=|\sigma(z)|$,
two for each $a\in{}\sigma(z)$.  The literals have the form $\V{i,a}$ and
$\V{k+1,i}$ where $i$ is some value in $\sigma(y)$. If there is no support then
for some $a\in\sigma(z)$, for all $i$ $a\not\in\sigma(X[i])$.

It is easy to prove  that the three properties act as intended:

\begin{theorem}
Given a signature $\sigma$, we have: 
\begin{itemize} 
\item (1) is true if and only if $\exists S: P_1[\sigma](S)$
\item (2) is true if and only if $\exists S: P_2[\sigma](S)$
\item (3) is true if and only if $\exists S: P_3[\sigma](S)$
\end{itemize}
\end{theorem}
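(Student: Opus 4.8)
The plan is to prove the three biconditionals separately, each by reading the property off a support set in one direction and constructing a witnessing set in the other. Throughout I read the existential $\exists S$ as ranging over well-formed support sets, i.e.\ $S\subseteq\tuple{\V{X\cdot y\cdot z}}{\sigma}$ (equivalently, over $\Support{\V{X\cdot y\cdot z}}{\sigma}{P_n}\ne\emptyset$, since a minimal satisfying $S$ exists whenever any does, $\subset$ being well-founded on finite sets). I also assume, as is standard during propagation, that every domain in $\sigma$ is non-empty. The single fact driving every part is a \emph{well-formedness duality}: writing $\V{\ell,v}\in S$ for $\exists\tau\in S.\,\tau[\ell]=v$, a well-formed $S$ can witness $\V{\ell,v}\in S$ only when $v\in\sigma(Y[\ell])$ (where $Y=\V{X\cdot y\cdot z}$); conversely, given any finite family of position/value requirements individually consistent with the domains, one realises each by a well-formed tuple (filling the remaining positions with arbitrary domain values, possible as domains are non-empty) and collects these into $S$.

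First I would dispatch the two ``covering'' cases, $(2)\Leftrightarrow\exists S.\,P_2[\sigma](S)$ and $(3)\Leftrightarrow\exists S.\,P_3[\sigma](S)$, which are symmetric. For $(2)$: given $S$ with $P_2[\sigma](S)$, each $i\in\sigma(y)$ yields $a\in\sigma(z)$ with $\V{i,a}\in S$, and well-formedness forces $a\in\sigma(X[i])$, so $a\in\sigma(X[i])\cap\sigma(z)$ and $(2)$ holds; conversely, given $(2)$, for each $i\in\sigma(y)$ pick $a_i\in\sigma(X[i])\cap\sigma(z)$ and a well-formed tuple $\tau_i$ with $\tau_i[i]=a_i$ and $\tau_i[k+2]=a_i$, so that $S=\{\tau_i\mid i\in\sigma(y)\}$ satisfies $P_2[\sigma](S)$. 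Case $(3)$ is identical with the quantifiers swapped: read $a\in\sigma(X[i])$ off $\V{i,a}\in S$ to obtain $a\in\bigcup_{i\in\sigma(y)}\sigma(X[i])$, and in reverse build one tuple $\tau_a$ per $a\in\sigma(z)$ with $\tau_a[i_a]=a$ and $\tau_a[k+1]=i_a$ for a chosen witness $i_a\in\sigma(y)$.

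The substantive case is $(1)\Leftrightarrow\exists S.\,P_1[\sigma](S)$, where I would first push the existential through the disjunction, using $\exists S.(A(S)\vee B(S))\Leftrightarrow(\exists S.A(S))\vee(\exists S.B(S))$. For the first disjunct the duality gives $\exists S.A(S)\Leftrightarrow|\sigma(y)|\ge 2$ (two distinct index values are needed and, when present, are realised by two tuples differing at position $k+1$). For the second, $\exists S.B(S)\Leftrightarrow\forall i\in\sigma(y).\,\sigma(X[i])\subseteq\sigma(z)$, since $\V{k+2,a}\in S$ is achievable exactly when $a\in\sigma(z)$. Hence $\exists S.\,P_1[\sigma](S)$ reduces to $\bigl(|\sigma(y)|\ge 2\bigr)\vee\bigl(\forall i\in\sigma(y).\,\sigma(X[i])\subseteq\sigma(z)\bigr)$, and it remains to match this against the quantified implication $(1)$.

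I expect this final reconciliation to be the main obstacle, and I would settle it by a case split on $|\sigma(y)|$. When $|\sigma(y)|\ge 2$, both sides are true: the disjunction by its first disjunct, and $(1)$ because each of its implications has a false hypothesis, as $\sigma(y)\ne\{i\}$ for every $i\in\sigma(y)$. When $\sigma(y)=\emptyset$, both sides hold vacuously (the second disjunct and the body of $(1)$ each quantify over the empty set). When $\sigma(y)=\{i_0\}$, the first disjunct fails, the second disjunct reduces to $\sigma(X[i_0])\subseteq\sigma(z)$, and $(1)$ reduces to exactly that inclusion (its hypothesis $\sigma(y)=\{i_0\}$ now holding); so the two sides coincide. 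This exhausts all cases. The only points needing care beyond the split are keeping the existential restricted to well-formed $S$ (otherwise the forward ``read-off'' implications collapse) and the standing non-emptiness assumption that makes the backward tuple constructions go through.
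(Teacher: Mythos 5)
Your proof is correct and follows essentially the same route as the paper's: read each condition off a witnessing $S$ using well-formedness of its tuples in one direction, and construct an explicit witness set (two literals for the first disjunct of $P_1$, one or two literals per domain value for the others) in the reverse direction. You are somewhat more explicit than the paper about two points it leaves implicit --- that the existential must range over well-formed $S\subseteq\tuple{\V{X\cdot y\cdot z}}{\sigma}$ for the forward implications to hold, and the degenerate case $\sigma(y)=\emptyset$ --- but this is a refinement of the same argument, not a different one.
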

\begin{proof}
The if directions are all easy.   For (1), if the first disjunct of $P_1$ is satisfied then $|\sigma(y) | > 1$ so (1) is vacuous.  If the second disjunct is satisfied, it ensures that $\sigma(X[i]) \subseteq \sigma(z)$.  If $P_2(S)$ is true then, for each element of the domain of the index variable $y$, there is a value $a \in \sigma(X[i]) \cap \sigma(z)$, establishing (2).   If $P_3(S)$ is true 
then, for any value $a$ in $\sigma(z)$ there is a value $i$ of the index variable with $a \in \sigma(X[i])$, proving that (3) holds.

For Only if, first suppose that (1) is true.   If  $|\sigma(y) | > 1$ then we can find $i,j$ to satisfy the first disjunct of $P_1$, and 
set $S = \{\pair{k+1,i},\pair{k+1,j}\}$.  Otherwise, we have $\sigma(y) = \{i\}$ and $\sigma(X[i]) \subseteq \sigma(z)$.  
We can thus set $S = \{\pair{k+2,a}|a \in \sigma(X[i])\}$. 

Suppose (2) is true.   We have $\sigma(X[i]) \cap \sigma(z) \ne \emptyset$ for each $i \in \sigma(y)$.    So for each $i$ there is thus some 
$a_i$ with $a_i \in \sigma(X[i]) \cap \sigma(z)$.   We can thus set $S = \{ \pair{i,a_i}, \pair{k+2,a_i}| i \in \sigma(y)\}$.

Suppose (3) is true.   Since $\sigma(z) \subseteq \displaystyle{\bigcup_{i\in{}\sigma(y)}\sigma(X[i])} $, we have for each $a \in \sigma(z)$ some $i_a$ such that $i_a \in \sigma(y)$ and $a \in \sigma(X[i_a])$.   We can thus set 
$S = \{ \pair{i_a,a}, \pair{k+1,i_a} \mid a \in \sigma(z)\}$.
\end{proof}


\comment{
Although Minion does not implement the generalized support structure we
describe here, it does maintain a queue of watched literals while propagating
the element constraint.  That queue is kept here as a single support set.  An
alternative would have been to have a family of supports, one for each $(i,j)$
pair.  This family of properties could be formalized by a higher order function
as follows:
\[\begin{array}{ll}
i.) & P(i,j) \definedAs \lambda{}S. \;|\sigma(y)|>1 \vee  \exists{}\tau:\tuple{X}{\sigma}. \;\V{\tau,i,j}\in{}S \wedge j\in{}\sigma(z) \\
\end{array}\]
Then, given a signature $\sigma$, this property would be instantiated once for
each $P(i,j)$ pair in the dependent product $i:\sigma(y)\times\sigma(X[i])$.  A
possible advantage of this alternative is that each $P(i,j)$ support is a
singleton.  
} 

\subsubsection{P-Admissibility and Backtrack Stability} \label{sec:element-p-admiss}

Following our methodology, we first prove that properties $P_1$, $P_2$ and $P_3$ are p-admissible.

\begin{lemma}\name{lemma:P1-is-admissible}{$P_1$ is p-admissible}
\[P\!-\!admissible(P_1)\hfill\]
\end{lemma}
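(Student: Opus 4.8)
The plan is to unfold the definition of p-admissibility (Def.~\ref{def:p-admissible}) and reduce the claim to a direct verification. So I would fix an arbitrary $\sigma$, an arbitrary $\sigma'\sqsubseteq\sigma$, and an arbitrary $S\subseteq\tuple{\V{X\cdot y\cdot z}}{\sigma}$, assume both $P_1[\sigma](S)$ and $S\subseteq\tuple{\V{X\cdot y\cdot z}}{\sigma'}$, and aim to establish $P_1[\sigma'](S)$. Throughout I would use that $\sigma'\sqsubseteq\sigma$ supplies the domain inclusions $\sigma'(y)\subseteq\sigma(y)$ and $\sigma'(X[i])\subseteq\sigma(X[i])$ for each $i$, and that $\V{k+1,i}\in S$ abbreviates $\exists\tau\in S.\,\tau[k+1]=i$ (and similarly for $\V{k+2,a}$), following the literal support property of Def.~\ref{def:literal-support}.

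Since $P_1$ is a disjunction, the natural next step is to case split on which disjunct of $P_1[\sigma](S)$ holds. The second disjunct is the easy case: assuming $\forall i:\sigma(y).\,\forall a:\sigma(X[i]).\,\V{k+2,a}\in S$, I take arbitrary $i\in\sigma'(y)$ and $a\in\sigma'(X[i])$; because narrowing only shrinks domains, $i\in\sigma'(y)\subseteq\sigma(y)$ and $a\in\sigma'(X[i])\subseteq\sigma(X[i])$, so the assumption immediately gives $\V{k+2,a}\in S$, establishing the second disjunct of $P_1[\sigma'](S)$. Note that this case uses only $\sigma'\sqsubseteq\sigma$ and not the hypothesis $S\subseteq\tuple{\V{X\cdot y\cdot z}}{\sigma'}$.

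The first disjunct is the step requiring care, and is where I expect the only real obstacle. Here the witnesses $i,j$ are drawn existentially from $\sigma(y)$, and since $\sigma'(y)\subseteq\sigma(y)$ there is no \emph{a priori} guarantee that $i,j$ still lie in the narrowed domain $\sigma'(y)$, so naively re-using the witnesses fails. This is exactly the point at which the extra hypothesis $S\subseteq\tuple{\V{X\cdot y\cdot z}}{\sigma'}$ is needed. From $\V{k+1,i}\in S$ I obtain some $\tau\in S$ with $\tau[k+1]=i$; since $\tau\in S\subseteq\tuple{\V{X\cdot y\cdot z}}{\sigma'}$, wellformedness with respect to $\sigma'$ forces $\tau[k+1]\in\sigma'(y)$ (recall $k+1$ is the index of $y$ in the schema), whence $i\in\sigma'(y)$. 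The same argument applied to $j$ gives $j\in\sigma'(y)$. The remaining conjuncts $i\ne j$, $\V{k+1,i}\in S$ and $\V{k+1,j}\in S$ mention no signature and so transfer verbatim, establishing the first disjunct of $P_1[\sigma'](S)$.

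In either case $P_1[\sigma'](S)$ holds, discharging p-admissibility. The argument is short; the only conceptually delicate point is recognising that the existential over $\sigma(y)$ in the first disjunct is rescued precisely by the wellformedness hypothesis built into the definition of p-admissibility. This is also what justifies the earlier remark that the logically equivalent form $\exists i,j:\sigma(y).\,i\ne j$ would \emph{not} yield p-admissible support: without the literals $\V{k+1,i},\V{k+1,j}$ recorded in $S$ there would be no $\tau\in S$ to pull the witnesses back into $\sigma'(y)$.
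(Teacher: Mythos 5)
Your proof is correct and follows essentially the same route as the paper's: a case split on the two disjuncts of $P_1$, with the second disjunct handled by domain shrinkage alone and the first rescued by the fact that the literals $\V{k+1,i},\V{k+1,j}$ lie in $S\subseteq\tuple{\V{X\cdot y\cdot z}}{\sigma'}$, forcing $i,j\in\sigma'(y)$. You simply spell out in more detail the step the paper compresses into ``$i,j\in\sigma'(y)$ because the two necessary literals are in $S$.''
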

\begin{proof}
We case split on the disjuncts of $P_1$. The first disjunct requires distinct
values $i,j \in \sigma(y)$. Assuming $S\subseteq \tuple{\V{X\cdot y\cdot
    z}}{\sigma'}$, $i,j \in \sigma'(y)$ because the two necessary literals are
in $S$, therefore $P_1[\sigma'](S)$ holds.

For the second disjunct of $P_1$, since $\sigma' \sqsubseteq \sigma$ we can see
that $\sigma'(y) \subseteq \sigma(y)$ and $\forall i.\:\sigma'(X[i])\subseteq
\sigma(X[i])$, therefore all necessary literals are present in $S$ and
$P_1[\sigma'](S)$ holds.
\end{proof}

\begin{lemma}\name{lemma:P2-is-admissible}{$P_2$ is p-admissible}
\[P\!-\!admissible(P_2)\hfill\]
\end{lemma}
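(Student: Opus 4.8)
The plan is to unfold the definition of p-admissibility (Def.~\ref{def:p-admissible}) and verify it directly for $P_2$, exactly following the pattern already used in Lemma~\ref{lemma:P1-is-admissible}. So I would fix signatures $\sigma$ and $\sigma'\sqsubseteq\sigma$ and a set $S\subseteq\tuple{\V{X\cdot y\cdot z}}{\sigma}$, assume both $P_2[\sigma](S)$ and $S\subseteq\tuple{\V{X\cdot y\cdot z}}{\sigma'}$, and aim to establish $P_2[\sigma'](S)$.

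Since $P_2$ is universally quantified over $i:\sigma'(y)$, I would pick an arbitrary $i\in\sigma'(y)$. Because $\sigma'\sqsubseteq\sigma$ gives $\sigma'(y)\subseteq\sigma(y)$, we have $i\in\sigma(y)$, so the hypothesis $P_2[\sigma](S)$ supplies a value $a\in\sigma(z)$ with $\V{i,a}\in S$ and $\V{k+2,a}\in S$. The two literal memberships refer only to $S$, which is unchanged, so they continue to hold under $\sigma'$; the only thing that needs work is promoting the witness from $a\in\sigma(z)$ to $a\in\sigma'(z)$, which is precisely what $P_2[\sigma'](S)$ demands.

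This single step is where the extra hypothesis $S\subseteq\tuple{\V{X\cdot y\cdot z}}{\sigma'}$ is used. Unfolding the literal-support property (Def.~\ref{def:literal-support}), from $\V{k+2,a}\in S$ I obtain a tuple $\tau\in S$ with $\tau[k+2]=a$. Since $S\subseteq\tuple{\V{X\cdot y\cdot z}}{\sigma'}$, the tuple $\tau$ is well-formed with respect to $\sigma'$, so every entry lies in its column's $\sigma'$-domain; as index $k+2$ is the position of $z$, this forces $\tau[k+2]=a\in\sigma'(z)$. Hence $a$ is a legal witness under $\sigma'$, giving $\exists a:\sigma'(z).\;\V{i,a}\in S\wedge\V{k+2,a}\in S$. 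As $i$ was arbitrary, $P_2[\sigma'](S)$ follows.

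I do not expect any serious obstacle; the argument is essentially identical in spirit to the $P_1$ case. The one point that needs care — and the natural place to slip — is recognising that the existential witness $a$ automatically migrates into the narrowed domain $\sigma'(z)$ purely because its supporting tuple is forced to be well-formed under $\sigma'$, rather than through any reasoning about the element semantics. Restricting the outer quantifier from $\sigma(y)$ to the smaller $\sigma'(y)$ only makes the universal easier to discharge, so it contributes no difficulty.
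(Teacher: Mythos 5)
Your argument is correct and is essentially the paper's own proof, just spelled out in more detail: both use that $\sigma'(y)\subseteq\sigma(y)$ shrinks the universal quantifier, and that the literal $\V{k+2,a}\in S$ together with $S\subseteq\tuple{\V{X\cdot y\cdot z}}{\sigma'}$ forces the existential witness $a$ into $\sigma'(z)$. No gaps.
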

\begin{proof}
Since $\sigma' \sqsubseteq \sigma$, $\sigma'(y)\subseteq \sigma(y)$ therefore
there are fewer (or the same) values of $i$ to consider under
$\sigma'$. Assuming $S\subseteq \tuple{\V{X\cdot y\cdot z}}{\sigma'}$, for each
$i$, $\V{k+2,a}\in S$ therefore $a\in \sigma'(z)$ and $P_2[\sigma'](S)$ holds.
\end{proof}

\begin{lemma}\name{lemma:P3-is-admissible}{$P_3$ is p-admissible}
\[P\!-\!admissible(P_3)\hfill\]
\end{lemma}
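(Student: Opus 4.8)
The plan is to mirror the proof of Lemma~\ref{lemma:P2-is-admissible}, since $P_3$ has the same quantifier shape as $P_2$ with the roles of $z$ and $y$ interchanged. Unfolding Definition~\ref{def:p-admissible}, I would fix a signature $\sigma$, a narrower signature $\sigma'\sqsubseteq\sigma$, and a set $S\subseteq\tuple{\V{X\cdot y\cdot z}}{\sigma}$, and assume both $P_3[\sigma](S)$ and $S\subseteq\tuple{\V{X\cdot y\cdot z}}{\sigma'}$; the goal is to establish $P_3[\sigma'](S)$. Recall that $P_3[\sigma](S)$ has the form $\forall a:\sigma(z).\,\exists i:\sigma(y).\,\V{i,a}\in S \wedge \V{k+1,i}\in S$.

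First I would note that the outer universally quantified domain shrinks harmlessly: since $\sigma'\sqsubseteq\sigma$ we have $\sigma'(z)\subseteq\sigma(z)$, so every value $a$ that $P_3[\sigma']$ must account for is already covered by the hypothesis $P_3[\sigma](S)$. Fixing an arbitrary $a\in\sigma'(z)\subseteq\sigma(z)$, the hypothesis supplies a witness $i\in\sigma(y)$ with $\V{i,a}\in S$ and $\V{k+1,i}\in S$. Because $S$ is left untouched, both membership facts persist verbatim under $\sigma'$, so the only remaining obligation is to check that this witness still lies in the smaller domain $\sigma'(y)$ demanded by the inner existential of $P_3[\sigma']$.

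This relocation of the witness is the crux, and it is exactly where the hypothesis $S\subseteq\tuple{\V{X\cdot y\cdot z}}{\sigma'}$ earns its keep. The fact $\V{k+1,i}\in S$ means there is a tuple $\tau\in S$ with $\tau[k+1]=i$. Since $\tau$ is a $\sigma'$-tuple and $k+1$ is the index of $y$, well-formedness forces $\tau[k+1]\in\sigma'(y)$, and hence $i\in\sigma'(y)$. Thus $i$ is a legal witness for the inner existential under $\sigma'$, and $P_3[\sigma'](S)$ follows. The companion literal $\V{i,a}\in S$ is not needed to relocate the witness.

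The only genuine obstacle is the temptation to worry that narrowing the signature shrinks \emph{both} quantifier ranges at once, so that the inner existential might become unsatisfiable --- exactly the apparent difficulty in the $P_2$ case. The resolution is uniform: the very literal that records the chosen witness ($\V{k+1,i}$ here, $\V{k+2,a}$ for $P_2$) pins that witness inside the narrowed domain as soon as $S$ is required to consist of $\sigma'$-tuples. No induction or appeal to well-foundedness is needed; the argument is a direct, case-free unfolding of the definitions.
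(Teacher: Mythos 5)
Your proof is correct and follows essentially the same route as the paper, which simply states that the $P_3$ argument is the $P_2$ argument with $z$ and $y$, $i$ and $a$, and $k+1$ and $k+2$ exchanged. You have merely spelled out that substituted argument in full detail, with the same key step: the literal $\V{k+1,i}\in S$ together with $S\subseteq\tuple{\V{X\cdot y\cdot z}}{\sigma'}$ forces the witness $i$ into $\sigma'(y)$.
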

\begin{proof}
The proof is the same as above, with $z$ and $y$ exchanged, $i$ and $a$ exchanged, and $k+1$ substituted for $k+2$.
\end{proof}

$P_1$, $P_2$ and $P_3$ are not backtrack stable according to Def.~\ref{def:backtrack-stable}. 
However, $P_2$ and $P_3$ can be straightforwardly reformulated to be backtrack stable: the universal quantifier is expanded to a conjunction using the initial signature, then each conjunct is made into an individual property, subscripted by $i$ or $a$ respectively. For example, $P_2$ is transformed as follows. 

\[
\begin{array}{ll}
P_{2,i}[\sigma](S)\definedAs i \in \sigma(y)\;\Rightarrow  &(\exists{}a:\sigma(z).\; \V{i,a}\in S \\ 
& \wedge \: \V{k+2,a}\in S) \\
\end{array}
\] 

Each of these smaller properties then requires two literals as support, or (if $i \notin \sigma(y)$) the empty set, and they are backtrack stable.  
$P_1$ can be reformulated to be backtrack stable, by expanding out the universal quantifiers in the same way as for $P_2$. $P_1$ would be subscripted by $i$ and $a$, $\forall i:\sigma(y)$ replaced with $i\in \sigma(y) \Rightarrow$, and the same for $\forall a:\sigma(X[i])$. These reformulations give a large set of properties, so for the sake of simplicity we use the original $P_1$, $P_2$ and $P_3$.

\subsubsection{Proofs of the Propagation Schema}

Now that we have established p-admissibility for each of $P_1$, $P_2$ and $P_3$
we prove the instances of the propagator schema for each of them.

\begin{theorem}[$P_1$ Support Generation]
\label{thm:p1-support-generation}
We consider $P_1$ on constraint ${\tt{Element}}(X,y,z)$. 
We claim that Def.~\ref{def:support-framework} (propagation schema) holds for $P_1$. 
\end{theorem}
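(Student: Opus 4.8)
The plan is to give a constructive proof by exhibiting the propagator as a decision procedure that inspects the narrowed signature $\sigma_1$ and returns either a $\mathrm{findNewSupport}$ witness or a $\mathrm{noNewSupport}$ justification. First I would fix $S \in \Support{\V{X\cdot y\cdot z}}{\sigma}{P_1}$, a narrowed signature $\sigma_1 \sqsubseteq \sigma$ with $\nonempty{\sigma_1}$, and the hypothesis $S \notin \Support{\V{X\cdot y\cdot z}}{\sigma_1}{P_1}$; by Lemma~\ref{lem:s-nonempty} we have $S \ne \emptyset$. The whole argument then turns on $\sigma_1(y)$, which is non-empty because $\sigma_1$ is, so it is either a singleton or has at least two elements. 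This dichotomy mirrors the two disjuncts of $P_1$: when $|\sigma_1(y)| \ge 2$ the first disjunct is available, and when $\sigma_1(y) = \{i^*\}$ only the second disjunct (\ie condition (1), $\sigma(X[i^*]) \subseteq \sigma(z)$) can yield support.

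In the case $|\sigma_1(y)| \ge 2$ I would pick any two distinct $i', j' \in \sigma_1(y)$, take $\sigma_2 = \sigma_1$ and $S' = \{\pair{k+1, i'}, \pair{k+1, j'}\}$, and discharge $\mathrm{findNewSupport}$: $S'$ satisfies the first disjunct, it is minimal (removing either literal leaves a singleton, which satisfies neither disjunct, since $\nonempty{\sigma_2}$ makes every $\sigma_2(X[i])$ non-empty and hence forces some missing $\pair{k+2,a}$), and the maximality clause $\forall \sigma_3.\ \sigma_2 \sqsubset \sigma_3 \sqsubseteq \sigma_1 \Rightarrow \Support{\V{X\cdot y\cdot z}}{\sigma_3}{P_1}=\emptyset$ is vacuous because no $\sigma_3$ strictly exceeds $\sigma_1$ while remaining $\sqsubseteq \sigma_1$. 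In the case $\sigma_1(y) = \{i^*\}$ I would set $D = \sigma_1(X[i^*]) \cap \sigma_1(z)$ and branch on whether $D$ is empty. If $D \ne \emptyset$, I would let $\sigma_2$ agree with $\sigma_1$ everywhere except $\sigma_2(X[i^*]) = D$, and take $S' = \{\pair{k+2, a} \mid a \in D\}$; since $D \subseteq \sigma_1(z) = \sigma_2(z)$ each literal is valid, $S'$ satisfies the second disjunct, and it is minimal (dropping any $\pair{k+2,a_0}$ omits a value of $\sigma_2(X[i^*])$, while the first disjunct is unavailable as $\sigma_2(y)$ is a singleton). If $D = \emptyset$, I would prove $\mathrm{noNewSupport}$: for any non-empty $\sigma_2 \sqsubseteq \sigma_1$ we have $\sigma_2(y) = \{i^*\}$, so the first disjunct fails, and $\sigma_2(X[i^*]) \cap \sigma_2(z) \subseteq D = \emptyset$ together with $\nonempty{\sigma_2}$ forces some value of $X[i^*]$ outside $\sigma_2(z)$, defeating the second disjunct.

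The step I expect to be the real work is the maximality clause of $\mathrm{findNewSupport}$ in the sub-case $D \subsetneq \sigma_1(X[i^*])$. Here I would argue that any $\sigma_3$ with $\sigma_2 \sqsubset \sigma_3 \sqsubseteq \sigma_1$ must agree with $\sigma_1$, and hence with $\sigma_2$, on every variable other than $X[i^*]$, since those domains are pinned between equal bounds; strictness therefore forces $D \subsetneq \sigma_3(X[i^*])$, so $\sigma_3(X[i^*])$ contains some $a \in \sigma_1(X[i^*]) \setminus \sigma_1(z)$. Because $\sigma_3(y) = \{i^*\}$ is still a singleton and $\sigma_3(z) = \sigma_1(z)$ does not contain $a$, condition (1) fails at $\sigma_3$, neither disjunct of $P_1$ can be supported, and $\Support{\V{X\cdot y\cdot z}}{\sigma_3}{P_1} = \emptyset$ as required. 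This argument tacitly uses that $y$, $z$ and the $X[i]$ denote distinct variables (the standard reading of \texttt{element}), so that pruning $X[i^*]$ disturbs neither $\sigma(y)$ nor $\sigma(z)$; I would state this assumption explicitly, since without it the book-keeping about which domains move would need adjustment.
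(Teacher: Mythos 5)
Your proposal is correct and follows essentially the same route as the paper's proof: the same case split on $|\sigma_1(y)|$, the same choice $\sigma_2=\sigma_1$ with a two-literal support when $|\sigma_1(y)|>1$ (the paper happens to pick $\min$ and $\max$ of $\sigma_1(y)$ rather than an arbitrary pair), the same construction $\sigma_2(X[i^*])=\sigma_1(X[i^*])\cap\sigma_1(z)$ with the literals $\pair{k+2,a}$ in the singleton case, and the same maximality and noNewSupport arguments. Your version is somewhat more explicit than the paper's about minimality of $S'$, the vacuity of the maximality clause when $\sigma_2=\sigma_1$, and the tacit assumption that $y$, $z$ and the $X[i]$ are distinct variables, but these are refinements rather than a different approach.
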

\begin{proof}
Let $C$ be an element constraint of the form ${\tt{Element}}(X,y,z)$ where
$|X|=k$ and let $\sigma$ and $\sigma_1$ be signatures mapping the variables in
$X.y.z$ to their respective domains.  We claim the following:
\[\begin{array}{l}
\forall {}S\in\Support{X}{\sigma}{P}. \\
  \abit{.5} \forall{}   \sigma_1 \sqsubseteq \sigma .\: \nonempty{\sigma_1} \Rightarrow \\
  \abit{1} S \not\in \Support{X}{\sigma_1}{P} \Rightarrow \\
  \abit{1.5} \mathrm{findNewSupport}(X, P, \sigma_1) \\
  \abit{1.5} \vee \mathrm{noNewSupport}(X, P, \sigma_1)
\end{array}\]

\[\abit{-1}\begin{array}{l}
\mathrm{findNewSupport}(X, P, \sigma_1) \definedAs \\
  \abit{0.5} (\exists{} \sigma_2\sqsubseteq\sigma_1.\: \nonempty{\sigma_2}\: \wedge \\
  \abit{1} \exists{} S'\in\Support{X}{\sigma_2}{P}. \\
  \abit{1.5} \forall{} \sigma_3.\, \sigma_2 \sqsubset \sigma_3 \sqsubseteq \sigma_1 \Rightarrow  \Support{X}{\sigma_3}{P} = \emptyset)\\
\end{array}\]

\[\begin{array}{l}
\mathrm{noNewSupport}(X, P, \sigma_1) \definedAs \\
\abit{1}\forall \sigma_2 \sqsubseteq \sigma_1.\:\nonempty{\sigma_2} \Rightarrow \\
\abit{2}\Support{X}{\sigma_2}{P} = \emptyset
\end{array}\]

The proof consists of constructing $\sigma_2$ and $S'$ for all cases, given $\sigma_1$. When $\sigma_2 \sqsubset \sigma_1$, we also prove that $\sigma_2$ is maximal (\ie~there exists no $\sigma_3$). 
\[
\abit{-2}\begin{array}{l}
|\sigma_1(y)|>1 \Rightarrow  \\
\abit{1}S'=\{ \V{k+1, min(\sigma_1(y))}, \V{k+1, max(\sigma_1(y))} \} \\
\abit{2}\wedge \sigma_2=\sigma_1 \\
\sigma_1(y)=\{i\} \Rightarrow \\
\abit{1}\sigma_2(X[i])=\sigma_1(z)\cap \sigma_1(X[i])  \\
\abit{1}\wedge\: (\forall x \in \V{X\cdot y\cdot z}.\:x\ne X[i]\;\Rightarrow \; \sigma_2(x)=\sigma_1(x))\: \\
\abit{1}\wedge\: S'=\bigcup_{b\in \sigma_2(X[a])}\{ \V{k+2, b} \}
\end{array}
\]
For the second case above, it remains to be shown that $\sigma_2$ is nonempty and maximal. We prove that $\sigma_2$ is maximal. For all values $b\in \sigma_2(X[a])$, a supporting literal $\V{z, b}$ is required in $S'$. Therefore, $P_1$ implies that $\sigma_2(X[a])\subseteq \sigma_2(z)$, hence $\sigma_2(X[a])=\sigma_1(z)\cap \sigma_1(X[a])$ is maximal.
For all other variables $w$, $\sigma_2(w)=\sigma_1(w)$, therefore $\sigma_2$ is maximal under $\sqsubseteq$.

If $\sigma_2(X[i])=\emptyset$ (\ie~$\sigma_1(z)\cap \sigma_1(X[i]) =\emptyset$), $\sigma_2$ is empty. Since $\sigma_2$ is the maximal one which satisfies $P_1$, the second disjunct (noNewSupport) of the consequent of the schema holds.
\end{proof}

\begin{theorem}[$P_2$ Support Generation]
We consider $P_2$ on constraint ${\tt{Element}}(X,y,z)$. 
We claim that Def.~\ref{def:support-framework} (propagation schema) holds for $P_2$.
\end{theorem}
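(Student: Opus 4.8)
The plan is to mirror the structure of the $P_1$ proof, but to exploit the fact that the propagator derived from $P_2$ narrows the domain of the \emph{index} variable $y$, rather than $X[i]$ or $z$. Recall that $P_2[\sigma](S)$ demands, for every $i\in\sigma(y)$, a value $a\in\sigma(z)$ with $\V{i,a}\in S$ and $\V{k+2,a}\in S$; support therefore exists at a signature exactly when $\sigma(X[i])\cap\sigma(z)\ne\emptyset$ for every $i\in\sigma(y)$. So, given the hypotheses of the schema — a support $S\in\Support{X}{\sigma}{P_2}$, a nonempty $\sigma_1\sqsubseteq\sigma$, and the assumption $S\not\in\Support{X}{\sigma_1}{P_2}$ — I would first compute the set of unsupportable indices $B=\{i\in\sigma_1(y)\mid\sigma_1(X[i])\cap\sigma_1(z)=\emptyset\}$, and define $\sigma_2$ to agree with $\sigma_1$ on every variable except $y$, setting $\sigma_2(y)=\sigma_1(y)\setminus B$.

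The proof then splits on whether $\sigma_2$ is nonempty, i.e.\ on whether $B=\sigma_1(y)$. If $B\subsetneq\sigma_1(y)$ then $\sigma_2$ is nonempty, and I would establish $\mathrm{findNewSupport}$: for each remaining $i\in\sigma_2(y)$ choose a witness $a_i\in\sigma_1(X[i])\cap\sigma_1(z)$ (nonempty by construction) and take $S'=\bigcup_{i\in\sigma_2(y)}\{\V{i,a_i},\V{k+2,a_i}\}$, thinned to a minimal subset so that $S'\in\Support{X}{\sigma_2}{P_2}$. If instead $B=\sigma_1(y)$ then $\sigma_2(y)=\emptyset$, and I would establish $\mathrm{noNewSupport}$ by showing every nonempty $\sigma'\sqsubseteq\sigma_1$ fails $P_2$: any such $\sigma'$ has $\sigma'(y)\ne\emptyset$, so it contains some $i\in\sigma_1(y)=B$, and then $\sigma'(X[i])\cap\sigma'(z)\subseteq\sigma_1(X[i])\cap\sigma_1(z)=\emptyset$ leaves no value witnessing $P_2$ at index $i$.

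The main obstacle will be the maximality clause of $\mathrm{findNewSupport}$, which requires that no $\sigma_3$ with $\sigma_2\sqsubset\sigma_3\sqsubseteq\sigma_1$ admits support. Here I would argue that, since $\sigma_2$ and $\sigma_1$ differ only at $y$, any such $\sigma_3$ must agree with both on every variable other than $y$ and must satisfy $\sigma_2(y)\subsetneq\sigma_3(y)\subseteq\sigma_1(y)$; hence $\sigma_3(y)$ contains at least one bad index $i\in B$, and because $\sigma_3(X[i])=\sigma_1(X[i])$ and $\sigma_3(z)=\sigma_1(z)$ their intersection is empty, so $P_2$ cannot hold and $\Support{X}{\sigma_3}{P_2}=\emptyset$.

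Two smaller points need care. First, minimality of $S'$: distinct indices contribute literals at distinct positions $i$, but the $\V{k+2,a_i}$ literals may coincide when two indices share a witness value, so I would either choose the witnesses to avoid redundancy or simply extract a minimal satisfying subset, appealing to finiteness of $\tuple{X\cdot y\cdot z}{\sigma_2}$. Second, the reductions $\sigma_3(X[i])=\sigma_1(X[i])$ and $\sigma_3(z)=\sigma_1(z)$ used above presume $X[i]$ and $z$ are syntactically distinct from $y$; under the framework's allowance of repeated variables this coherence caveat should be noted, but for the standard element constraint $y$ is fresh and the argument goes through unchanged. Finally, the case $B=\emptyset$, where $\sigma_2=\sigma_1$ and the maximality clause is vacuous, is subsumed by the nonempty branch and merely rebuilds a fresh support at $\sigma_1$.
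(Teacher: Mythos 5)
Your proposal is correct and follows essentially the same route as the paper's proof: you construct $\sigma_2$ by retaining exactly those $i\in\sigma_1(y)$ with $\sigma_1(X[i])\cap\sigma_1(z)\ne\emptyset$, leave all other domains unchanged, build $S'$ from a witness pair of literals per surviving index, and handle emptiness of $\sigma_2(y)$ via the noNewSupport disjunct. Your added care about thinning $S'$ to a minimal set and about maximality of $\sigma_2$ only makes explicit what the paper's shorter argument leaves implicit.
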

\begin{proof}
Let $k=|X|$, and $\sigma_1$ and $\sigma_2$ be signatures mapping the variables in
$X.y.z$ to their respective domains. 
The proof is by constructing $\sigma_2$ and $S'$ to satisfy the first disjunct of the consequent of the schema. 
\[
\abit{-1}\begin{array}{l}
\sigma_2(y)= \{i\in \sigma_1(y) \mid \exists a\in \sigma_1(z).\: a \in \sigma_1(X[i]) \} \\
\forall x \in \V{X.z}\:\sigma_2(x)=  \sigma_1(x) \\
S'= \bigcup_{i\in\sigma_2(y)}\{ \V{i, a}, \V{k+2, a} \} \\
\end{array}
\]
$\sigma_2$ is maximal: the constructed $\sigma_2$ is identical to $\sigma_1$ except for the set $\sigma_2(y)$. For each value $i$ of $\sigma_2(y)$, $P_2$ requires that there exists a value $a$ in the domains of $X[i]$ and $z$. $\sigma_2(y)$ is the maximal subset of $\sigma_1(y)$ which satisfies this condition, therefore $\sigma_2$ is maximal under $\sqsubseteq$. 

If $\sigma_2$ is empty, then (since $\sigma_2$ is maximal) the second disjunct of the consequent of the schema holds. 
\end{proof}

\begin{theorem}[$P_3$ Support Generation]
We consider $P_3$ on constraint ${\tt{Element}}(X,y,z)$. 
We claim that Def.~\ref{def:support-framework} (propagation schema) holds for $P_3$.
\end{theorem}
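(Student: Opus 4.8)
The plan is to mirror the proof given for $P_2$, exploiting the duality already noted in the proof of Lemma~\ref{lemma:P3-is-admissible}: $P_3$ behaves as $P_2$ does with $y$ and $z$ interchanged, $i$ and $a$ interchanged, and $k+1$ substituted for $k+2$. Since $P_3[\sigma](S)$ holds exactly when $\sigma(z)\subseteq\bigcup_{i\in\sigma(y)}\sigma(X[i])$, the loss of support at $\sigma_1$ means some $a\in\sigma_1(z)$ is covered by no index in $\sigma_1(y)$, and the natural response is to prune $\sigma(z)$ down to its covered values. Accordingly, given $\sigma_1$ I would construct $\sigma_2$ by setting $\sigma_2(z)=\{a\in\sigma_1(z)\mid \exists i\in\sigma_1(y).\;a\in\sigma_1(X[i])\}$ and leaving every other variable unchanged, i.e. $\sigma_2(x)=\sigma_1(x)$ for all $x\in\V{X\cdot y}$. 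For the new support I would pick, for each $a\in\sigma_2(z)$, a witness index $i_a\in\sigma_1(y)$ with $a\in\sigma_1(X[i_a])$ (finite domains make this choice constructive, e.g. take the least such $i_a$), and set $S'=\bigcup_{a\in\sigma_2(z)}\{\V{i_a,a},\V{k+1,i_a}\}$, following the literal form used for $P_2$. Because $\sigma_2$ agrees with $\sigma_1$ on $y$ and on every $X[i]$, every literal of $S'$ remains valid under $\sigma_2$, so $P_3[\sigma_2](S')$ holds.

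Next I would split on whether $\sigma_2$ is nonempty, which (since all domains other than $z$ coincide with those of the nonempty $\sigma_1$) amounts to whether $\sigma_2(z)\ne\emptyset$. If $\sigma_2(z)\ne\emptyset$ I establish findNewSupport, for which the remaining obligation is maximality: no $\sigma_3$ with $\sigma_2\sqsubset\sigma_3\sqsubseteq\sigma_1$ carries support. The key observation is that $\sigma_2$ and $\sigma_1$ agree everywhere except on $z$, so any sandwiched $\sigma_3$ must also agree with them on $\V{X\cdot y}$ and can differ from $\sigma_2$ only by $\sigma_2(z)\subsetneq\sigma_3(z)\subseteq\sigma_1(z)$. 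Hence $\sigma_3(z)$ contains some $a\in\sigma_1(z)\setminus\sigma_2(z)$, which by construction has no covering index in $\sigma_1(y)=\sigma_3(y)$ with $a\in\sigma_1(X[i])=\sigma_3(X[i])$; therefore $P_3$ is unsatisfiable at $\sigma_3$ and $\Support{X}{\sigma_3}{P_3}=\emptyset$. If instead $\sigma_2(z)=\emptyset$, then no value of $\sigma_1(z)$ is covered at all, so for any nonempty $\sigma_2'\sqsubseteq\sigma_1$ the set $\sigma_2'(z)\subseteq\sigma_1(z)$ still contains an uncovered value, giving $\Support{X}{\sigma_2'}{P_3}=\emptyset$; this is exactly noNewSupport.

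I expect the construction of $\sigma_2$ and $S'$, together with the validity of the literals under $\sigma_2$, to be entirely routine. The one step deserving care is the maximality argument: it is easy to state loosely, and its genuine content is pinning down precisely where a candidate $\sigma_3$ is permitted to differ from $\sigma_2$. Showing that $\sigma_3$ must coincide with both $\sigma_1$ and $\sigma_2$ on $\V{X\cdot y}$ — so that the extra value of $z$ it admits remains uncovered — is what makes ``maximal pruning of $z$'' rigorous, and is the only place where the $\sqsubseteq$/$\sqsubset$ bookkeeping must be done precisely. The reason this goes through cleanly, just as for $P_2$, is that the covering condition for each $a\in\sigma(z)$ is independent of the other values of $z$; the set of coverable values is therefore closed under the construction and yields a single maximal $\sigma_2$ rather than a family of incomparable candidates.
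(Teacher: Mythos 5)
Your construction is exactly the paper's: the same pruned domain $\sigma_2(z)=\{a\in\sigma_1(z)\mid\exists i\in\sigma_1(y).\,a\in\sigma_1(X[i])\}$, the same support $S'$ built from witness pairs $\V{i_a,a},\V{k+1,i_a}$, the same maximality argument, and the same handling of the empty case via noNewSupport. Your write-up is somewhat more explicit about the choice of witnesses and the $\sqsubset$/$\sqsubseteq$ bookkeeping in the maximality step, but it is the same proof.
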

\begin{proof}
Let $k=|X|$, and $\sigma_1$ and $\sigma_2$ be signatures mapping the variables in
$X.y.z$ to their respective domains. 
The proof is by constructing $\sigma_2$ and $S'$ to satisfy the first disjunct of the consequent of the schema.
\[
\abit{-1}\begin{array}{l}
\sigma_2(z)= \{a\in \sigma_1(z) \mid \exists i\in \sigma_1(y).\: a \in \sigma_1(X[i]) \} \\
\forall x \in X.y.\: \sigma_2(x)= \sigma_1(x) \\
S'= \bigcup_{a\in \sigma_2(z)}\{ \V{i, a}, \V{k+1, i} \} \\
\end{array}
\]
The constructed $\sigma_2$ is identical to $\sigma_1$ except for the set $\sigma_2(z)$. For each value $a$ of $\sigma_2(z)$, $P_3$ requires that there exists an index $i$ such that $a\in \sigma_2(X[i])$ and $i\in \sigma_2(y)$. $\sigma_2(z)$ is the maximal subset of $\sigma_1(y)$ which satisfies this condition, therefore $\sigma_2$ is maximal under $\sqsubseteq$.

If $\sigma_2$ is empty, then (since $\sigma_2$ is maximal) the second disjunct of the consequent of the schema holds. 
\end{proof}

\subsubsection{Soundness and Completeness}

Now we prove that the conjunction of the element support properties
(Def.~\ref{def:element-support-properties}) is sound and complete using the
semantics of element (Def.~\ref{def:element-semantics}).  We will write $P_e$
for the set $\{P_1,P_2,P_3\}$.

\begin{lemma}\name{lemma:element-sound}{$P_{e}$ is sound}
\[
\abit{-1}\begin{array}{l}
\forall{}\sigma.\;\mathrm{singleton}(\sigma) \Rightarrow\\ 
\abit{.5}(\Support{X}{\sigma}{P_{e}} \Rightarrow \sem{{\tt{element}}(X,y,z)}{\sigma} \ne \emptyset)
\end{array}
\]
\end{lemma}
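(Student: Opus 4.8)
The plan is to push the hypothesis $\Support{X}{\sigma}{P_e}$ through the characterisations already established and land on Theorem~\ref{thm:GJM-06}. First I would unfold the antecedents. By $\mathrm{singleton}(\sigma)$ every domain --- each $\sigma(X[j])$ together with $\sigma(y)$ and $\sigma(z)$ --- is a one-element set, so there is exactly one well-formed tuple $\tau$ over the schema $X\cdot y\cdot z$, and $\tau$ is automatically coherent (a repeated variable has a single value, so all its columns agree). By Def~\ref{def:collection-support}, $\Support{X}{\sigma}{P_e}$ unfolds to $\Support{X}{\sigma}{P_i}\ne\emptyset$ for $i=1,2,3$. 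A support set is non-empty exactly when its property is satisfiable --- a minimal witness exists whenever any witness does, since $\subseteq$ is well-founded on finite tuple sets --- so this is equivalent to $\exists S.\, P_i[\sigma](S)$ for each $i$. By the theorem immediately preceding this lemma (that $P_1,P_2,P_3$ act as intended), these three satisfiability statements are exactly conditions $(1)$, $(2)$ and $(3)$ of Theorem~\ref{thm:GJM-06}.

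From here there are two ways to finish. The short route: conditions $(1)$--$(3)$ hold iff $\sigma$ is GAC for $\texttt{element}(X,y,z)$ by Theorem~\ref{thm:GJM-06}, and a GAC signature with non-empty domains must have a non-empty relation. The latter is immediate from Def~\ref{def:gac}: instantiating its biconditional (with $\sigma'=\sigma$) at the column of $z$ and its unique value $c\in\sigma(z)$, the left side $c\in\sigma(z)$ is true, forcing some $\tau\in\sem{\texttt{element}(X,y,z)}{\sigma}$ whose entry in that column equals $c$; hence the relation is inhabited. The concrete route, which uses the singleton hypothesis directly and needs only $P_2$: writing $\sigma(y)=\{i\}$, $\sigma(z)=\{c\}$ and $\sigma(X[j])=\{v_j\}$, condition $(2)$ instantiated at $i$ gives $\sigma(X[i])\cap\sigma(z)\ne\emptyset$; as both sides are singletons this forces $i$ to be a legal index and $v_i=c$. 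Then the unique tuple $\tau$ satisfies $\tau[k+1]=i\in\{1..k\}$ and $\tau[k+2]=c=v_i=\tau[i]=\tau[\tau[k+1]]$, so $\tau\in\sem{\texttt{element}(X,y,z)}{\sigma}$ by Def~\ref{def:element-semantics}.

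I expect the only friction to be definitional bookkeeping rather than any real difficulty. The point to be careful about is index legality: that $i\in\sigma(y)$ at a singleton genuinely licenses writing $\sigma(X[i])$ and reading off $\tau[i]$, i.e.\ that support for $P_2$ (or $P_1$) rules out an out-of-range value of the index variable. I would also verify that the single candidate tuple really lies in $\tuple{X\cdot y\cdot z}{\sigma}$ (well-formed and coherent), so that meeting the index condition of Def~\ref{def:element-semantics} suffices for membership in the relation. It is worth noting that the short route never actually uses $\mathrm{singleton}(\sigma)$ and so establishes a soundness statement stronger than the one discussed after Def~\ref{def:soundness}; this is harmless because \texttt{element} is a GAC propagator, but the lemma as stated only asks for the singleton case, which the concrete route settles most transparently.
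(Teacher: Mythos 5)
Your proposal is correct, and your ``concrete route'' is essentially the paper's own argument: the paper also fixes the unique tuple $\tau$ encoded by the singleton signature and shows it satisfies the element semantics, the only difference being that it extracts the witness from the second disjunct of $P_1$ (whose valid support forces $\sigma(X[i])\subseteq\sigma(z)$, hence equality of the two singletons) rather than from $P_2$ as you do; at a singleton signature these give the same conclusion $\tau[i]=\tau[k+2]$, so the choice is immaterial. Your ``short route'' is a genuinely different argument: it factors through the unnamed theorem relating satisfiability of $P_1,P_2,P_3$ to conditions (1)--(3), then through Theorem~\ref{thm:GJM-06} and Def.~\ref{def:gac} to conclude that a GAC signature with non-empty domains has a non-empty relation. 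That route is more modular and, as you observe, proves a statement stronger than soundness (it never uses $\mathrm{singleton}(\sigma)$), but it leans on Theorem~\ref{thm:GJM-06}, which the paper imports from the literature without proof, whereas the paper's proof is self-contained from the support properties and Def.~\ref{def:element-semantics}. Your flagged caveats --- that $\Support{X}{\sigma}{P_i}\ne\emptyset$ is equivalent to $\exists S.\,P_i[\sigma](S)$ via minimality, that the index value $i\in\sigma(y)$ must be in range for $\sigma(X[i])$ and $\tau[\tau[k+1]]$ to make sense, and that the candidate tuple must lie in $\tuple{X\cdot y\cdot z}{\sigma}$ --- are all real but are glossed over equally by the paper's proof, so they do not distinguish the two arguments.
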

\begin{proof}
Let $\sigma$ be an arbitrary singleton signature. Since $\sigma$ is a singleton
it encodes a single tuple (say $\tau$).  Assume $\Support{X}{\sigma}{P_{e}}$
holds. That is, supports for $P_1[\sigma]$, $P_2[\sigma]$ and $P_3[\sigma]$ are
non empty.
\comment{
\[\begin{array}{ll}
i.) & P_1[\sigma](S) \definedAs \\
& \abit{1} ( \exists i,j : \sigma(y) . \abit{1} i\ne j\: \wedge\: \V{k+1,i} \in S \wedge \V{k+1,j} \in S ) \\
& \abit{1} \vee  \;\; \forall{}i:\sigma(y).\;\forall{}a:\sigma(X[i]). \;\V{k+2, a}\in S \\
ii.) & P_2[\sigma](S)\definedAs \forall{}i:\sigma(y).\;\exists{}a:\sigma(z).\; \V{i,a}\in S \wedge \V{k+2,a}\in S \\
iii.) & P_3[\sigma](S)\definedAs \forall{}a:\sigma(z).\; \exists{}i:\sigma(y).\; \V{i,a}\in S \wedge \V{k+1,i}\in S
\end{array}\]
} Now, consider $P_1$. Since $|\sigma(y)|=1$ we know the first disjunct can not
hold and so we must have support for the second. Since $\sigma(y)\ne \emptyset$ we
know that there is a single tuple supporting the second disjunct of $P_1$ and
since $|\sigma(X[i])|=1$, to support $P_1$, $\tau$ must have the form
$\V{x_1,\cdots,x_{i-1},a,x_{i+1},\cdots,x_k,i,a}$. This same tuple supports
$P_2$ and $P_3$.  This tuple is clearly in
$\sem{{\tt{element}}(X,y,z)}{\sigma}$ and so soundness holds.
\end{proof}

\begin{theorem}\name{thm:P1-complete}{$\{P_1\}$ complete}
\[
\abit{-1}\begin{array}{l}
\forall{}\sigma.\;\; \sem{{\tt{element(X,y,z)}}}{\sigma} \ne \emptyset \;\Rightarrow \\
\abit{1}\exists{}\sigma'\sqsubseteq{}\sigma.\\
\abit{2}\sem{{\tt{element(X,y,z)}}}{\sigma'}=\sem{{\tt{element(X,y,z)}}}{\sigma}\\
\abit{2} \wedge \: \Support{Y}{\sigma'}{\{P_1\}}
\end{array}
\]
\end{theorem}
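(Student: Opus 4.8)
The plan is to reduce the statement to a claim about GAC condition~(1) of Theorem~\ref{thm:GJM-06} and then construct the refined signature $\sigma'$ explicitly. By Definition~\ref{def:collection-support}, $\Support{Y}{\sigma'}{\{P_1\}}$ (writing $Y=\V{X\cdot y\cdot z}$ for the element schema) merely asserts $\Support{Y}{\sigma'}{P_1}\neq\emptyset$, and by Definition~\ref{def:support} this holds exactly when $\exists S.\,P_1[\sigma'](S)$: any witnessing $S$ lies in the finite set $\tuple{Y}{\sigma'}$, so it contains a minimal witnessing subset, which is a genuine support. By the equivalence established immediately after Definition~\ref{def:element-support-properties}, $\exists S.\,P_1[\sigma'](S)$ is in turn equivalent to condition~(1) holding at $\sigma'$. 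Hence it suffices to find $\sigma'\sqsubseteq\sigma$ with $\sem{\texttt{element}(X,y,z)}{\sigma}=\sem{\texttt{element}(X,y,z)}{\sigma'}$ at which condition~(1) holds. Moreover, since $\sigma'\sqsubseteq\sigma$ always gives $\sem{\texttt{element}(X,y,z)}{\sigma'}\subseteq\sem{\texttt{element}(X,y,z)}{\sigma}$, I only need to prove the reverse inclusion to obtain the equality; this is precisely the $\subseteq$ form demanded by Definition~\ref{def:completeness}.

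The easy case is when condition~(1) already holds at $\sigma$: I would take $\sigma'=\sigma$ and both requirements hold at once. Note that condition~(1) can only fail through its antecedent $\sigma(y)=\{i\}$ being true, so a failure of~(1) forces $|\sigma(y)|=1$; in particular the case $|\sigma(y)|>1$ makes~(1) vacuous and is covered here. Throughout, all domains are non-empty, because $\sem{\texttt{element}(X,y,z)}{\sigma}\neq\emptyset$ provides a well-formed satisfying tuple whose entries witness non-emptiness of every domain, $\sigma(y)$ included.

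It remains to treat the failing case, where $\sigma(y)=\{i\}$ for a single $i$ and $\sigma(X[i])\not\subseteq\sigma(z)$. Following the construction in the proof of Theorem~\ref{thm:p1-support-generation}, I would define $\sigma'$ by narrowing only $X[i]$, setting $\sigma'(X[i])=\sigma(X[i])\cap\sigma(z)$ and $\sigma'(w)=\sigma(w)$ for every other variable $w$. Then $\sigma'\sqsubseteq\sigma$ is immediate, $\sigma'(y)=\{i\}$ is unchanged, and $\sigma'(X[i])\subseteq\sigma(z)=\sigma'(z)$, so condition~(1) holds at $\sigma'$. Non-emptiness of $\sigma'$ reduces to $\sigma(X[i])\cap\sigma(z)\neq\emptyset$, which I obtain from any solution below.

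The hard part is showing the narrowing loses no solutions, i.e. $\sem{\texttt{element}(X,y,z)}{\sigma}\subseteq\sem{\texttt{element}(X,y,z)}{\sigma'}$. I would take arbitrary $\tau\in\sem{\texttt{element}(X,y,z)}{\sigma}$ and use the semantics of Definition~\ref{def:element-semantics}: since $\sigma(y)=\{i\}$, well-formedness forces $\tau[k+1]=i$, and the defining conditions give $\tau[k+2]=\tau[\tau[k+1]]=\tau[i]$. Well-formedness also gives $\tau[i]\in\sigma(X[i])$ and $\tau[k+2]\in\sigma(z)$, so $\tau[i]=\tau[k+2]\in\sigma(X[i])\cap\sigma(z)=\sigma'(X[i])$, which incidentally supplies the non-emptiness used above. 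As the only altered domain is that of $X[i]$ and $\tau[i]$ lies in the new domain, $\tau$ is still a well-formed $Y$-tuple under $\sigma'$, and the defining relation depends only on the values of $\tau$ rather than the signature, so $\tau$ remains satisfying; thus $\tau\in\sem{\texttt{element}(X,y,z)}{\sigma'}$ and the equality follows. The only genuinely delicate point is keeping the element indexing straight — that $y$ at position $k+1$ selects the $X$-entry whose value must equal the $z$-entry at $k+2$; everything else is routine set manipulation.
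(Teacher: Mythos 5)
Your proof is correct and follows the same overall strategy as the paper's: dismiss the case where $P_1$ already has support at $\sigma$, observe that failure forces $\sigma(y)=\{i\}$, narrow a single domain to the intersection $\sigma(X[i])\cap\sigma(z)$, and verify that no solution is lost because every solution tuple $\tau$ has $\tau[k+1]=i$ and $\tau[i]=\tau[k+2]$ lying in that intersection. The one substantive difference is \emph{which} domain you narrow, and your choice is the right one. The paper's proof sets $\sigma_1(z)=\sigma(z)\cap\sigma(X[i])$ and leaves $X[i]$ untouched; but the second disjunct of $P_1$ demands a literal $\V{k+2,a}$ for every $a\in\sigma_1(X[i])$, which requires $\sigma_1(X[i])\subseteq\sigma_1(z)$, and after shrinking only $z$ that inclusion still fails whenever it failed before (any $a\in\sigma(X[i])\setminus\sigma(z)$ remains a counterexample). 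Narrowing $X[i]$ to $\sigma(X[i])\cap\sigma(z)$, as you do, is what actually re-establishes the second disjunct, and it agrees with the construction the paper itself uses in the propagation-schema proof for $P_1$ (Theorem~\ref{thm:p1-support-generation}), where $\sigma_2(X[i])=\sigma_1(z)\cap\sigma_1(X[i])$. Your additional bookkeeping --- reducing $\Support{Y}{\sigma'}{\{P_1\}}$ to $\exists S.\,P_1[\sigma'](S)$ via finiteness and minimality, and obtaining the stated equality of semantics from monotonicity plus the single inclusion $\sem{{\tt element}(X,y,z)}{\sigma}\subseteq\sem{{\tt element}(X,y,z)}{\sigma'}$ --- is sound and, if anything, more careful than the paper's treatment.
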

\begin{proof}
Assume $\sem{{\tt{element}}(X,y,z)}{\sigma}\ne \emptyset$ for arbitrary $\sigma$.
If $\Support{Y}{\sigma}{P_1} \ne{}\emptyset$ then the theorem is trivially true, so
we assume that $\Support{Y}{\sigma}{P_1}=\emptyset$ and construct a signature
$\sigma'$ that does not eliminate any solutions from the constraint and in which
$P_1$ has support.

The first disjunct of $P_1$ is supported whenever $|\sigma(y)|>1$ and so if
$P_1$ is not supported $\sigma(y)=\{i\}$ or $\sigma(y)=\emptyset$; by assumption no
domain of $\sigma$ is empty and so $\sigma(y)=\{i\}$.  To falsify the second
disjunct of $P_1$ when $\sigma(y)=\{i\}$, there must be some $a\in\sigma(X[i])$
such that the literal $\V{k+2,a}$ can not be supported.  This happens for any
$a\in\sigma(X[i])$ where $a\not\in\sigma(z)$.  Let $\sigma_1$ be a
signature that is just like $\sigma$ except that \[\sigma_1(z) =
\sigma(z)\cap\sigma(X[i])\] Since the constraint is non-empty the
intersection is non-empty.  The second disjunct of $P_1$ supports this new
signature so it supports $P_1$.  Clearly $\sigma_1\sqsubseteq\sigma$ and so it
only remains to show that the meaning of the constraint does not change under
the new signature.  It is enough to show that
\[\sem{{\tt{element}}(X,y,z)}{\sigma}\subseteq\sem{{\tt{element}}(X,y,z)}{\sigma_1}\]
Assume $\tau\in\sem{{\tt{element}}(X,y,z)}{\sigma}$.  Then
$\tau\in\tuple{\V{X\cdot y\cdot z}}{\sigma}$ is coherent and  is of the form
\[\tau=\V{x_1,\cdots,x_{i-1},a,x_{i+1},\cdots,x_k,i,a}\] 
Since $\tau$ is an $\tuple{\V{X\cdot y\cdot z}}{\sigma}$, we know $\tau[j]\in\sigma(X[j])$ for
all $j\in\{1..k+2\}$.  To construct $\sigma_1$ we simply eliminated elements
$b\in\sigma(z)$ such that $b\not\in\sigma(X[i])$ so since
$a\in\sigma(X[i])$, $a\in\sigma_1(X[i])$ and $a\in\sigma_1(z)$ and so
$\tau\in\sem{{\tt{element}}(X,y,z)}{\sigma_1}$.
\end{proof}

\begin{theorem}\name{thm:P2-complete}{$\{P_2\}$ complete}
\[
\abit{-1}\begin{array}{l}
\forall{}\sigma.\;\; \sem{{\tt{element(X,y,z)}}}{\sigma} \ne \emptyset \;\Rightarrow \\
\abit{1}\exists{}\sigma'\sqsubseteq{}\sigma.\\
\abit{2}\sem{{\tt{element(X,y,z)}}}{\sigma'}=\sem{{\tt{element(X,y,z)}}}{\sigma}\\
\abit{2} \wedge \: \Support{Y}{\sigma'}{\{P_2\}}
\end{array}
\]
\end{theorem}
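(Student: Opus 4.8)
The plan is to follow the same template as the proof of $\{P_1\}$ completeness (Theorem~\ref{thm:P1-complete}). First I would fix an arbitrary $\sigma$ and assume $\sem{{\tt{element}}(X,y,z)}{\sigma}\ne\emptyset$. If $P_2$ already has support at $\sigma$, that is $\Support{Y}{\sigma}{P_2}\ne\emptyset$, then the claim holds trivially by taking $\sigma'=\sigma$. So the interesting case is when $\Support{Y}{\sigma}{P_2}=\emptyset$, which by the reading of $P_2$ means there is at least one index value $i^*\in\sigma(y)$ with $\sigma(X[i^*])\cap\sigma(z)=\emptyset$. Recall that $P_2$ is supported exactly when $\sigma(X[i])\cap\sigma(z)\ne\emptyset$ for every $i\in\sigma(y)$, so an unsupported $P_2$ is precisely a violation of condition $(2)$ of Theorem~\ref{thm:GJM-06}.

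The natural construction is to leave every domain unchanged except to narrow the index domain to
\[\sigma'(y)\definedAs\{i\in\sigma(y)\mid\sigma(X[i])\cap\sigma(z)\ne\emptyset\}.\]
Clearly $\sigma'\sqsubseteq\sigma$, since only some values are deleted from $\sigma(y)$. Moreover $P_2$ has support at $\sigma'$: for each surviving $i\in\sigma'(y)$ we may choose some $a\in\sigma(X[i])\cap\sigma(z)=\sigma'(X[i])\cap\sigma'(z)$ and collect the two literals $\V{i,a}$ and $\V{k+2,a}$ into a set $S'$, which witnesses $P_2[\sigma'](S')$ and hence $\Support{Y}{\sigma'}{\{P_2\}}$.

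The crux of the argument, and the only nonroutine step, is showing that this narrowing preserves the meaning of the constraint, $\sem{{\tt{element}}(X,y,z)}{\sigma}=\sem{{\tt{element}}(X,y,z)}{\sigma'}$. Because $\sigma'\sqsubseteq\sigma$, the inclusion $\sem{{\tt{element}}(X,y,z)}{\sigma'}\subseteq\sem{{\tt{element}}(X,y,z)}{\sigma}$ is immediate. For the reverse inclusion I would take any $\tau\in\sem{{\tt{element}}(X,y,z)}{\sigma}$, which by Def.~\ref{def:element-semantics} has the form $\tau=\V{x_1,\cdots,x_{i-1},a,x_{i+1},\cdots,x_k,i,a}$ with index value $i=\tau[k+1]\in\sigma(y)$ and $a=\tau[i]=\tau[k+2]$. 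Wellformedness of $\tau$ forces $a\in\sigma(X[i])$ and $a\in\sigma(z)$, so $\sigma(X[i])\cap\sigma(z)\ne\emptyset$ and hence $i\in\sigma'(y)$. Since every other domain is untouched, $\tau$ is still a wellformed coherent tuple of the required form under $\sigma'$, giving $\tau\in\sem{{\tt{element}}(X,y,z)}{\sigma'}$. The key observation powering this step is that an index value $i$ with empty $\sigma(X[i])\cap\sigma(z)$ can never appear in any solution, so deleting such an $i$ from $\sigma(y)$ cannot discard a solution. This is the dual of the $P_1$ argument, where $\sigma(z)$ was narrowed rather than $\sigma(y)$, and I expect the meaning-preservation step to be the only place requiring genuine care.
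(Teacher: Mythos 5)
Your proof is correct, but it takes a genuinely different --- and more careful --- route than the paper's. The paper's own proof is a one-liner: it asserts that if $P_2[\sigma]$ is unsupported then $\sigma(X[i])\cap\sigma(z)=\emptyset$ for some $i\in\sigma(y)$, and that this is ``impossible'' given $\sem{{\tt{element}}(X,y,z)}{\sigma}\ne\emptyset$, so that $\sigma'=\sigma$ always works and completeness holds trivially. That inference is too quick: non-emptiness of the constraint only guarantees $\sigma(X[i_0])\cap\sigma(z)\ne\emptyset$ for the particular index $i_0$ occurring in some solution tuple, not for every $i\in\sigma(y)$ as $P_2$ demands (for instance $\sigma(X[1])=\{5\}$, $\sigma(X[2])=\{6\}$, $\sigma(y)=\{1,2\}$, $\sigma(z)=\{5\}$ admits a solution via $i_0=1$ yet violates condition (2) at $i=2$). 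Your construction --- pruning from $\sigma(y)$ exactly those indices $i$ with $\sigma(X[i])\cap\sigma(z)=\emptyset$, exhibiting the two-literals-per-surviving-index support $S'$, and checking that every solution tuple's index value survives the pruning so that the meaning of the constraint is unchanged --- is the honest dual of the paper's $P_1$ argument (which narrows $\sigma(z)$ rather than $\sigma(y)$) and covers the case the paper's proof glosses over; what the paper's shorter argument buys, when it applies, is that no narrowing is needed at all. The only small addition worth making to your write-up is the remark that $\sigma'(y)\ne\emptyset$, since it contains the index value of any witnessing solution tuple, so the constructed signature has no empty domain.
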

\begin{proof}
Note that if $P_2[\sigma]$ is unsupported then $\sigma(X[i])\cap\sigma(z) =
\emptyset$.  But since we assume that $\sem{{\tt{element(X,y,z)}}}{\sigma} \ne \emptyset$,
this is impossible and so $P_2[\sigma]$ must be supported and completeness
trivially holds.
\end{proof}

\begin{theorem}\name{thm:P3-complete}{$\{P_3\}$ complete}
\end{theorem}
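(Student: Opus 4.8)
The plan is to follow the same pattern as the completeness proofs for $\{P_1\}$ (Theorem~\ref{thm:P1-complete}) and $\{P_2\}$ (Theorem~\ref{thm:P2-complete}). Recall that $P_3$ is supported exactly when $\sigma(z)\subseteq\bigcup_{i\in\sigma(y)}\sigma(X[i])$: a support consists, for each $a\in\sigma(z)$, of the two literals $\V{i,a}$ and $\V{k+1,i}$ for some witnessing index $i\in\sigma(y)$ with $a\in\sigma(X[i])$. So if $\Support{Y}{\sigma}{\{P_3\}}\ne\emptyset$ already holds I take $\sigma'=\sigma$ and completeness is immediate. The interesting case is when $P_3$ has no support at $\sigma$, i.e.\ some $a\in\sigma(z)$ fails to lie in any $\sigma(X[i])$ for $i\in\sigma(y)$.

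First I would construct the refined signature $\sigma'$ that agrees with $\sigma$ on every variable except $z$, where I set
\[\sigma'(z)=\{a\in\sigma(z)\mid \exists i\in\sigma(y).\,a\in\sigma(X[i])\}.\]
By construction $\sigma'\sqsubseteq\sigma$ (only $z$ is narrowed), and since $\sigma'(y)=\sigma(y)$ and $\sigma'(X[i])=\sigma(X[i])$ for all $i$, I immediately obtain $\sigma'(z)\subseteq\bigcup_{i\in\sigma'(y)}\sigma'(X[i])$, so $P_3$ is supported at $\sigma'$. Nonemptiness of $\sigma'(z)$ follows because $\sem{{\tt{element}}(X,y,z)}{\sigma}\ne\emptyset$, so at least one value of $z$ is used by a solution and hence survives the restriction.

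The key step is showing that no solutions are lost, i.e.\ $\sem{{\tt{element}}(X,y,z)}{\sigma}=\sem{{\tt{element}}(X,y,z)}{\sigma'}$. The inclusion $\sem{{\tt{element}}(X,y,z)}{\sigma'}\subseteq\sem{{\tt{element}}(X,y,z)}{\sigma}$ is automatic from $\sigma'\sqsubseteq\sigma$, so (as in the $\{P_1\}$ proof) it is enough to establish the reverse. For this I would take an arbitrary $\tau\in\sem{{\tt{element}}(X,y,z)}{\sigma}$, which by Def.~\ref{def:element-semantics} has the form $\V{x_1,\cdots,x_{i-1},a,x_{i+1},\cdots,x_k,i,a}$ with $i=\tau[k+1]\in\sigma(y)$ and $a=\tau[i]=\tau[k+2]\in\sigma(X[i])\cap\sigma(z)$. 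The only component whose domain was narrowed is $\tau[k+2]=a$; but since $a\in\sigma(X[i])$ with $i\in\sigma(y)$, the value $a$ meets the defining condition of $\sigma'(z)$, so $a\in\sigma'(z)$. Every other component lies in the same, unchanged, domain, and the element conditions $\tau[k+1]\in\{1..k\}$ and $\tau[k+2]=\tau[\tau[k+1]]$ are unaffected, so $\tau$ remains a well-formed coherent satisfying tuple under $\sigma'$, giving $\tau\in\sem{{\tt{element}}(X,y,z)}{\sigma'}$.

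I do not expect a genuine obstacle here: the argument is essentially the $z$-side mirror of the $\{P_1\}$ proof, differing only in that the restriction of $\sigma(z)$ is taken against the union $\bigcup_{i\in\sigma(y)}\sigma(X[i])$ rather than against a single $\sigma(X[i])$. The one point requiring care is to confirm that narrowing $\sigma(z)$ alone leaves $\sigma'(y)$ and each $\sigma'(X[i])$ untouched, so that the support condition evaluated at $\sigma'$ genuinely coincides with the union over the original domains; this is precisely what guarantees that deleting the unsupported values of $z$ cannot in turn destroy the support we are establishing.
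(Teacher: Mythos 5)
Your proposal is correct and follows essentially the same route as the paper: both handle the trivial case where support already exists, then construct $\sigma'$ by setting $\sigma'(z)=\sigma(z)\cap\bigcup_{i\in\sigma(y)}\sigma(X[i])$ while leaving all other domains unchanged, and argue that the removed values of $z$ occur in no satisfying tuple so no solutions are lost. Your write-up is simply more explicit than the paper's (which asserts the preservation of solutions in one sentence), spelling out the form of a satisfying tuple and checking each component; this adds rigour but no new idea.
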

\begin{proof}
If there is no support for $P_3[\sigma]$ then
\[\exists{}a\in\sigma(z).\:\forall{}i\in\sigma(y).\:a\not\in\sigma(X[i])\]
Just let $\sigma'$ be the same as $\sigma$ except that we remove all such
elements from the domain of $z$ in $\sigma'$.
\[\sigma'(z) = \sigma(z)\cap \bigcup_{i\in{\sigma(y)}}\sigma(X[i])\]
Clearly $\sigma'(z)\subset\sigma(z)$.  The elements that have been removed
could not be included in a solution of $\sem{{\tt{element(X,y,z)}}}{\sigma}$
and so we have lost no answers. Thus, we have shown $P_3$ is complete.
\end{proof}

\begin{corollary}\name{corollary:element-complete}{$P_{e}$ is complete}
\end{corollary}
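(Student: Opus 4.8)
The plan is to obtain this corollary as an immediate consequence of the Local Completeness theorem (Theorem~\ref{thm:local-completeness}), feeding it the three single-property completeness results just established. Recall that $P_e = \{P_1, P_2, P_3\}$ and that Theorem~\ref{thm:local-completeness} asserts that a collection $\mathcal{P}$ is complete whenever each singleton subset $\{P_i\}$ is complete. Thus the entire task reduces to checking that the hypotheses of that theorem hold for our particular collection, rather than re-deriving anything about the element semantics.

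First I would observe that Theorems~\ref{thm:P1-complete}, \ref{thm:P2-complete} and \ref{thm:P3-complete} assert exactly that $\{P_1\}$, $\{P_2\}$ and $\{P_3\}$ are each complete with respect to the element constraint under Def.~\ref{def:element-semantics}. These statements are phrased in precisely the form that Definition~\ref{def:completeness} requires for singleton collections (each uses $\Support{Y}{\sigma'}{\{P_i\}}$), so they match the premises of Theorem~\ref{thm:local-completeness} instantiated with $\mathcal{P} = P_e$ over the schema $\V{X\cdot y\cdot z}$. I would therefore simply invoke Theorem~\ref{thm:local-completeness} to conclude $\mathit{complete}(P_e)$.

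The one point that merits attention — and the reason this is a corollary rather than a restatement — is that the refined witnessing signatures produced by the individual completeness proofs need not agree: the $\sigma'$ constructed for $P_1$ narrows $\sigma(z)$ to $\sigma(z)\cap\sigma(X[i])$, whereas the one for $P_3$ narrows $\sigma(z)$ to $\sigma(z)\cap\bigcup_{i\in\sigma(y)}\sigma(X[i])$, and after narrowing for one property a second property may lose support. This is exactly the difficulty that the proof of Theorem~\ref{thm:local-completeness} already resolves: it iterates the single-property narrowings, each step preserving all solutions (so $\sem{C}{\sigma}\subseteq\sem{C}{\sigma'}$), and it terminates because $\sqsubset$ is well-founded on finite-domain signatures (the Signature Inclusion Well-founded lemma). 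Since that reconciliation is discharged once and for all inside Theorem~\ref{thm:local-completeness}, the only genuine obligation here is to verify that our three theorems supply its premises in the expected form, which they do; no iteration argument need be repeated.
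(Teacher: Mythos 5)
Your proposal matches the paper's proof exactly: the corollary is obtained by applying Theorem~\ref{thm:local-completeness} to the singleton completeness results for $P_1$, $P_2$ and $P_3$. Your additional remarks on why the iteration inside that theorem is what reconciles the differing witnessing signatures are accurate but not required; the paper's own proof is the one-line invocation you give in your second paragraph.
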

\begin{proof}
The completeness of $P_e$ follows from local completeness
(Thm.~\ref{thm:local-completeness}) and the completeness of $P_1$, $P_2$ and
$P_3$.
\end{proof}

\subsubsection{Discussion}

The propagators derived here to enforce GAC on the element constraint are not identical to those presented by Gent et al.~\cite{Gent_Jefferson_Miguel06}. However they do follow the same general scheme. The main difference is that the propagators here use dynamic literal triggers in place of watched literals and a static assignment trigger. The concept of generalized support has allowed us to create these propagators within one formal framework. 

\subsection{New Watched Literal Propagators for Occurrence Constraints}

The two constraints {\tt occurrenceleq}$(X, a, c)$ and {\tt occurrencegeq}$(X, a, c)$ (very similar to \texttt{atmost} and \texttt{atleast}) 
restrict the number of occurrences of a value in a vector of variables. If $\mathrm{occ}(X, a)$ is the occurrences of value $a$ in $X$, {\tt occurrenceleq} states that $\mathrm{occ}(X, a)\le c$ and {\tt occurrencegeq} states that $\mathrm{occ}(X, a)\ge c$. 

Occurrence constraints arise in many problems. For example, in a round-robin tournament schedule, it may be required that no team plays more than twice at each stadium~\cite{csplib-roundrobin}, represented by occurrenceleq constraints. In car sequencing (car factory scheduling), occurrence constraints may be used to avoid placing too much demand on a work-station~\cite{csplib-carseq}. 

First we present the formal semantics of occurrenceleq and occurrencegeq, followed by support properties for the two constraints.

\begin{definition}\name{def:occurrenceleq-semantics}{Occurrenceleq Semantics}
\[
\abit{-1}\begin{array}{l}
\sem{{\tt{occurrenceleq}}(X,a,c)}{\sigma} = \V{X,R_X} \abit{0.5} {\rm where\ } \\
\abit{1} R_X = \{ \: \tau{}\in \tuple{X}{\sigma} \mid \\
\abit{8} \left| \left\{ i \mid \tau[i] = a \right\}\right| \: \le\: c \quad \}
\end{array}
\]
\end{definition}

\begin{definition}\name{def:occurrencegeq-semantics}{Occurrencegeq Semantics}
\[
\abit{-1}\begin{array}{l}
\sem{{\tt{occurrencegeq}}(X,a,c)}{\sigma} = \V{X,R_X} \abit{0.5} {\rm where\ } \\
\abit{1} R_X = \{ \: \tau{}\in \tuple{X}{\sigma} \mid \\
\abit{8} \left| \left\{ i \mid \tau[i] = a \right\}\right| \: \ge\: c \quad \}
\end{array}
\]
\end{definition}

\subsubsection{Support Properties}

\begin{definition}\name{def:occurrence-support-properties}{Occurrence Support Properties}
Given a schema $X$, value $a$ and occurrence count $c$, $P_l$ is the support property for the {\tt occurrenceleq} constraint, and similarly $P_g$ is the property for {\tt occurrencegeq}.
\[\begin{array}{rl}
P_l[\sigma](S) \definedAs & ( \exists I \subseteq \{1\ldots |X|\}.\\
& |I| = (|X|-c+1) \wedge \\
& \forall i \in I. \:\exists b\ne a.\: \langle i,b \rangle \in S )\\
& \vee \\
& (\exists I \subseteq \{1\ldots |X|\}.\\ 
& |I| = (|X|-c) \: \wedge \\
& \forall i\in I.\: a\notin \sigma(X[i]))\\
\end{array}\]
\[\begin{array}{rl}
P_g[\sigma](S) \definedAs & ( \exists I \subseteq \{1\ldots |X|\}.\\
& |I| = (c+1) \: \wedge \\
& \forall i \in I.\: \langle i,a \rangle \in S )\\
& \vee \\
& (\exists I \subseteq \{1\ldots |X|\}.\\
& |I| = c \:\: \wedge \\
& \forall i\in I.\: \nexists b\in \sigma(X[i]) . \: b\ne a)\\
\end{array}\]
\end{definition}

$P_g$ is slightly simpler, so we consider it first. There are two forms of support which can satisfy $P_g$, corresponding to the two disjuncts. The first disjunct can be satisfied if $c+1$ variables have $a$ in their domain, by a support set which contains $c+1$ literals mapping distinct variables to $a$. The second disjunct is satisfied if $c$ variables are {\em set} to $a$. In this case, $S$ may be empty. 

When it is no longer possible to satisfy the first disjunct, a corresponding propagator must narrow the domains to satisfy the second disjunct, by setting $c$ variables to $a$. At this point, the constraint is trivially satisfied so $S$ may be empty.

$P_l$ is very similar, and essentially works in the same way except that it requires $|X|-c$ non-occurrences of $a$ rather than $c$ occurrences of $a$.

\subsubsection{P-Admissibility and Backtrack Stability}

We now prove that both properties meet the p-admissibility requirement.

\begin{theorem}\name{thm:pl-admiss}{$P_l$ P-Admissible}
$P_l$ is p-admissible according to Def.~\ref{def:p-admissible}.
\end{theorem}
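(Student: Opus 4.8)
The plan is to unfold Definition~\ref{def:p-admissible} and reduce the claim to a case analysis over the two disjuncts in the definition of $P_l$. So I fix an arbitrary signature $\sigma$, a narrower signature $\sigma'\sqsubseteq\sigma$, and a set $S\subseteq\tuple{X}{\sigma}$, and I assume both $P_l[\sigma](S)$ and $S\subseteq\tuple{X}{\sigma'}$; the goal is to establish $P_l[\sigma'](S)$. Since $P_l$ is a disjunction, I show that whichever disjunct witnesses $P_l[\sigma](S)$ also witnesses $P_l[\sigma'](S)$, reusing the very same index set $I$ in each case.

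For the first disjunct --- there is an $I\subseteq\{1\ldots|X|\}$ with $|I|=|X|-c+1$ such that for every $i\in I$ there is some $b\ne a$ with $\V{i,b}\in S$ --- the key observation is that this condition mentions only membership of literals in $S$ and never refers to the signature. Because $S$ is the same set under $\sigma$ and $\sigma'$, the identical $I$ and the identical literals satisfy the first disjunct of $P_l[\sigma'](S)$ immediately. The hypothesis $S\subseteq\tuple{X}{\sigma'}$ is needed only to guarantee that $S$ is still a legitimate set of $\sigma'$-tuples; it plays no further role here.

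For the second disjunct --- there is an $I$ with $|I|=|X|-c$ such that $a\notin\sigma(X[i])$ for every $i\in I$ --- I appeal to monotonicity of domains under $\sqsubseteq$. By definition of $\sigma'\sqsubseteq\sigma$ we have $\sigma'(X[i])\subseteq\sigma(X[i])$ for every index $i$, so $a\notin\sigma(X[i])$ forces $a\notin\sigma'(X[i])$. Hence the same $I$ witnesses the second disjunct of $P_l[\sigma'](S)$. Since each disjunct is preserved, $P_l[\sigma'](S)$ holds in either case, which completes the argument.

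I do not expect a genuine obstacle: the proof is essentially bookkeeping, and the only point requiring care is keeping the two roles of $\sigma$ straight. The first disjunct is entirely $\sigma$-independent, so its stability is automatic; the second relies solely on the fact that narrowing a signature can only remove, never introduce, the value $a$ in a domain. I would also note in passing that the companion property $P_g$ admits the mirror-image argument (with the count $c$ in place of $|X|-c$ and the literals $\V{i,a}$ in place of $\V{i,b}$ with $b\ne a$), so its admissibility lemma will require no new ideas.
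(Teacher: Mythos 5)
Your proposal is correct and matches the paper's own argument: a case split on the two disjuncts of $P_l$, observing that the first disjunct depends only on membership of literals in the unchanged set $S$, and that the second is preserved because $\sigma'\sqsubseteq\sigma$ forces $a\notin\sigma'(X[i])$ whenever $a\notin\sigma(X[i])$. Your write-up is simply a more explicit unfolding of the same reasoning, and your closing remark about $P_g$ likewise mirrors the paper's companion proof.
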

\begin{proof}
We case split on the disjuncts of $P_l$. The first disjunct does not refer to
$\sigma'$, and (since $S$ has not changed) it remains true.
The second disjunct is satisfied by $S=\emptyset$ only when the constraint is a tautology. Since  $a\notin \sigma(X[i])$ and $\sigma' \sqsubseteq \sigma$,
then $a \notin \sigma'(X[i])$ and the property remains true. 
\end{proof}

\begin{theorem}\name{thm:pg-admiss}{$P_g$ P-Admissible}
$P_g$ is p-admissible according to Def.~\ref{def:p-admissible}.
\end{theorem}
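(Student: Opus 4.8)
The plan is to verify the p-admissibility condition of Def.~\ref{def:p-admissible} directly. I would fix an arbitrary signature $\sigma$, a narrower signature $\sigma' \sqsubseteq \sigma$, and a set $S \subseteq \tuple{X}{\sigma}$ with $S \subseteq \tuple{X}{\sigma'}$, assume $P_g[\sigma](S)$, and derive $P_g[\sigma'](S)$. Exactly as in the proof that $P_l$ is p-admissible (Thm.~\ref{thm:pl-admiss}), the argument proceeds by a case split on the two disjuncts of $P_g[\sigma](S)$, reusing in each case the same index witness $I$ supplied by the hypothesis.

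In the first case, the predicate $\exists I \subseteq \{1\ldots|X|\}.\ |I| = c+1 \wedge \forall i \in I.\ \V{i,a} \in S$ refers only to membership of literals in $S$ and mentions the signature nowhere. Since $S$ is unchanged when passing from $\sigma$ to $\sigma'$, the same $I$ witnesses the first disjunct of $P_g[\sigma'](S)$; this case is immediate and does not even use the hypothesis $S \subseteq \tuple{X}{\sigma'}$.

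In the second case the witness $I$ has size $c$ and satisfies $\forall i \in I.\ \nexists b \in \sigma(X[i]).\ b \ne a$, i.e.\ each $X[i]$ with $i \in I$ has a domain containing no value other than $a$. Here I would use $\sigma' \sqsubseteq \sigma$, which gives $\sigma'(X[i]) \subseteq \sigma(X[i])$ for every $i$, so the absence of a value $b \ne a$ in $\sigma(X[i])$ forces its absence in $\sigma'(X[i])$. The same $I$ then witnesses $\forall i \in I.\ \nexists b \in \sigma'(X[i]).\ b \ne a$, establishing the second disjunct of $P_g[\sigma'](S)$. In either case $P_g[\sigma'](S)$ holds, discharging the condition.

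I expect no serious obstacle: the proof is the mirror image of the $P_l$ argument, with ``$c$ occurrences of $a$'' playing the role of ``$|X|-c$ non-occurrences of $a$''. The one point that must be stated with care is \emph{why} each disjunct survives the narrowing, and these reasons differ: the first disjunct is stable because it is a purely syntactic condition on $S$, whereas the second is stable because the property ``the domain contains no value different from $a$'' is monotone under domain narrowing. Making that monotonicity step explicit is the only substantive part of the argument.
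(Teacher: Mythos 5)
Your proposal is correct and matches the paper's own proof: both case-split on the two disjuncts of $P_g$, observe that the first disjunct mentions only $S$ and so is preserved unchanged, and use the monotonicity $\sigma'(X[i]) \subseteq \sigma(X[i]) \subseteq \{a\}$ to preserve the second disjunct. Your writeup is somewhat more explicit about reusing the witness $I$, but the argument is the same.
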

\begin{proof}
We case split on the disjuncts of $P_g$. The first disjunct does not refer to
$\sigma'$, and (since $S$ has not changed) it remains true.
The second disjunct is satisfied by $S=\emptyset$ only when the constraint is a tautology. Since  $\sigma(X[i]) \subseteq \{a\}$ and $\sigma' \sqsubseteq \sigma$,
then $\sigma'(X[i]) \subseteq \{a\}$ and the property remains true. 
\end{proof}

In order for the two propagators to make use of watched literals, we must prove that both properties are backtrack stable. The watched literals representing a support are not backtracked, so a support must remain a support as search backtracks (and the domains are widened).  

\begin{theorem}\name{thm:occ-bt-stable}{Occurrence Backtrack Stable}
The two occurrence support properties are backtrack stable according to Def.~\ref{def:backtrack-stable}. 
\end{theorem}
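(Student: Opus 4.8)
The plan is to prove that $P_l$ and $P_g$ each satisfy Definition~\ref{def:backtrack-stable}, namely that for all nonempty $S$ and all $\sigma' \sqsubseteq \sigma$, if $P_{\sigma'}(S)$ holds then $P_{\sigma}(S)$ holds. So I fix a nonempty support set $S$, a larger signature $\sigma$, a smaller signature $\sigma' \sqsubseteq \sigma$, and assume the property holds under $\sigma'$; I must re-derive it under $\sigma$. Because both properties are disjunctions, I would case split on which disjunct of $P_{\sigma'}(S)$ is witnessed, and in each case exhibit a witness for the corresponding disjunct of $P_{\sigma}(S)$.

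First I would treat $P_g$, since it is the simpler of the two. If the \emph{first} disjunct holds under $\sigma'$, then there is an index set $I$ with $|I| = c+1$ such that $\langle i, a\rangle \in S$ for every $i \in I$. This disjunct refers only to $S$ and not to the signature at all, so the very same $I$ witnesses the first disjunct under $\sigma$; no work is needed. The delicate case is the \emph{second} disjunct: here $S$ may be empty, but the theorem's hypothesis restricts us to nonempty $S$, so I would note that if the support was witnessed purely by the second (empty-$S$) disjunct this case is excluded by the antecedent $S \ne \emptyset$ --- this is precisely why backtrack stability carves out the empty support. If instead $S \ne \emptyset$ and the second disjunct is the witnessed one, I would observe that the second disjunct asserts $\forall i \in I.\ \nexists b \in \sigma'(X[i]).\, b \ne a$, i.e. $\sigma'(X[i]) \subseteq \{a\}$; but widening to $\sigma \sqsupseteq \sigma'$ can only \emph{add} values to domains, so $\sigma(X[i]) \supseteq \sigma'(X[i])$, and the containment $\sigma(X[i]) \subseteq \{a\}$ need not survive. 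This is the real obstacle, and I expect the resolution to be that whenever $S \ne \emptyset$, the first disjunct is the one actually supplying support, so the problematic second disjunct never needs to be propagated upward under the nonemptiness restriction.

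The argument for $P_l$ is structurally identical with the roles adjusted: the first disjunct requires an index set $I$ of size $|X|-c+1$ with each $i \in I$ carrying a literal $\langle i, b\rangle \in S$ for some $b \ne a$; again this disjunct mentions only $S$, so it transfers to $\sigma$ verbatim. The second disjunct asserts $\forall i \in I.\, a \notin \sigma'(X[i])$, which (as in the admissibility proof, Theorem~\ref{thm:pl-admiss}) can be broken by widening, since $a$ might lie in $\sigma(X[i]) \setminus \sigma'(X[i])$; but this disjunct is the empty-$S$ tautology case and is likewise excluded by $S \ne \emptyset$.

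The main obstacle, then, is not any intricate calculation but rather the conceptual point that backtrack stability fails for the domain-dependent (second) disjuncts in general, and the proof hinges on recognising that the definition's restriction to $S \ne \emptyset$ is exactly what filters those cases out. I would make this explicit by arguing that for nonempty $S$ the witnessed disjunct must be the first, purely syntactic one, which is manifestly invariant under widening because it constrains only membership of literals in $S$ and not the underlying domains $\sigma'(X[i])$. Concluding $P_{\sigma}(S)$ in both cases for both properties completes the proof.
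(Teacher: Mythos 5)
Your proposal is correct and follows essentially the same route as the paper's proof: the first disjunct of each property mentions only $S$ and not the signature, so the same witness $I$ transfers verbatim to the wider $\sigma$, while the second (domain-dependent) disjunct is dismissed because it corresponds to the empty support, which the $S\ne\emptyset$ hypothesis excludes. You even surface explicitly the one delicate point the paper leaves implicit --- that a nonempty $S$ could in principle be ``supported'' via the second disjunct --- and resolve it the same way the paper does, by appealing to the fact that a nonempty minimal support must be witnessing the first disjunct.
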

\begin{proof}
For both properties, the second disjunct is irrelevant because it is satisfied by $S=\emptyset$ only when the constraint is a tautology. The support $\emptyset$ is not required to be backtrack stable.
In both properties the first disjunct requires a fixed number ($|X|-c+1$ or $c+1$) of literals to be in $S$ (with variable indices $I$). It is clear that for any $\sigma'$ where $\sigma\sqsubseteq \sigma'$, the same $I$ may be used to discharge the existential, and $S$ will be valid w.r.t $\sigma'$. 
\end{proof}

\subsubsection{Proofs of the Propagation Schema}

Now we give a constructive proof of the propagation schema for $P_l$. Recall that the computational content of the proof is a propagator for $P_l$. 

\begin{theorem}[$P_l$ Support Generation]\label{thm:pl-support-generation}
We consider $P_l$ on constraint ${\tt{occurrenceleq}}(X,a,c)$. 
We claim that Def.~\ref{def:support-framework} (propagation schema) holds for $P_l$. 
\end{theorem}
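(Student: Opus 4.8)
The plan is to reuse the recipe applied to the element properties $P_1$, $P_2$, $P_3$: fix an arbitrary $S\in\Support{X}{\sigma}{P_l}$, a nonempty $\sigma_1\sqsubseteq\sigma$ with $S\notin\Support{X}{\sigma_1}{P_l}$, and then construct the witnesses demanded by $\mathrm{findNewSupport}$ or $\mathrm{noNewSupport}$ directly, reading off the propagator as the computational content. The first step is to pin down what $S$ looks like. By Lemma~\ref{lem:s-nonempty} we have $S\ne\emptyset$, and since the second disjunct of $P_l$ does not mention $S$, a \emph{minimal} nonempty support can only witness the first disjunct. Hence $S$ consists of $|X|-c+1$ literals $\V{i,b}$ with $b\ne a$, and the hypothesis that support is lost means that one such value $b$ has been deleted from $\sigma_1(X[i])$, so the old index set no longer supplies enough avoiders.

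The engine of the proof is a case split on a single quantity, the set $A=\{\,i\mid\exists b\ne a.\ b\in\sigma_1(X[i])\,\}$ of variables that can still avoid the value $a$ under $\sigma_1$; its complement is exactly the set of variables pinned to $a$ (those with singleton domain $\{a\}$, using $\nonempty{\sigma_1}$). If $|A|\ge|X|-c+1$, I would re-establish the first disjunct on the current domains: take $\sigma_2=\sigma_1$ and let $S'$ be a fresh set of $|X|-c+1$ literals $\V{i,b_i}$, one per index chosen from $A$ with $b_i\ne a$ witnessing membership (falling back to $S'=\emptyset$ in the boundary situation where the second disjunct already holds, so as to keep $S'$ minimal). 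Because $\sigma_2=\sigma_1$ the maximality clause of $\mathrm{findNewSupport}$ is vacuous, as there is no $\sigma_3$ with $\sigma_2\sqsubset\sigma_3\sqsubseteq\sigma_1$.

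If $|A|=|X|-c$, exactly $c$ variables are pinned to $a$ and the constraint forces every other variable to avoid $a$; here I would propagate. Construct $\sigma_2$ by deleting $a$ from $\sigma_1(X[i])$ for each $i\in A$ and leaving all other domains untouched. Every such domain retains a non-$a$ value, so $\nonempty{\sigma_2}$ holds, and now $a\notin\sigma_2(X[i])$ for all $i\in A$ with $|A|=|X|-c$, so the second disjunct is satisfied with $S'=\emptyset$. If instead $|A|<|X|-c$, then more than $c$ variables are already pinned to $a$; for any nonempty $\sigma_2\sqsubseteq\sigma_1$ these stay pinned, so at least $c+1$ occurrences are forced and neither disjunct can be met (the first needs $|X|-c+1$ avoiders but at most $|X|-c-1$ exist, the second needs $|X|-c$ non-occurrences but again at most $|X|-c-1$ are available). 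This is precisely $\mathrm{noNewSupport}$.

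The main obstacle I anticipate is the maximality proof in the propagation case ($|A|=|X|-c$): I must show that every $\sigma_3$ with $\sigma_2\sqsubset\sigma_3\sqsubseteq\sigma_1$ has empty support. The key simplification is that $\sigma_2$ and $\sigma_1$ differ \emph{only} by the deletion of $a$ from the variables of $A$, so any such $\sigma_3$ must reinstate $a$ in at least one $i\in A$ and can differ in no other way. Reinstating $a$ does not remove any non-$a$ value, so the number of avoiders stays at $|X|-c$ and the first disjunct still fails, while the number of variables with $a$ absent drops strictly below $|X|-c$, so the second disjunct fails too. Making this counting argument airtight, together with checking the boundary values of $c$ (for instance $c=0$ or $c\ge|X|$), is where the care will be needed; the remaining cases are routine.
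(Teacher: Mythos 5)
Your proposal is correct and follows essentially the same route as the paper: your set $A$ is exactly the paper's $I_3$, and your three-way case split on $|A|$ versus $|X|-c$ (reuse support with $\sigma_2=\sigma_1$; prune $a$ from all of $A$ and take $S'=\emptyset$; or conclude $\mathrm{noNewSupport}$) matches the paper's argument case for case. The only cosmetic difference is that the paper partitions $I_3$ into indices reused from $S$ and fresh ones, to make the extracted propagator keep existing watched literals in place.
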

\begin{proof}
Let $\sigma_1$ and $\sigma_2$ be signatures mapping the variables in
$X$ to their respective domains. 
$S$ and $\sigma_1 \sqsubseteq \sigma$ are universally quantified in the schema, therefore we use them as givens. We assume that $S\notin  \Support{X}{\sigma_1}{P_l}$ and prove the consequent by constructing $S'$ and $\sigma_2$. By lemma \ref{lem:s-nonempty}, $S\ne \emptyset$. The second disjunct of $P_l$ would be satisfied by $S=\emptyset$, therefore $S$ corresponds to the first disjunct of $P_l$. 

$S$ contains one literal for each index in $I$. At least one item in $S$ is invalid (by the antecedant). The proof proceeds by constructing $I'$ and corresponding $S'$ and $\sigma_2$ to satisfy the first disjunct of $P_l$ if possible. Otherwise, the second disjunct is satisfied by constructing $\sigma_2$ and $S'=\emptyset$. 

\[
\begin{array}{l}
I_1= \{ i \mid \V{i,b}\in S \wedge(\exists b\ne a.\; b\in \sigma_1(X[i]))\}\\
I_2= \{ i \mid i\notin I_1 \wedge (\exists b\ne a.\; b\in \sigma_1(X[i])) \}\\
I_3= I_1 \cup I_2 \\
\end{array}\]
\[
\begin{array}{l}
|I_3|>(|X|-c) \Rightarrow \\
\abit{1} (I'\subseteq I_3 \wedge |I'|=(|X|-c+1) \\
\abit{1} \wedge\: S'=\{ \V{i,b} \mid i\in I'\:   \\
\abit{3} \wedge \: b\in \sigma_1(X[i])\: \wedge\: b\ne a \} \\
\abit{1} \wedge \: \sigma_2=\sigma_1 )\\
\\
|I_3|=(|X|-c) \Rightarrow \\
\abit{1} S'=\emptyset \:\wedge  \\
\abit{1} (\forall i\notin I_3.\:\sigma_2(X[i])=\sigma_1(X[i])) \:\wedge \\
\abit{1} (\forall i\in I_3.\:\sigma_2(X[i])=\sigma_1(X[i])\setminus \{a\})\\
\end{array}
\]

$\sigma_2$ is maximal in both of the above cases: in the first case, $\sigma_2=\sigma_1$, and in the second case only the necessary values are removed to satisfy the second disjunct of $P_l$.

When $|I_3|<(|X|-c)$, $P_l$ is false and remains false for all $\sigma_2\sqsubseteq \sigma_1$ (by construction of $I_1$ and $I_2$). Hence the second disjunct of the consequent of the schema is satisfied. 
\end{proof}

The proof explicitly re-uses variable indices but not $b$ values from $S$. This fits well with Minion's watched literal implementation, which notifies the propagator once for each invalid literal in $S$. However, the proof does not require the use of watched literals, it allows many concrete implementations and may be used with any propagation-based solver.

It is straightforward to prove the propagation schema for $P_g$, based on the proof for $P_l$. 

\begin{theorem}[$P_g$ Support Generation]
We consider $P_g$ on constraint ${\tt{occurrencegeq}}(X,a,c)$. We claim that Def.~\ref{def:support-framework} (propagation schema) holds for $P_g$. 
\end{theorem}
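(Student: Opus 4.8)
The plan is to mirror the proof of Theorem~\ref{thm:pl-support-generation} ($P_l$ Support Generation) under the duality that swaps ``a value other than $a$ is available'' for ``the value $a$ is available'', replaces the count $|X|-c$ by $c$, and replaces deletion of $a$ from a domain by restriction of that domain to $\{a\}$. First I would take the universally quantified $S$ and $\sigma_1\sqsubseteq\sigma$ as given, assume the antecedent $S\notin\Support{X}{\sigma_1}{P_g}$, and invoke Lemma~\ref{lem:s-nonempty} to obtain $S\ne\emptyset$. Since the second disjunct of $P_g$ does not mention $S$ and is witnessed by $\emptyset$, a non-empty minimal support at $\sigma$ can only witness the first disjunct; hence $S$ is a set of $c+1$ literals $\V{i,a}$.

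Next I would record the indices at which $a$ survives the narrowing, writing $I_3=\{\,i\mid a\in\sigma_1(X[i])\,\}$ (split, as in the $P_l$ proof, into $I_1$, the surviving support literals, and $I_2$, the remaining candidates). Before the size analysis I would test whether the second disjunct already holds at $\sigma_1$, i.e.\ whether at least $c$ variables are already fixed to $a$ ($\sigma_1(X[i])=\{a\}$); if so, $\Support{X}{\sigma_1}{P_g}=\{\emptyset\}$ and I discharge findNewSupport with $\sigma_2=\sigma_1$ and $S'=\emptyset$, the maximality clause being vacuous. Otherwise the empty set is not a support at $\sigma_1$, and I case split on $|I_3|$ against $c$.

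If $|I_3|>c$ I re-establish the first disjunct: pick $I'\subseteq I_3$ with $|I'|=c+1$, set $S'=\{\V{i,a}\mid i\in I'\}$ and $\sigma_2=\sigma_1$; this $S'$ is minimal because any proper subset has at most $c$ literals (too few for the first disjunct) while the second disjunct still fails at $\sigma_1$. If $|I_3|=c$ I force the second disjunct by setting $\sigma_2(X[i])=\{a\}$ for every $i\in I_3$ and $\sigma_2(x)=\sigma_1(x)$ elsewhere, with $S'=\emptyset$; every domain stays non-empty since $a\in\sigma_1(X[i])$ for $i\in I_3$. Finally, if $|I_3|<c$, fewer than $c$ variables can ever take $a$ under any $\sigma_2\sqsubseteq\sigma_1$, so \texttt{occurrencegeq} is unsatisfiable, no support exists, and noNewSupport holds.

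The step needing real care --- the remainder being routine transcription from the $P_l$ argument --- is guaranteeing that the returned $S'$ is genuinely a member of the support set, i.e.\ minimal. The danger is that narrowing from $\sigma$ to $\sigma_1$ can newly satisfy the $S$-independent second disjunct, collapsing the unique support to $\emptyset$ and spoiling minimality of any first-disjunct $S'$; the preliminary test above is exactly what rules this out before the size analysis. The other delicate point is maximality of $\sigma_2$ in the case $|I_3|=c$: I would show that any $\sigma_3$ with $\sigma_2\sqsubset\sigma_3\sqsubseteq\sigma_1$ must reintroduce some $b\ne a$ into a domain $\sigma_3(X[i])$ with $i\in I_3$, leaving at most $c-1$ variables fixed to $a$ (so the second disjunct fails) while only the $c$ indices of $I_3$ ever have $a$ available (so the first disjunct fails, as $c<c+1$), whence $\Support{X}{\sigma_3}{P_g}=\emptyset$ and $\sigma_2$ is maximal under $\sqsubseteq$.
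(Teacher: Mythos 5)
Your proposal is correct and follows essentially the same route as the paper, whose entire proof of this theorem is the instruction to repeat the $P_l$ argument with $c$ in place of $|X|-c$, $a\in\sigma_1(X[i])$ in place of $\exists b\ne a.\,b\in\sigma_1(X[i])$, and $\{a\}$ in place of $\sigma_1(X[i])\setminus\{a\}$ --- which is exactly the substitution you carry out. Your extra preliminary check that the second ($S$-independent) disjunct has not already become satisfied at $\sigma_1$, which would collapse the minimal support to $\emptyset$ and break minimality of a first-disjunct $S'$, is a point the paper leaves implicit, and is a welcome tightening rather than a departure.
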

\begin{proof}
The proof is the same as the proof of $P_l$, with $c$ substituted for $\left| X\right|-c$ in all places, and $(a\in \sigma_1(X[i]))$ substituted for $(\exists b\ne a.\; b\in \sigma_1(X[i]))$, and $\{a\}$ substituted for $\sigma_1(X[i])\setminus \{a\}$.
\end{proof}

This proof also re-uses variable indices from $S$ and thus fits well with Minion's watched literal infrastructure. 

\subsubsection{Soundness and Completeness}

Now we prove the soundness and completeness of both properties, and hence the correctness of the two propagators. 

\begin{lemma}\name{lem:occ-sound}{Occurrenceleq Sound}
\[
\begin{array}{l}
\forall{}\sigma.\;\mathrm{singleton}(\sigma) \Rightarrow\\ 
\abit{.5}(\Support{X}{\sigma}{P_l} \Rightarrow\\ \abit{1}\sem{{\tt{occurrenceleq}}(X,a,c)}{\sigma} \ne \emptyset)
\end{array}
\]
\begin{proof}
Let $\sigma$ be an arbitrary singleton signature. Since $\sigma$ is a singleton
it encodes a single tuple (say $\tau$).  Assume $\Support{X}{\sigma}{P_l}$
holds. Let $b$ be the number of occurrences of $a$ in $\tau$. 

Since $\sigma$ is singleton, the first disjunct of $P_l$ implies the second disjunct. (Assume $I$ satisfies the first disjunct. $I'\subseteq I$ where $|I'|=(|X|-c)$ is used to satisfy the second disjunct.) Therefore $\Support{X}{\sigma}{P_l}$ implies the second disjunct of $P_l$ is satisfied (by the empty support). Hence, at least $|X|-c$ elements of $\tau$ are not equal to $a$, so $b\le c$. By Def.~\ref{def:occurrenceleq-semantics}, $R_X = \{ \tau \}$ and the lemma holds.
\end{proof}
\end{lemma}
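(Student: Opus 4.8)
The plan is to exploit the fact that a singleton signature $\sigma$ pins down a unique coherent $X$-tuple $\tau$, so that proving $\sem{\texttt{occurrenceleq}(X,a,c)}{\sigma} \ne \emptyset$ reduces to checking that this one tuple lies in $R_X$. By Def.~\ref{def:occurrenceleq-semantics}, $\tau \in R_X$ iff the number of positions $i$ with $\tau[i]=a$ is at most $c$. Writing $b$ for that occurrence count, the whole lemma therefore collapses to the purely combinatorial claim $b \le c$, and the real work is to extract this bound from the hypothesis that $P_l$ is supported at $\sigma$.

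First I would assume $\Support{X}{\sigma}{P_l}$, i.e.\ fix some $S \subseteq \tuple{X}{\sigma}$ with $P_l[\sigma](S)$, and case split on the two disjuncts of $P_l$ (Def.~\ref{def:occurrence-support-properties}). The key observation, which I would isolate as a small sublemma, is that at a singleton signature the first disjunct entails the second. Indeed, every tuple of $S$ is the unique tuple $\tau$, so a supported literal $\V{i,b} \in S$ with $b \ne a$ forces $\tau[i]=b \ne a$, and hence $a \notin \sigma(X[i])$ because $\sigma(X[i])=\{\tau[i]\}$. Thus an index set $I$ of size $|X|-c+1$ witnessing the first disjunct consists entirely of indices where $a$ is excluded from the domain, and any subset $I' \subseteq I$ with $|I'| = |X|-c$ witnesses the second disjunct. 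Consequently, in either case the second disjunct of $P_l$ holds.

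Having reduced to the second disjunct, I would finish with the complementary counting argument: it supplies $|X|-c$ distinct indices $i$ with $a \notin \sigma(X[i])$, so $\tau[i] \ne a$ at each of them; therefore at most $|X| - (|X|-c) = c$ positions of $\tau$ can equal $a$, giving $b \le c$ and hence $\tau \in R_X$. Since $R_X = \{\tau\}$ is then non-empty, the conclusion $\sem{\texttt{occurrenceleq}(X,a,c)}{\sigma} \ne \emptyset$ follows.

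The main obstacle here is not depth but precision in the two bookkeeping steps. The first is the interpretation of $\V{i,b} \in S$ as literal support (Def.~\ref{def:literal-support}) together with the singleton constraint $S \subseteq \tuple{X}{\sigma}$, which is exactly what licenses the passage from ``some tuple in $S$ has $i$-th entry $b$'' to ``$\tau[i]=b$''; getting this right is what makes the first-implies-second collapse valid. The second is the complementary count that turns $|X|-c$ forced non-occurrences into the bound $b \le c$. I would also flag the boundary case $c \ge |X|$, where $|X|-c$ may be non-positive: there \texttt{occurrenceleq} is satisfied by every tuple, so the conclusion holds directly, and the disjunct analysis above is only strictly needed when $0 \le c \le |X|$.
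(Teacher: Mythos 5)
Your proposal is correct and follows essentially the same route as the paper's proof: reduce to showing the unique tuple $\tau$ encoded by the singleton signature has at most $c$ occurrences of $a$, observe that at a singleton signature the first disjunct of $P_l$ implies the second, and then count the $|X|-c$ forced non-occurrences. You supply slightly more detail than the paper does on why a valid literal $\V{i,b}\in S$ with $b\ne a$ forces $a\notin\sigma(X[i])$, and you flag the degenerate case $c\ge|X|$, but the decomposition and the key counting step are identical.
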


The proof that $P_g$ is sound proceeds by the same argument, with $|X|-c$ replaced with $c$, `not equal to $a$' replaced with `equal to $a$' and $\le$ replaced with $\ge$.

\begin{lemma}\name{lem:occ-complete}{Occurrenceleq Complete}
\[
\begin{array}{l}
C={\tt occurrenceleq}(X,a,c) \\
\forall{}\sigma.\;\; \sem{C}{\sigma} \ne \emptyset \;\Rightarrow \\
\abit{1}\exists{}\sigma'\sqsubseteq{}\sigma.\:\sem{C}{\sigma}\subseteq \sem{C}{\sigma'}\\
\abit{2} \wedge \: \Support{X}{\sigma'}{P_l}
\end{array}
\]
\end{lemma}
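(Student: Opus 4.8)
The plan is to case-split on whether $P_l$ already has support at the given signature $\sigma$. If $\Support{X}{\sigma}{P_l} \ne \emptyset$, I would take $\sigma' = \sigma$: then $\sem{C}{\sigma} \subseteq \sem{C}{\sigma'}$ holds reflexively and the support requirement is immediate. All of the work lies in the remaining case, where $P_l$ has no support at $\sigma$ and I must narrow $\sigma$ to re-establish support through the second (empty-support) disjunct without discarding any solution.

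The engine of the argument is a counting step. Write $N$ for the number of indices $i \in \{1 \ldots |X|\}$ whose domain contains a non-$a$ value, i.e.\ $\exists b \ne a.\: b \in \sigma(X[i])$; the remaining $|X| - N$ indices then have $\sigma(X[i]) = \{a\}$, their domains being nonempty because $\sem{C}{\sigma} \ne \emptyset$ forces every domain to be nonempty. First I would observe that $\sem{C}{\sigma} \ne \emptyset$ is equivalent to $N \ge |X| - c$: any solution must assign $a$ to each of the $|X| - N$ forced indices, so $|X| - N \le c$, and conversely if $N \ge |X|-c$ one can assign non-$a$ values to $|X|-c$ of those indices and anything to the rest, yielding at most $c$ occurrences of $a$. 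Next, the failure of the first disjunct of $P_l$ at $\sigma$ yields $N \le |X| - c$, since there are not $|X|-c+1$ indices carrying a witnessing non-$a$ literal. Combining the two inequalities pins $N$ down to exactly $|X| - c$: precisely $|X|-c$ indices can avoid $a$, and exactly $c$ indices are forced to be $a$.

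With this in hand I would let $I$ be the set of those $|X|-c$ indices admitting a non-$a$ value, and define $\sigma'$ by $\sigma'(X[i]) = \sigma(X[i]) \setminus \{a\}$ for $i \in I$ and $\sigma'(X[i]) = \sigma(X[i])$ otherwise. Each altered domain stays nonempty by the choice of $I$, so $\sigma' \sqsubseteq \sigma$. The same $I$ now witnesses the second disjunct of $P_l$ at $\sigma'$ with empty support, giving $\Support{X}{\sigma'}{P_l} \ne \emptyset$. For $\sem{C}{\sigma} \subseteq \sem{C}{\sigma'}$ I would take any $\tau \in \sem{C}{\sigma}$: the $c$ forced indices already contribute $c$ occurrences of $a$, and $\tau$ has at most $c$, so $\tau$ must assign a non-$a$ value to every index in $I$; hence $\tau[i] \in \sigma'(X[i])$ for all $i$ while the occurrence count is unchanged, so $\tau \in \sem{C}{\sigma'}$.

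The hard part will be the counting step that forces $N = |X|-c$, and in particular recognising that the hypothesis $\sem{C}{\sigma} \ne \emptyset$ must be used, via $N \ge |X|-c$, together with the failure of the first disjunct — neither alone suffices. Everything else is the routine check that deleting $a$ from exactly the $I$-indices preserves both nonemptiness of domains and the whole solution set, which rests on the observation that in the unsupported case every solution already spends its full budget of $c$ occurrences on the forced indices. The degenerate cases $c \ge |X|$, where the constraint is a tautology, fall under the first case, since the second disjunct of $P_l$ is satisfied at $\sigma$ itself by $I = \emptyset$.
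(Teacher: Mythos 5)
Your proof is correct and follows essentially the same route as the paper's: the same initial case split on whether $P_l$ is already supported at $\sigma$, the same $\sigma'$ (remove $a$ from the domains of exactly the $|X|-c$ indices that admit a non-$a$ value), and the same solution-preservation argument via the $c$ forced indices already exhausting the occurrence budget so that every solution avoids $a$ on the altered domains. The only difference is presentational: the paper obtains $\sigma'$ by citing the already-proved propagation schema theorem for $P_l$ (your $N$ is its $|I_3|$), whereas you inline that construction and make explicit the counting step $N \ge |X|-c$ derived from $\sem{C}{\sigma}\ne\emptyset$, which the paper leaves implicit when it asserts that the schema yields a new support rather than the noNewSupport branch.
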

\begin{proof}
Assume $\sem{C}{\sigma} \ne \emptyset$ for arbitrary $\sigma$. If $\Support{X}{\sigma}{P_l}$ then $\sigma'=\sigma$ and completeness trivially holds. Otherwise, by the proof of the propagation schema for $P_l$, there exists a $\sigma'\sqsubset \sigma$ (named $\sigma_2$ there) such that $\Support{X}{\sigma'}{P_l}$. Since $\sigma' \ne \sigma$, $\sigma'$ is constructed in the case where $|I_3|=(|X|-c)$. $\sigma'$ is the same as $\sigma$ except for indices $I_3$, where the value $a$ is removed if present. 
For all $i\notin I_3$, $\sigma(i)=\{ a\}$ therefore corresponding elements of all tuples $\tau \in \sem{C}{\sigma}$ also equal $a$. No other element of $\tau$ can be $a$ (by Def.~\ref{def:occurrenceleq-semantics}), therefore no tuples are invalidated, $\sem{C}{\sigma'}=\sem{C}{\sigma}$ and the lemma holds.
\end{proof}

Once again, the proof that $P_g$ is complete follows the same argument. For $P_g$, $|I_3|=c$ and for all indices $i\in I_3$, $\sigma'(i)=\{a\}$. For other indices, the constructed $\sigma'$ is equal to $\sigma$ and does not contain $a$. By Def.~\ref{def:occurrencegeq-semantics}, all tuples $\tau\in \sem{C}{\sigma}$ must equal $a$ at all indices $I_3$, therefore no tuples are invalidated under $\sigma'$ and $\sem{C}{\sigma'}=\sem{C}{\sigma}$.

\subsubsection{Empirical Evaluation}

The occurrence propagators implemented in Minion 0.12 (and, to the best of our knowledge, all other solvers) use static triggers. Therefore they may be invoked when support has not been lost. By comparison, these watched literal propagators are only invoked when one of the literals in the support is lost.

We implemented the $\mathtt{occurrenceleq}(X,a,c)$ propagator described by the proof of
Theorem~\ref{thm:pl-support-generation} in Minion 0.12. The propagator re-uses
literals $\V{i,b}$ from $S$ when constructing $S'$, allowing it to leave the
corresponding watched literals in place. When a literal $\V{i,b}$ in $S$ is
invalid, the propagator scans through $X[\Rng{i}{|X|-1}]$ then $X[\Rng{0}{i-1}]$
to find a replacement literal. 
The propagator (referred to as WatchedProp) was constructed
from the proof in less than 3 hours programmer time.

We compare against the existing $\mathtt{occurrenceleq}$ propagator (StaticProp) provided in
Minion 0.12, which uses static assignment triggers (\ie~the propagator is
notified when any variable in scope becomes assigned).

We constructed a benchmark CSP as follows. We have a vector of variables $X$ where 
$|X|=100$, and initial signature $\sigma$ where 
$\forall i.\:\sigma(X[i])=\{1,2\}$. The constraints are as follows:
\[\forall i\in \{80..98\}.\:(X[i]\neq X[i+1])\] 
and 100 copies of the constraint: 
\[\mathtt{occurrenceleq}(X,1,90)\] 
The 
occurrence constraint is duplicated to allow accurate measurement of its 
efficiency.
This CSP is solved to find all solutions.

The solver branches on variables in $X$ in index order, and branches for 1 
before 2. Once variable $X[80]$ is assigned by search, the remaining variables
are assigned by propagation on the $\neq$ constraints. As search progresses,
the value of each variable in $X[\Rng{80}{99}]$ alternates between 1 and 2.

WatchedProp watches 11 literals of the form $\V{i,2}$. Early 
in the search, most of these literals will necessarily involve variables
$X[\Rng{80}{99}]$, a pathological case for WatchedProp. As search progresses,
more variables in $X[\Rng{0}{79}]$ will be assigned 2, therefore the performance
of WatchedProp should improve.

Table \ref{tab:occ-results} shows that StaticProp scales approximately 
linearly in the number of search nodes explored, but WatchedProp speeds up as 
search progresses. With a limit of 100 million nodes, WatchedProp is
more than twice as fast as StaticProp.

\begin{table*}
\begin{center}
  \begin{tabular}{r|rr}
Search node limit ($n$) & WatchedProp time (s) & StaticProp time (s) \\
    \hline
100,000     & 1.72   & 1.20   \\
1,000,000   & 12.40   & 11.54   \\
10,000,000  & 86.13   & 120.31  \\
100,000,000 & 518.81  & 1205.07  \\
    \hline
  \end{tabular}
\end{center}
\caption{
Times for the WatchedProp and StaticProp algorithms, median of 16 
runs on a dual processor Intel Xeon E5520 at 2.27GHz. 
}
\label{tab:occ-results}
\end{table*}

\subsubsection{Discussion}

We have shown that our framework can be used to create highly efficient watched literal 
propagators for occurrence constraints, and that these outperform conventional
propagators that use static triggers. 
There is no requirement for the propagators to maintain GAC. In this case we have proven that the propagators are sound and complete, the most basic requirements for correctness. The framework is entirely agnostic about whether the propagator maintains GAC, some form of bound consistency or indeed some custom consistency that is specific to the type of constraint. 

\section{Conclusions and Future Work}

This paper has made a number of contributions to the formal study of constraint solving, 
in particular of propagation in constraint solving.    
We have shown that we can define formally a notion of generalized support, which generalizes the standard notion of support 
in constraint satisfaction.   This generalization allows us to work with propagators that might not have been seen as using support. 
Since our definition is so general, we introduced the notion of ``p-admissible'' support properties.   
The definition of p-admissibility corresponds to the use of a particular kind of trigger within the constraint solver.  
Triggers are events which cause propagators to be called within the solver, and p-admissibility guarantees that any event which 
might cause support to be lost is observed by some trigger. In this paper we have focussed on a definition of p-admissibility 
corresponding to literal triggers (that are activated by deletion of a particular value from the domain of a variable). 
We have given a formal description of constraint propagation.  Given a p-admissible support property, we have defined the 
propagation schema.  A constructive proof of the propagation schema shows how a propagator can be constructed to find new support when 
support is lost.   We have given examples of this for the specific constraints ``element'', ``occurrenceleq'' and ``occurrencegeq''.

Our work on propagators is not merely a formalisation of existing standard usage in constraint programming.   
We are not aware of a definition of support as general as ours within constraints.   The notion of generalized support 
should be directly useful in constraints, enabling a much better understanding of propagation algorithms in 
the constraint community.    Our hypothesis is that almost all propagators used in constraint solvers can be seen 
as reasoning with some form of support property, even though most propagators are not currently presented as doing so.  
Once this hypothesis is confirmed, we can present propagation algorithms in a much more uniform fashion, as well as building 
constraint solvers to exploit these propagation algorithms. 
Thus our intended future work consists of two strands: first continuing the formal development we have started here, and second 
demonstrating the application of our work to the constraints community.

\section*{Acknowledgements} 
The work of the authors has been partially supported by the following UK EPSRC grants: EP/E030394/1, EP/F031114/1, EP/H004092/1\comment{dominion}, and EP/M003728/1\comment{combining constraints verification}, support for which we are very grateful.

\comment{
\newpage\appendix{{\hspace{-2em}}\Large\bf{Appendix B}} \ \\

\begin{table}
\caption{Notation Quick Guide}
{\mbox{\hspace{-.5in}}\begin{tabular}{|l|l|l|}
\hline
Notation & Meaning & \ \\
\hline
$\V{X,\sigma,C}$ & Constraint satisfaction problem. & Def.~\ref{def:csp}\\
$Y[i]$ & index into vector $Y$ & Section~\ref{sec:vectors}  \\
$\indices{Y}{z}$ & set of indexes to $z$ in vector $Y$  & Def.~\ref{def:memindexes} \\
$z\in{}Y, v\in{}\tau$ & membership in a vector (tuple) & Def.~\ref{def:membership} \\
$\sigma$ & a signature, mapping variable names to their domains. & Section~\ref{sec:vectors}$\,$\\
$\sigma'\sqsubseteq_X\sigma $ & Signature inclusion & Def.~\ref{def:signature_incl} \\
$\tuple{X}{\sigma}$ & The set of well formed tuples under $\sigma$ having schema $X$. & Def.~\ref{def:x-tuple}\\
$\tuple{X}{\sigma}(\tau)$ & $\tau$ is a  wellformed X-tuple under signature $\sigma$. & Def.~\ref{def:x-tuple}\\
$\coh{X,z}(\tau)$ & X-tuple $\tau$ is coherent wrt variable $z$ & Def.~\ref{def:coherent}\\
$\coh{X,Z}(\tau)$ & X-tuple $\tau$ is coherent wrt schema $Z$ & Def.~\ref{def:con-schema}\\
$\select_{i=a}$ & tuple selection where $\tau[i] = a$ & Def.~\ref{def:index_select} \\
$\select_{x=a}$ & tuple selection where all columns $x$ have value $a$ & Def.~\ref{def:select} \\
$\select_X$ & coherent selection, columns labeled by $X$ have common values. & Def.~\ref{def:select_con} \\
$\pi_{\V{X,Y}}$ & projection map witnessing $Y\subseteq{}X$ & Not.~\ref{note:pmap} \\
$\V{X,R_X}\bowtie{}\V{Y,R_Y}$ & natural join of relations (constraints). & Def.~\ref{def:join} \\
$\sem{C}{\sigma}$ & meaning of syntactic constraint $C$ with respect to signature $\sigma$  & $\,$\\
$\support{P}(\V{Y,R_Y})$ & support for property $P$ on relation $\V{Y,R_Y}$. & Def.~\ref{def:support} \\
$\support{P}(\sigma,C)$ & $\support{P}({\sem{C}{\sigma}})$. & Def.~\ref{def:support-int} \\
\hline
\end{tabular}}
\label{Notation}
\end{table}
}

\bibliographystyle{abbrv}
\bibliography{references}

\end{document}